
\documentclass{article}

\usepackage{microtype}
\usepackage{graphicx}
\usepackage{booktabs} 
\usepackage{paralist}
\usepackage{calligra}
\usepackage[T1]{fontenc}
\usepackage{enumitem}
\usepackage[font={small}]{caption}

\usepackage{sidecap}
\usepackage{wrapfig}

\usepackage{float}
\usepackage{amsmath}
\usepackage{amsthm}
\usepackage{amssymb}
\usepackage{graphicx}
\usepackage{bm}
\usepackage{bbm}

\usepackage{algorithm}
\usepackage{algorithmic}


\usepackage{hyperref}


\usepackage{caption}
\usepackage{subcaption}
\usepackage{color}
\definecolor{airforceblue}{rgb}{0.36, 0.54, 0.66}
\definecolor{cinnamon}{rgb}{0.82, 0.41, 0.12}

\definecolor{orange}{rgb}{1.0, 0.5, 0.0}


\newcommand{\myparagraph}[1]{\smallskip\noindent\textbf{#1}}

\newcommand{\E}{\mathbb{E}}

\newcommand{\R}{\mathbb{R}}
\newcommand{\Y}{\mathbb{Y}}
\newcommand{\iX}{\mathbb{X}}

\newcommand{\bx}{\bm{x}}
\newcommand{\by}{\bm{y}}
\newcommand{\bw}{\bm{w}}
\newcommand{\Wo}{\bw_{\scriptscriptstyle opt}}
\newcommand{\bX}{{\mathbf X}}
\newcommand{\Dd}{\mathcal{D}}
\newcommand{\pR}{1}
\newcommand{\rR}{2}
\newcommand{\prR}{i}
\newcommand{\hdD}{{h}}
\newcommand{\hD}{{h}}
\newcommand{\qhard}{\alpha}
\newcommand{\capB}{\beta}
\newcommand{\ahard}{a}
\newcommand{\qDel}{\Delta}
\newcommand{\psn}{\R_{\ge 0}}
\newcommand{\prn}{\R_{> 0}}
\newcommand{\tpcclustering}{TPC_{DC}}
\newcommand{\tpcrepresent}{TPC_{RP}}

\newcommand{\hlt}[1]{{\color{airforceblue} #1}}

\newcommand{\texq}[1]{\hspace{-1.5em}\mathrm{\hlt{#1}}}

\newcommand{\app}{App.}

\newtheorem{theorem}{Theorem}
\newtheorem{lemma}{Lemma}

\newtheorem{corollary}{Corollary}

\newtheorem{defn}{Definition}

\usepackage[accepted]{icml2022}

\icmltitlerunning{Active Learning on a Budget: Opposite Strategies Suit High and Low Budgets}

\begin{document}

\twocolumn[
\icmltitle{Active Learning on a Budget: Opposite Strategies Suit High and Low Budgets}



\icmlsetsymbol{equal}{*}

\begin{icmlauthorlist}
\icmlauthor{Guy Hacohen}{huji,elsc,equal}
\icmlauthor{Avihu Dekel}{huji,equal}
\icmlauthor{Daphna Weinshall}{huji}
\end{icmlauthorlist}

\icmlaffiliation{huji}{School of Computer Science and Engineering, The Hebrew
University of Jerusalem, Jerusalem, Israel}
\icmlaffiliation{elsc}{Edmond \& Lily Safra Center for Brain Sciences, The Hebrew University
of Jerusalem, Jerusalem, Israel}

\icmlcorrespondingauthor{Guy Hacohen}{guy.hacohen@mail.huji.ac.il}
\icmlcorrespondingauthor{Avihu Dekel}{avihu.dekel@mail.huji.ac.il}
\icmlcorrespondingauthor{Daphna Weinshall}{daphna@cs.huji.ac.il}

\icmlkeywords{Active Learning, Low budget, AL, Semi-supervised learning, SSL, initial pooling, TypiClust, ICML}

\vskip 0.3in
]
 



\printAffiliationsAndNotice{\icmlEqualContribution} 

\begin{abstract}

Investigating active learning, we focus on the relation between the number of labeled examples (budget size), and suitable querying strategies. Our theoretical analysis shows a behavior reminiscent of phase transition: typical examples are best queried when the budget is low, while unrepresentative examples are best queried when the budget is large. Combined evidence shows that a similar phenomenon occurs in common classification models. Accordingly, we propose \emph{TypiClust} -- a  deep active learning strategy suited for low budgets. In a comparative empirical investigation of supervised learning, using a variety of architectures and image datasets, \emph{TypiClust} outperforms all other active learning strategies in the low-budget regime. Using \emph{TypiClust} in the semi-supervised framework, performance gets an even more significant boost. In particular, state-of-the-art semi-supervised methods trained on CIFAR-10 with $10$ labeled examples selected by \emph{TypiClust}, reach $93.2\%$ accuracy -- an improvement of $39.4\%$ over random selection. Code is available at \href{https://github.com/avihu111/TypiClust}{https://github.com/avihu111/TypiClust}.

\end{abstract}

\section{Introduction}

Recent years have witnessed the emergence of deep learning as the dominant force in advancing machine learning and its applications. But this development is data-hungry -- to a large extent, deep learning owes its success to the growing availability of annotated data. This is problematic even in our era of \emph{Big Data}, as the annotation of data remains costly.


\begin{figure}[thb]

\begin{subfigure}{.235\textwidth}
  \centering
 \includegraphics[width=\linewidth,height=2.7cm]{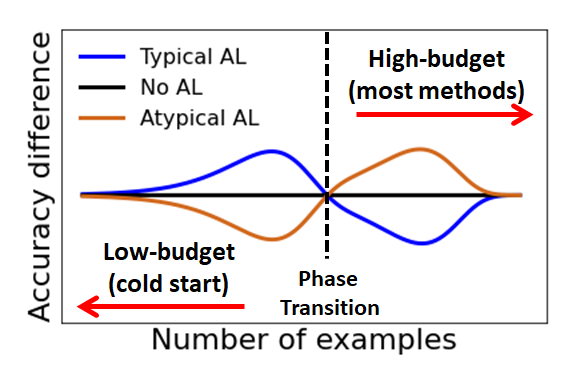}
\caption{Theoretical results}
\label{fig:example}
\end{subfigure}
\begin{subfigure}{.235\textwidth}
  \centering
 \includegraphics[width=\linewidth,height=2.7cm]{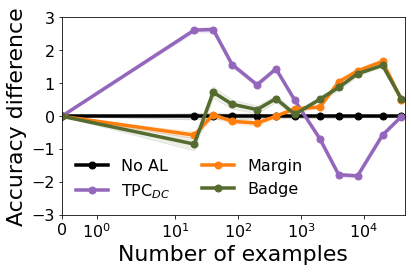}
\caption{Empirical results}
\label{fig:phase_transition_real}
\end{subfigure}

\vspace{-.2cm}
\caption{
Visualization of phase transition in deep active learning, as revealed by plotting the difference in accuracy between different AL strategies to a random baseline as a function of budget size (the number of labeled examples). We see similar behavior both theoretically and empirically: when the budget is low, oversampling typical examples is more beneficial, whereas when the budget is high, oversampling atypical examples is more beneficial. (a) The behavior of an idealized model (see Section~\ref{sec:theoretical_analysis}). (b) The behavior of \textit{TypiClust}, contrasted with 2 basic uncertainty-based strategies, as seen in deep neural models trained on CIFAR-10 (see Section~\ref{sec:emp}).} 
\vspace{-.25cm}
\end{figure}

Active Learning (AL)
aims to alleviate this problem \citep[see surveys in][] {settles.tr09,schroder2020survey}. Given a large pool of unlabeled data, and possibly a small set of labeled examples, learning is done iteratively: the learner employs the labeled examples (if any), then queries an oracle by submitting examples to be annotated. This may be done repeatedly, often until a fixed budget is exhausted.




Many traditional active learning approaches are based on uncertainty sampling \citep[e.g.,][] {lewis1994sequential,ranganathan2017deep,gissin2019discriminative,sinha2019variational}. In uncertainty sampling, the learner queries examples about which it is least certain, presumably because such labels contain the most information about the problem.
Another principle guiding deep AL approaches is diversity sampling
\citep[e.g.,][] {hu2010off,elhamifar2013convex,sener2018active,shui2020deep}.
Here, queries are chosen to minimize the correlation between samples, in order to avoid redundancy in the annotations.
Sensibly, diversity sampling is often combined with uncertainty sampling 
(see \app~\ref{app:related_work} for further discussion of related work).

At present, effective deep active learning strategies are known to require a large initial set of labeled examples to work properly \citep{DBLP:conf/emnlp/YuanLB20,pourahmadi2021simple}. We call this the \emph{high budget regime}. In the \emph{low budget regime}, where the initial labeled set is small or absent, it has been shown that random selection outperforms most deep AL strategies \citep[see][]{attenberg2010label,mittal2019parting,DBLP:journals/tkde/ZhuLHWGLC20,simeoni2021rethinking,chandra2021initial}. This ``cold start'' phenomenon is often explained by the poor ability of neural models to capture uncertainty, which is more severe with a small budget of labels \citep{nguyen2015deep,gal2016dropout}. The low-budget scenario is relevant in many applications, especially those requiring an expert tagger whose time is expensive. If we want to expand deep learning to these domains, overcoming the cold start problem becomes an important challenge.

In this paper, we suggest that the low and high budgets regimes are qualitatively different, and require \emph{opposite} querying strategies. Furthermore, we claim that the uncertainty principle is only suited for the high-budget regime, while the opposite strategy -- the selection of the least ambivalent points -- is suitable for the low-budget regime.

We begin, in Section~\ref{sec:theoretical_analysis}, by establishing the theoretical foundations for this claim. We analyze a mixture model where two general learners, each limited to a distinct region of the input space, are independently learned. In this framework, we see a phase-transition-like phenomenon: in the low-budget regime, over-sampling the ``easier'' region, which can be learned from fewer examples, improves the outcome of learning. In the high-budget regime, over-sampling the alternative region is more beneficial. In other words, opposing querying strategies are suitable for the low-budget and high-budget regimes. This is illustrated in Fig.~\ref{fig:example}.

We continue by identifying a set of sufficient conditions, which guarantee that two independent learners display this phase-transition-like phenomenon. We then give a formal argument, showing that linear classifiers satisfy these conditions. We further provide empirical evidence to the effect that neural models may also satisfy these conditions. 

\begin{figure}[b!]
\begin{center}
\vspace{-0.3cm}
\includegraphics[width=.48\textwidth]{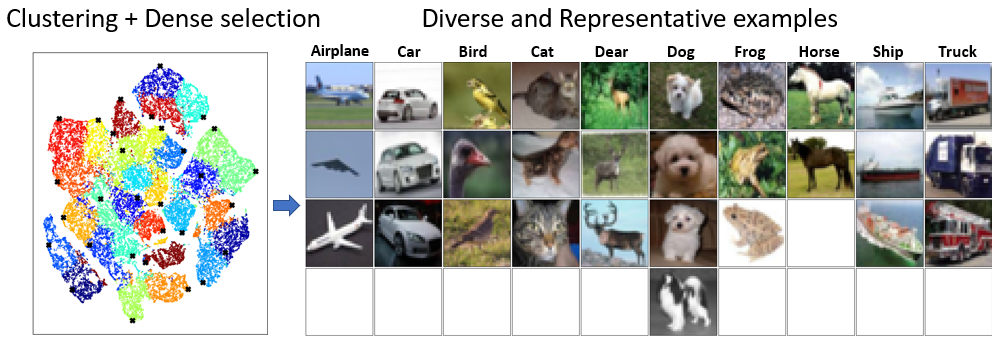} \\
\footnotesize{(a) \hspace{3.5cm} (b) \hspace{1.5cm}}
\vspace{-0.2cm}
\caption{Visualizing the selection of 30 examples from CIFAR-10 by \emph{TypiClust}. (a) The data is first clustered into 30 clusters, and the densest region within every cluster is sampled. We show t-SNE dimensionality reduction of the feature space, colored by cluster assignment, where selected examples are marked by $\times$. (b) The selected images, organized column-wise by class. Note that the ensuing labeled set is approximately class-balanced, even though the queries are chosen without access to class labels.}
\label{fig:cifar10_examples}
\end{center}
\end{figure}

Previous art established that in the high-budget regime, it is beneficial to preferentially sample uncertain examples. The phase transition result predicts that in the low-budget regime, a strategy that preferentially samples the most certain examples is beneficial. However, estimating prediction certainty is difficult, and cannot be reliably accomplished in the low-budget regime with access to very few labeled examples. We therefore adopt an alternative approach, replacing the notion of \emph{certainty} with the notion of \emph{typicality} (loosely defined): a point is typical if it lies in a high-density region of the input space, irrespective of labels. 

In Section~\ref{sec:active_learning_low_budget}, guided by these observations, we propose Typical Clustering (\emph{TypiClust}) -- a strategy for active learning in the low-budget regime. \emph{TypiClust} aims to pick a diverse set of typical examples, which are likely to be representative of the entire dataset. To this end, \emph{TypiClust} employs self-supervised representation learning and then estimates each point's density in this representation. Diversity is obtained by clustering the dataset and sampling the densest point from each cluster (see Fig.~\ref{fig:cifar10_examples}). 

In Section~\ref{sec:emp}, we compare \emph{TypiClust} to various AL strategies in the low-budget regime. \emph{TypiClust} consistently improves generalization by a large margin, across different datasets and architectures, reaching state-of-the-art (SOTA) results in many problems. In agreement with \citet{DBLP:journals/tkde/ZhuLHWGLC20}, we also observe that the alternative AL strategies are not effective in this domain, and are even detrimental.

\emph{TypiClust} is especially beneficial for semi-supervised learning. Although in the AL framework the learner has access to a big pool of unlabeled data by construction, most AL strategies do not exploit the unlabeled data for learning, beyond query selection. Recent studies report that the benefit of AL is marginal when incorporated into semi-supervised learning \citep{chan2021marginal,bengar2021reducing}, with little added value over the random selection of labels. Re-examining this observation, we note that semi-supervised learning is most beneficial in the low-budget regime, wherein the explored AL strategies are inherently not suitable. When incorporating \emph{TypiClust}, which is designed for the low-budget regime, into semi-supervised learning, performance is indeed improved by a large margin. 

\vspace{-.2cm}
\subsection*{Summary of Contribution}

\setlist{nolistsep}
\setlength\itemsep{30em}
\begin{enumerate}[leftmargin=0.5cm,label=\roman*.]

\setlength\itemsep{.3em}
    \item A novel theoretical model analyzing Active Learning (AL) as biased sampling strategies in a mixture model.
    \item Prediction of the cold start phenomenon in AL.
    \item Prediction that opposite strategies suit AL in the low-budget and high-budget regimes.
    \item Empirical support of these theoretical principles.
    \item \emph{TypiClust}, a novel strategy that significantly improves active learning in the low-budget regime.
    \item Large performance boost to SOTA semi-supervised methods by \emph{TypiClust}.

\end{enumerate}

\section{Theoretical Analysis}
\label{sec:theoretical_analysis}

Given a large pool of $U$ unlabeled examples and a possibly small (or even empty) set of $L$ labeled examples, an Active Learning (AL) method selects a small subset of $B$ examples from $U$ to submit as label queries to an oracle. We call the number of labeled examples known to the learner \emph{budget}, whose size is $m=B+L$. In this section, we aim to study the optimal query selection strategy as it depends on $m$.

To this end, we analyze a mixture model of two general learners. In \S\ref{sec:model}, the model is defined by first splitting the support of the data distribution into two distinct regions, $R_1$ and $R_2$, further assuming that each region is independently learned by its own general learner. $R_1$ and $R_2$ are distinguished by the property that if they are learned independently, $R_\pR$ is \emph{easier to learn} than $R_\rR$ (as formalized in Def.~\ref{def:ease} below). We then define the error score $E_\Dd(m)$ of this model, which measures the expected error of the model over all training samples of size $m$, as a function of $m$. 

Within this framework, in \S\ref{sec:biased} we derive a \textbf{threshold test} on the budget size $m$ and $E_\Dd(m)$, which determines whether an optimal AL strategy should oversample $R_1$  or $R_2$. In \S\ref{sec:error-fs} we obtain sufficient conditions on $E_\Dd(m)$, which guarantee phase transition as illustrated in Fig.~\ref{fig:example}. In accordance, an optimal AL strategy will oversample $R_1$ if $m$ is smaller than some threshold $m_0$, and oversample $R_2$ otherwise. We let the term \emph{low budget regime} denote budgets with $m\leq m_0$, and \emph{high budget regime} denote budgets with $m>m_0$. We may now conclude that for any learner model whose error score meets our sufficient conditions, opposite strategies suit the low and high budget regimes. 

In \S\ref{sec:common} we analytically prove that the error score of a mixture of two linear classifiers satisfies the required conditions, while in \app~\ref{sec:error-dnn} we empirically show that this is the case also with deep neural networks.

\subsection{Mixture Model Definition}
\label{sec:model}

We analyze a mixture of two learners model, where each learner is independently trained on a distinct part of the domain of the data distribution. Formally, let $\bX=[\bx,y]$ denote a single data point, where $\bx\in\R^d$ denotes an input point and $y\in\Y$ its corresponding label or target value. For example, $\Y$ is $\R$ in regression problems, and $[k]$ in classification problems. Each point $\bX$ is drawn from a distribution $\Dd$ with density $f_\Dd(\bX)$. We denote an i.i.d. sample of $m$ points from $\Dd$ as $\iX^m=\left\{\bX_1,...,\bX_m\right\}\sim\Dd^m$.

Let $R_\pR,R_\rR\subseteq\mathbb{R}^{d}\times\Y$ denote a partition of the domain of $f_\Dd$, where $R_\pR\cup R_\rR=\mathbb{R}^{d}\times\Y$ and $R_\pR\cap R_\rR=\emptyset$. 
Let $\Dd_\pR, \Dd_\rR$ denote the conditional distributions obtained when restricting $\Dd$ to regions $R_\pR, R_\rR$ respectively. Note that $\Dd$ can now be viewed as a mixture distribution, where points are sampled from $\Dd_\pR$ with probability  $p=\int_{\bX\in R_\pR}f_\Dd(\bX)d\bX$, and $\Dd_\rR$ with probability $(1-p)$.
Let $m_\prR,~\prR\in[2]$ denote the number of points in $R_\prR$ when sampling $m$ points from $\Dd$, and $\iX^{m_\prR}$ denote  the restriction of sample $\iX^m$ to $R_\prR$. We denote the hypothesis of a learner when trained independently on $\iX^{m_\prR}$ as $\hdD(\iX^{m_\prR})$.

Next, we define the error score of a learner, which is a function of $m$ -- the training set size. It measures the expected generalization error of the learner over all such training sets.
\begin{defn}[Error score] 
Assume training sample $\iX^m\sim\Dd^m$, with the corresponding learned hypothesis $\hdD(\iX^{m})$ -- a random variable whose distribution is denoted $\Dd_h$. Let $Er\left(h\left(\iX^{m}\right)\right)$ denote the expected generalization error of this hypothesis. The expected error of the learner, over all training sets of size $m$, is given by
\begin{equation*}
E_{\Dd}(m) = \E_{\iX^{m}\sim\Dd^m}\left[\E_{\hD(\iX^{m})\sim\Dd_h}\left[Er\left(h\left(\iX^{m}\right)\right)\right]\right].
\end{equation*}
\label{def:expected-error}
\vspace{-.5cm}
\end{defn} 

We adopt two common assumptions regarding $E_{\Dd}(m)$. \begin{inparaenum}[(i)] \item  \textbf{Efficiency}: $E_{\Dd}(m)$ is strictly monotonically decreasing, namely, on average the learner benefits from additional examples. \item \textbf{Realizability}: $\lim\limits_{m \rightarrow\infty}E_{\Dd}(m) =0$. \end{inparaenum}

During training, we assume a mixture of independent learners in $R_\pR,R_\rR$, and a training set composed of $m_\pR,m_\rR$ examples from each region respectively. The error score of the mixture learner on $\Dd$ for $m=m_\rR+m_\pR$ is:
\begin{equation}
\begin{split}
\label{eq:optimization}
    E_{\Dd}(m) = p\cdot E_{\Dd_\pR}(m_\pR) + (1-p)\cdot E_{\Dd_\rR}(m_\rR).
\end{split}
\end{equation}
As an important ingredient of the mixture model, we assume that one region requires fewer examples to be adequately learned, and call this region $R_\pR$. Essentially, we expect the error score to decrease faster at $R_\pR$, with $E_{\Dd_\pR}(m)<E_{\Dd_\rR}(m)~\forall m$. Other than this difference, we expect the error score to be similar in $R_\pR$ and $R_\rR$. 

Making this notion more precise, we define an order relation on partition $R_\pR, R_\rR$ as follows:
\begin{defn}[Order $R_\pR\prec R_\rR$] 
Let $R_\pR, R_\rR$ denote a partition of the domain of $f_\Dd$. Assume that the error score in $R_\pR$ and $R_\rR$ can be written as $E_{\Dd_\pR}(m) =E(m)$ and $E_{\Dd_\rR}(m)= E(\qhard m)$ for a single function $E(m)$ and $\qhard>0$. We say that $R_\pR$ is easier to learn than $R_\rR$ and denote {$R_\pR\prec R_\rR$} if $\qhard<\frac{p}{1-p}$, where $p$ is the probability of $R_\pR$.
\label{def:ease}
\end{defn} 
Note that $\qhard < 1$ if $p=0.5$. We now assume that $R_\pR\prec R_\rR$, and rewrite (\ref{eq:optimization}) as follows:
\begin{equation}
\begin{split}
\label{eq:Etotal}
E_\Dd(m)&= p \cdot E(m_\pR) + (1-p) \cdot E(\qhard m_\rR).
\end{split}
\end{equation}
Finally, we extend $E(m)$ to domain $\psn$ with the continuation of $E(m)$ denoted $E(x):\psn\rightarrow\psn$, which is in $C^\infty$ 
and positive. \textbf{Efficiency} is extended to imply $E'(x)<0$. 
\subsection{Deriving the Optimal Sampling Strategy}
\label{sec:biased}

Considering the extended error score $E(x):\psn\rightarrow\psn$, we define a biased sampling strategy as follows:
\begin{equation*}
\label{eq:m}
m_\pR =p\cdot m+\qDel, \qquad m_\rR =\left(1-p\right)\cdot m-\qDel.
\end{equation*}

$\qDel=0$ is essentially equivalent to random sampling from $\Dd$.
$\qDel>0$ implies that more training points are sampled from $R_\pR$ than $R_\rR$, and vice versa.

In Thm.~\ref{thm:threshold_test} we show that to minimize the expected generalization error while considering a mixture model of two independent learners as defined above, choosing between the different sampling strategies can be done using a simple threshold test. 

\begin{theorem}
\label{thm:threshold_test}
Given partition $R_\pR\prec R_\rR$ and error score $E(x)$, let $p=Prob(R_\pR)$ and $0<\qhard<\frac{p}{1-p}$. The following threshold test decreases the error score for sample size $m$:
\begin{equation*}
\boxed{
\frac{E'\left(pm\right)}{E'\left(\qhard\left(1-p\right)m\right)} 
\begin{cases} > \frac{\qhard\left(1-p\right)}{p}&\implies \begin{array}{cc}&\texq{over~sample}\vspace{-.15cm}\\  &\texq{region~R_\pR} \end{array} \\
< \frac{\qhard\left(1-p\right)}{p}&\implies \begin{array}{cc}&\texq{over~sample}\vspace{-.15cm}\\  &\texq{region~R_\rR} \end{array}
\end{cases}
}
\end{equation*}
\end{theorem}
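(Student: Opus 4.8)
The plan is to treat the biased sampling strategy as a one-parameter perturbation of random sampling and to examine the first-order effect of $\qDel$ on the error score. Substituting $m_\pR = pm+\qDel$ and $m_\rR = (1-p)m-\qDel$ into (\ref{eq:Etotal}), I would define
$$g(\qDel) = p\cdot E(pm+\qDel) + (1-p)\cdot E\bigl(\qhard((1-p)m-\qDel)\bigr).$$
For $m>0$ and $\qDel$ in a neighborhood of $0$, both arguments remain in $\psn$, so the $C^\infty$ extension of $E$ guarantees $g$ is differentiable there, and minimizing the error amounts to moving $\qDel$ in the direction that decreases $g$.

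First I would differentiate $g$ by the chain rule,
$$g'(\qDel) = p\cdot E'(pm+\qDel) - \qhard(1-p)\cdot E'\bigl(\qhard((1-p)m-\qDel)\bigr),$$
and evaluate at $\qDel=0$, which corresponds to random sampling:
$$g'(0) = p\cdot E'(pm) - \qhard(1-p)\cdot E'\bigl(\qhard(1-p)m\bigr).$$
Since $\qDel>0$ means oversampling $R_\pR$ and $\qDel<0$ means oversampling $R_\rR$, the sign of $g'(0)$ selects the beneficial direction: if $g'(0)<0$ a small positive $\qDel$ lowers the error, and if $g'(0)>0$ a small negative $\qDel$ lowers it.

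The final step is to rewrite this sign condition as the stated ratio. By \textbf{Efficiency} we have $E'(x)<0$ for all $x$, so in particular $E'(\qhard(1-p)m)<0$. The inequality $g'(0)<0$ reads $p\,E'(pm) < \qhard(1-p)\,E'(\qhard(1-p)m)$; dividing through by the negative quantity $E'(\qhard(1-p)m)$ reverses the inequality and, after dividing by $p>0$, yields $\frac{E'(pm)}{E'(\qhard(1-p)m)} > \frac{\qhard(1-p)}{p}$, which is exactly the ``oversample $R_\pR$'' branch. The reverse case, $g'(0)>0$, gives the ``oversample $R_\rR$'' branch by the identical manipulation.

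The computation is elementary, so the only point requiring genuine care — and the step most prone to error — is the sign bookkeeping: because $E'$ is negative, dividing by it flips the inequality, and this must be matched correctly against the convention that increasing $\qDel$ oversamples the easier region $R_\pR$. I would also note that the order hypothesis $\qhard<\frac{p}{1-p}$ is not itself used in deriving the threshold test; it is the ingredient that later (in \S\ref{sec:error-fs}) forces the test to switch branches as $m$ crosses a threshold, producing the phase transition.
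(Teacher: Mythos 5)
Your proposal is correct and is essentially the paper's own argument: the paper writes the inequality $g(\qDel)<g(0)$ for $\qDel>0$, rearranges it into difference quotients, and passes to the infinitesimal-$\qDel$ limit, which is exactly your computation of $g'(0)$ by the chain rule followed by the same sign-flipping division by the negative quantity $E'\left(\qhard(1-p)m\right)$. Your closing remark is also accurate — the paper's proof never invokes $\qhard<\frac{p}{1-p}$ either; that hypothesis only matters for the undulation results later on.
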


\begin{proof}
Starting from (\ref{eq:Etotal}), we obtain the test whereby the error score $E_\Dd(m)$ decreases when $\qDel>0$
\begin{equation*}
\begin{split}
&p\cdot E\left(p\cdot m+\qDel\right)+\left(1-p\right)\cdot E\left(\qhard\left(\left(1-p\right)m-\qDel\right)\right)  \\
&\quad<p\cdot E\left(p\cdot m\right)+\left(1-p\right)\cdot E\left(\qhard\left(1-p\right)m\right)\\
\implies  & \left(1-p\right)\left[E\left(\qhard\left(1-p\right)m\right)-E\left(\qhard\left(\left(1-p\right)m-\qDel\right)\right)\right]\\
&\quad>p\left[E\left(p\cdot m+\qDel\right)-E\left(p\cdot m\right)\right]\\
\implies  &\qhard\left(1-p\right)\frac{E\left(\qhard\left(1-p\right)m\right)-E\left(\qhard\left(1-p\right)m-\qhard\qDel \right)}{\qhard \qDel}\\
&\quad> p\frac{E\left(p\cdot m+\qDel\right)-E\left(p\cdot m\right)}{\qDel}.\\
\end{split}
\end{equation*}
Since $E(x)$ is differentiable and strictly monotonically decreasing with $E'<0$, in the limit of infinitesimal $\qDel$ 
\begin{equation*}
\frac{E'\left(pm\right)}{E'\left(\qhard\left(1-p\right)m\right)}>\frac{\qhard\left(1-p\right)}{p}.
\end{equation*}
The proof for $\qDel<0$ is similar.
\end{proof}

\myparagraph{Example: exponentially decreasing function.}
Assume $E\left(m\right)=e^{-m}$, and a mixture model with $p=0.8,\qhard=0.1$. We simulate the error in (\ref{eq:Etotal}) when biasing the train sample with $\qDel=\pm 0.01$. 
Fig.~\ref{fig:example} shows the differences between the error score of biased sampling (in favor of either $R_\pR$ in blue or $R_\rR$ in orange) and random sampling, as a function of the number of examples $m$. For small $m$ it is beneficial to favorably bias region $R_\pR$, while for large $m$ it is beneficial to favorably bias $R_\rR$. This is the behavior often seen in our empirical investigation, see Fig.~\ref{fig:phase_transition_real} and Section~\ref{sec:emp}. 

\subsection{Error Scores Analyzed}
\label{sec:error-fs}

We now report sufficient conditions on $E(m)$, which guarantee the phase-transition-like behavior illustrated in Fig.~\ref{fig:example}, starting with some formal definitions.

Given partition $R_\pR\prec R_\rR$ and Error score $E(x)$, we say that $E(x)$ is undulating if it displays the following behavior: in the beginning, when the number of training examples is small, the generalization error decreases faster when over-sampling region $R_\pR$. In the end, after seeing sufficiently many training examples, the generalization error decreases faster when  over-sampling region $R_\rR$. Formally: 
\begin{defn}[Undulating] An error score $E(m)$ is undulating if there exist $z_1,z_2\in\R$ such that  $\frac{E'\left(pm\right)}{E'\left(\qhard\left(1-p\right)m\right)}>\frac{\qhard\left(1-p\right)}{p}~~\forall m< z_1$, and $\frac{E'\left(pm\right)}{E'\left(\qhard\left(1-p\right)m\right)}<\frac{\qhard\left(1-p\right)}{p}~~\forall m> z_2$.
\label{def:1}
\end{defn}
For undulating error scores, there could potentially be any number of transitions between the two conditions, switching the preference of $R_\pR$ to $R_\rR$, and vice versa. We extend the above definition to capture a case of particular interest, where this transition occurs only once, as follows: 
\begin{defn}[SP-undulating] An error score $E(m)$ is Single-Phase undulating if it is undulating, and $z_1=z_2$.
\label{def:2}
\end{defn} 

\subsubsection{Undulating Error Scores}
As motivated in Section~\ref{sec:model}, we define a proper error score as follows:
\begin{defn}[Proper error score] $E(x):\psn\rightarrow\psn$ is a proper error score if it is a positive twice differentiable function, which is strictly monotonically decreasing ($E>0$, $E'<0$), $E(0)=c_0\in\prn$, and where $\lim\limits_{x\rightarrow\infty}E(x)=0$.
\label{def:proper}
\end{defn} 

Not all proper error scores will exhibit the phase undulating behavior. In Thm.~\ref{thm:suffiecnt_coniditions_undulating} we state sufficient conditions that ensure this behavior (see proof in \app~\ref{app:proof_for_thm_2}).
\begin{theorem}[Undulating error score: sufficient conditions]
\label{thm:suffiecnt_coniditions_undulating}
Given partition $R_\pR\prec R_\rR$ and Error score $E(x)$, let $p=Prob(R_\pR)$ and $0<\qhard<\frac{p}{1-p}$. 
$E(x)$ is undulating if the following assumptions hold:
\begin{enumerate}[label=(\roman*)]
\item $E(x)$ is a proper \emph{error score} (see Def.~\ref{def:proper}).
\label{T2a1}
\item 
\label{T2a2}
$\lim\limits_{x\to\infty}\frac{E'(x)}{E(x)},\lim\limits_{x\rightarrow\infty}\frac{E(x)}{E(\ahard x)},\lim\limits_{x\rightarrow\infty}\frac{E'(x)}{E'(\ahard x)}$ exist $\forall\ahard\!\in\! (0,1)$.
\item 
\label{T2a3}
$-\log(E(x))\in\omega\left(\log(x)\right)$. 
\end{enumerate}
\end{theorem}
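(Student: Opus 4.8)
The plan is to verify the two one-sided inequalities in Definition~\ref{def:1} separately: the small-budget side by a direct continuity argument at $m=0$, and the large-budget side by computing the limit of the ratio $E'(pm)/E'(\qhard(1-p)m)$ as $m\to\infty$. Throughout I would write $T=\qhard(1-p)/p$ for the threshold value. Since we are given $0<\qhard<\frac{p}{1-p}$, we have $\qhard(1-p)<p$ and hence $0<T<1$, a fact used on both sides. Note also the useful rescaling $\qhard(1-p)m=Tx$ under the substitution $x=pm$, which turns the test ratio into $E'(x)/E'(Tx)$.

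For the small-budget side, set $g(m)=\frac{E'(pm)}{E'(\qhard(1-p)m)}$. Because $E$ is a proper error score (Def.~\ref{def:proper}), $E'$ is continuous and strictly negative on $\psn$, so $g$ is continuous and $g(0)=E'(0)/E'(0)=1$. As $1>T$, continuity immediately provides a $z_1>0$ with $g(m)>T$ for all $m<z_1$, which is precisely the first clause of the undulating definition. This side is routine.

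The large-budget side is the crux. After the substitution above it suffices to show $\psi(T):=\lim_{x\to\infty}\frac{E'(x)}{E'(Tx)}<T$, a limit that exists by assumption~\ref{T2a2}. I would first relate $\psi$ to $\phi(a):=\lim_{x\to\infty}E(x)/E(ax)$ through L'H\^opital's rule: as $x\to\infty$ both $E(x)\to0$ and $E(Tx)\to0$ (realizability), the denominator's derivative $\tfrac{d}{dx}E(Tx)=TE'(Tx)$ never vanishes, and $\frac{E'(x)}{TE'(Tx)}\to\psi(T)/T$ exists by~\ref{T2a2}; hence $\phi(T)=\psi(T)/T$, i.e.\ $\psi(T)=T\,\phi(T)$. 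It therefore remains to prove $\phi(T)=0$, for then $\psi(T)=0<T$, and the test ratio lies below the threshold for all $m$ beyond some $z_2$, giving the second clause.

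To show $\phi(T)=0$ I would invoke assumption~\ref{T2a3}. Put $h(x)=-\log E(x)$, which is continuous, increasing to $+\infty$, and satisfies $h(x)/\log x\to\infty$. Observe that $h(x)-h(Tx)=\log\frac{E(Tx)}{E(x)}\to-\log\phi(T)=:\ell\in[0,\infty]$, a limit that exists because $\phi(T)$ does. Evaluating along the sequence $x_n=T^{-n}$, and using $T^{-(n-1)}=Tx_n$, gives $h(x_n)-h(x_{n-1})\to\ell$, so by a Ces\`aro/Stolz argument $h(x_n)/n\to\ell$; since $\log x_n=n|\log T|$, this yields $h(x_n)/\log x_n\to\ell/|\log T|$, which~\ref{T2a3} forces to equal $+\infty$. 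Hence $\ell=\infty$ and $\phi(T)=e^{-\ell}=0$, so $\psi(T)=T\phi(T)=0<T$, completing the proof that $E$ is undulating. The delicate points are exactly the two limit arguments: checking that the $0/0$ form at infinity legitimizes the L'H\^opital step (the nonvanishing $g'$ and existence of the derivative-ratio limit are what make it valid), and making the Ces\`aro step rigorous in the boundary case $\ell=\infty$; both are straightforward once assumptions~\ref{T2a2} and~\ref{T2a3} are in hand.
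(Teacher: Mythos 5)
Your proof is correct, and it shares the paper's overall skeleton---continuity of the ratio at $m=0$ gives the low-budget clause, and for the high-budget clause one shows $\lim_{x\to\infty}E'(x)/E'(Tx)=0<T$ with $T=\qhard(1-p)/p$, using L'H\^opital's rule (licensed by assumption (ii)) to pass between the ratio of values and the ratio of derivatives---but your mechanism for extracting $\lim_{x\to\infty}E(x)/E(Tx)=0$ from assumption (iii) is genuinely different. The paper works at the level of the derivative of $g(x)=-\log E(x)$: its Lemma~\ref{lemma:bound_result_in_omega} applies L'H\^opital to $g(x)/\log x$ to conclude $xg'(x)\to\infty$ (this is where the first limit $\lim E'/E$ of assumption (ii) is consumed, since $g'=-E'/E$), and its Lemma~\ref{lemma:right_equal_0_region} then invokes the mean value theorem to bound $g(x)-g(Tx)$ from below and force $E(x)/E(Tx)\to0$. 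You never differentiate $\log E$: you observe that $h(x)-h(Tx)$ has an extended-real limit $\ell$ because $\phi(T)$ exists by assumption (ii), telescope along the geometric sequence $x_n=T^{-n}$, and identify $\ell/\lvert\log T\rvert$ with $\lim h(x_n)/\log x_n$ via Stolz--Ces\`aro, which assumption (iii) forces to be $+\infty$; existence of the limit of $h(x)-h(Tx)$ then upgrades this subsequential conclusion to the full limit $\phi(T)=0$. Your route buys two things: it is more elementary (no mean value theorem, no manipulation of $g'$), and it sidesteps the delicate point in the paper's Lemma~\ref{lemma:bound_result_in_omega}, where L'H\^opital is invoked in the converse direction (inferring $\lim xg'(x)=\infty$ from $\lim g(x)/\log x=\infty$), an inference that is legitimate only once one knows $\lim xg'(x)$ exists in the extended sense; relatedly, your argument never uses the limit $\lim E'(x)/E(x)$ of assumption (ii) at all, so it establishes the theorem under slightly weaker hypotheses. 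What the paper's route buys in exchange is the quantitative intermediate statement $g'(x)\in\omega(1/x)$, a reusable standalone lemma giving pointwise control on the decay rate, whereas your argument only certifies the behavior of ratios at the fixed scale $T$.
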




\begin{corollary}[Exponential error as a bound]
\label{cor:exp_undulating}
\emph{Error score} $E(x)$ is undulating if it satisfies assumptions \ref{T2a1} and \ref{T2a2} of Thm.~\ref{thm:suffiecnt_coniditions_undulating}, and is bounded from above as follows
\begin{equation}
\label{eq:bound}
E(x) \leq k e^{-\nu x} \qquad\forall x\in\psn,
\end{equation}
for some constants $\nu,k\in\prn$.
\end{corollary}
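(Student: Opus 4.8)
The plan is to deduce Corollary~\ref{cor:exp_undulating} directly from Theorem~\ref{thm:suffiecnt_coniditions_undulating} by showing that the exponential upper bound (\ref{eq:bound}) forces assumption \ref{T2a3} to hold. Since the corollary already assumes \ref{T2a1} and \ref{T2a2}, it suffices to verify that $-\log(E(x))\in\omega(\log(x))$; once this is established, all three hypotheses of Theorem~\ref{thm:suffiecnt_coniditions_undulating} are met and the undulating property follows immediately. So the whole corollary reduces to checking a single growth condition.

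To establish \ref{T2a3}, I would take logarithms of the bound. Because $E(x)$ is a proper error score (assumption \ref{T2a1}, Def.~\ref{def:proper}), it is strictly positive on $\psn$, so $\log E(x)$ is well defined. Applying $\log$ to (\ref{eq:bound}) gives $\log E(x)\le\log k-\nu x$, hence $-\log E(x)\ge\nu x-\log k$, where by hypothesis $\nu,k\in\prn$.

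The remaining step is a routine limit computation. Dividing by $\log x$ and letting $x\to\infty$,
\begin{equation*}
\lim_{x\to\infty}\frac{-\log E(x)}{\log x}\ \ge\ \lim_{x\to\infty}\frac{\nu x-\log k}{\log x}\ =\ +\infty ,
\end{equation*}
since $\nu>0$ and $x/\log x\to\infty$. This is exactly the statement $-\log(E(x))\in\omega(\log(x))$, i.e.\ assumption \ref{T2a3}, so Theorem~\ref{thm:suffiecnt_coniditions_undulating} applies and $E(x)$ is undulating.

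I do not anticipate a genuine obstacle: the argument is essentially a one-line reduction to Theorem~\ref{thm:suffiecnt_coniditions_undulating}. The only points requiring mild care are ensuring the logarithm is legitimate, which is guaranteed by positivity of the proper error score, and confirming that the dominant term $\nu x$ indeed diverges, which follows from $\nu\in\prn$. Conceptually, the corollary simply records that an error score decaying at least exponentially automatically satisfies the super-logarithmic growth of $-\log E$ demanded by the theorem, and hence inherits the phase-transition behavior.
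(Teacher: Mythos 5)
Your proposal is correct and follows exactly the same route as the paper: the paper's proof is the one-line remark that assumption \ref{T2a3} of Thm.~\ref{thm:suffiecnt_coniditions_undulating} ``can be readily verified'' from the bound (\ref{eq:bound}), and your argument simply spells out that verification (taking logarithms, using positivity of the proper error score, and the limit $\nu x/\log x\to\infty$). Nothing is missing; you have just made explicit what the paper leaves to the reader.
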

\begin{proof}
It can be readily verified that assumption \ref{T2a3} of Thm.~\ref{thm:suffiecnt_coniditions_undulating} follows from (\ref{eq:bound}).
\end{proof}

\subsubsection{SP-Undulating Error Scores}
\label{sec:SP_undulating}
Thm.~\ref{thm:suffiecnt_coniditions_sp_undulating} extends the results of the previous section, by stating a set of sufficient conditions that ensure an SP-undulating error score.
The proof can be found in \app~\ref{app:proof_thm_3}.

\begin{theorem}[SP-undulating: sufficient conditions]
\label{thm:suffiecnt_coniditions_sp_undulating}
Given partition $R_\pR\prec R_\rR$ and Error score $E(x)$, let $p=Prob(R_\pR)$ and $0<\qhard<\frac{p}{1-p}$.  $E(x)$ is SP-undulating if the following assumptions hold:
\begin{enumerate}
\item $E(x)$ is an undulating \emph{error score}.
\item At least one of the following conditions holds:
\begin{enumerate}
\item 
$\frac{-E''\left(x\right)\cdot x}{E'\left(x\right)}$ is
monotonically increasing with $x$.
\item 
$-E'(x)$ is strictly monotonically decreasing and log-concave. 
\end{enumerate}

\end{enumerate}
\end{theorem}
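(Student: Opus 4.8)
The plan is to reduce the statement to the monotonicity of a single ratio function, and then check that each branch of assumption~2 forces that monotonicity. Recall from Theorem~\ref{thm:threshold_test} that the preference between $R_\pR$ and $R_\rR$ is governed by comparing the ratio $G(m):=\frac{E'(pm)}{E'(\qhard(1-p)m)}$ against the constant threshold $T:=\frac{\qhard(1-p)}{p}$, and that being undulating (Def.~\ref{def:1}) says precisely that $G(m)>T$ for all sufficiently small $m$ and $G(m)<T$ for all sufficiently large $m$. Since SP-undulating (Def.~\ref{def:2}) asks only for a single transition point $z_1=z_2$, it suffices to prove that $G$ is monotonically decreasing: a decreasing $G$ that starts above $T$ and ends below $T$ crosses the level $T$ exactly once, and that crossing can be taken as $z_1=z_2$.

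First I would pass to logarithms. Because $E'<0$, both arguments of $G$ are negative, so $\log G(m)=h(pm)-h(\qhard(1-p)m)$ with $h(x):=\log(-E'(x))$. Introducing $s:=pm$ and the rescaling constant $r:=\qhard(1-p)/p$ (which numerically coincides with the threshold $T$), the hypothesis $0<\qhard<\frac{p}{1-p}$ gives $r\in(0,1)$ and $\qhard(1-p)m=rs$, so $\log G=h(s)-h(rs)$ as a function of $s\in\psn$. Differentiating and writing $q(x):=x\,h'(x)=\frac{x\,E''(x)}{E'(x)}$, the central computation is the identity $\frac{d}{ds}\bigl[h(s)-h(rs)\bigr]=h'(s)-r\,h'(rs)=\frac{q(s)-q(rs)}{s}$, obtained by substituting $h'(s)=q(s)/s$ and $r\,h'(rs)=q(rs)/s$. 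This reduces the whole theorem to controlling the sign of $q(s)-q(rs)$ under $rs<s$.

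With this identity the two branches become short. In case (a), the quantity $\frac{-E''(x)\,x}{E'(x)}$ appearing in the assumption is exactly $-q(x)$, so its being monotonically increasing is equivalent to $q$ being decreasing; since $rs<s$ this yields $q(s)-q(rs)\le 0$, hence $\log G$ and therefore $G$ is non-increasing. In case (b), the assumption that $-E'$ is strictly decreasing means $E''>0$, whence $h'(x)=E''(x)/E'(x)<0$, while log-concavity of $-E'$ means $h$ is concave, i.e.\ $h'$ is non-increasing; since $s\ge rs$ this gives $h'(s)-r\,h'(rs)\le (1-r)\,h'(rs)<0$, so $G$ is strictly decreasing. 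In either case $G$ is monotonically decreasing, which combined with the undulating property pins the transition to a single point, establishing that $E$ is SP-undulating.

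The main obstacle I anticipate is twofold. The first is algebraic: discovering the representation $\frac{q(s)-q(rs)}{s}$ is the crux, since without the substitution $s=pm$ together with the rescaling $q(x)=x\,h'(x)$, the raw derivative $p\,h'(pm)-\qhard(1-p)\,h'(\qhard(1-p)m)$ has no obviously fixed sign, and it is precisely this identity that lets the two monotonicity hypotheses act directly. The second is the gap between weak and strict monotonicity: case (a) only delivers a non-increasing $G$, so to rule out a flat interval sitting exactly at the level $T$ (which would break $z_1=z_2$) I would lean on the undulating assumption to exclude $G\equiv T$ on an interval, or sharpen the inequality by noting $q$ cannot be locally constant at the transition.
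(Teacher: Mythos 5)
Your proposal is correct and follows essentially the same route as the paper: the paper's proof likewise reduces SP-undulation to the monotone decrease of the ratio $H(x)=E'(px)/E'(\qhard(1-p)x)$, and your log-derivative identity involving $q(x)=x\,E''(x)/E'(x)$ is exactly the paper's Lemma~\ref{thm:1_crossing_point} written in logarithmic form, while your direct concavity argument for case (b) plays the role of the paper's Lemma~\ref{thm:log-concave_is_monotinic} (which instead reduces case (b) to case (a)). The weak-versus-strict monotonicity gap you flag in case (a) is present in the paper's proof as well, which simply treats assumption 2(a) as strict monotonicity.
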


\begin{corollary}[Exponential error is SP-undulating]
\label{cor:exp}
Consider \emph{error scores} of the form $E(x) = k e^{-\nu x}$
for constants $\nu,k\in\prn$. Such functions are SP-undulating. 
\end{corollary}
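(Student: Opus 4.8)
The plan is to verify the two hypotheses of Thm.~\ref{thm:suffiecnt_coniditions_sp_undulating} for $E(x) = k e^{-\nu x}$, after which SP-undulating follows immediately. The first hypothesis requires $E$ to be undulating, which I would establish by appealing to Cor.~\ref{cor:exp_undulating} rather than to Thm.~\ref{thm:suffiecnt_coniditions_undulating} directly, since the exponential trivially satisfies the bound \eqref{eq:bound} (with equality, taking the same $k,\nu$). This reduces the undulating claim to checking only assumptions \ref{T2a1} and \ref{T2a2} of Thm.~\ref{thm:suffiecnt_coniditions_undulating}.

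For assumption \ref{T2a1}, I would observe that $E(x)=ke^{-\nu x}$ is positive and $C^\infty$ (hence twice differentiable), that $E'(x) = -\nu k e^{-\nu x} < 0$ so it is strictly monotonically decreasing, that $E(0) = k \in \prn$, and that $\lim_{x\to\infty} E(x) = 0$; thus $E$ is a proper error score. For assumption \ref{T2a2}, I would compute the three limits explicitly: $E'(x)/E(x) = -\nu$ is constant; $E(x)/E(a x) = e^{-\nu(1-a)x}$; and $E'(x)/E'(a x) = e^{-\nu(1-a)x}$. Since $a\in(0,1)$ gives $1-a>0$, the latter two tend to $0$, so all three limits exist. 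Cor.~\ref{cor:exp_undulating} then yields that $E$ is undulating, discharging the first hypothesis of Thm.~\ref{thm:suffiecnt_coniditions_sp_undulating}.

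For the second hypothesis it suffices to verify either alternative, and I would use condition (a). Computing $E''(x) = \nu^2 k e^{-\nu x}$ gives
\[
\frac{-E''(x)\,x}{E'(x)} = \frac{-\nu^2 k e^{-\nu x}\, x}{-\nu k e^{-\nu x}} = \nu x,
\]
which is strictly increasing in $x$ since $\nu > 0$, hence a fortiori monotonically increasing. (Condition (b) holds just as easily: $-E'(x) = \nu k e^{-\nu x}$ is strictly decreasing, and $\log(-E'(x)) = \log(\nu k) - \nu x$ is affine, hence concave.) Both hypotheses of Thm.~\ref{thm:suffiecnt_coniditions_sp_undulating} are met, so $E$ is SP-undulating.

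Since every step is an elementary substitution into results already established, there is no genuine obstacle here; the only points requiring a moment's care are confirming that the limits in assumption \ref{T2a2} truly exist for every $a \in (0,1)$ (the $a=1$ edge case is excluded, so no cancellation pathology arises) and that the auxiliary ratio $-E''(x)\,x/E'(x)$ collapses to a manifestly increasing function. The purpose of stating the corollary is chiefly to exhibit the canonical exponential error as a clean, fully worked instance of the general sufficient-condition theorem, and to justify the simulation in the worked example following Thm.~\ref{thm:threshold_test}.
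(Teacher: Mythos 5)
Your proposal is correct and follows exactly the route the paper intends: the corollary is stated without proof precisely because it amounts to the substitutions you carry out, namely verifying the hypotheses of Thm.~\ref{thm:suffiecnt_coniditions_sp_undulating} by getting the undulating property from Cor.~\ref{cor:exp_undulating} (the exponential satisfies the bound \eqref{eq:bound} with equality, and assumptions \ref{T2a1}--\ref{T2a2} reduce to the elementary limits you compute) and then checking condition (a) via $-E''(x)x/E'(x)=\nu x$, which is strictly increasing. Your additional observation that condition (b) also holds, since $-E'(x)=\nu k e^{-\nu x}$ is strictly decreasing and log-affine, is a correct bonus consistent with the paper's framework.
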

Cor.~\ref{cor:exp} shows that classifiers with an exponentially decreasing \emph{error score} are SP-undulating, with a single transition from favoring $R_\pR$ to favoring $R_\rR$.


In practice, we cannot assume that the \emph{error score} of commonly used learners is exponentially decreasing. However, 
frequently we can bound the \emph{error score} from above by an exponentially decreasing function, as we demonstrate theoretically (see Section~\ref{sec:piecewise_linear_separator}) and empirically (see Fig.~\ref{fig:error_scores_neural_networks} in \app~\ref{sec:error-dnn}). In such cases, it follows from Cor.~\ref{cor:exp_undulating} that these functions are undulating.  


\subsection{Simple Classification Models}
\label{sec:common}

To make the analysis above more concrete, we analyze a mixture model of two linear classifiers in Section~\ref{sec:piecewise_linear_separator}, as an example of an actual undulating model in common use. Additionally, we analyze the nearest neighbors classification model in Section~\ref{sec:high_density}, to shed light on the rationale behind the partition of the support to regions $R_\pR$ and $R_\rR$. This case analysis demonstrates specific circumstances that make it possible to learn from fewer examples in certain regions.

\subsubsection{Mixture of Two Linear Classifiers}
\label{sec:piecewise_linear_separator}

Consider a binary classification problem and assume a learner that delivers a mixture of two linear classifiers in $\R^d$. The two classifiers are obtained by independently minimizing the $L_2$ loss on points in $R_\pR$ and $R_\rR$ respectively. 

\myparagraph{(i) Bounding the error of each mixture component.}
We first derive a bound on the error separately for $R_\pR$ and $R_\rR$, as it depends on the sample size $m_j$ for $j\in[2]$. Let $X\in\R^{d\times m_j}$ denote the matrix whose columns are the training vectors that lie in region $R_j$. Let $\by\in\{-1,1\}^{m_j}$ denote a row labels vector, where 1 marks positive examples and $-1$ negative examples. The learner seeks a separating row vector $\hat\bw\in\R^d$, where
\begin{equation*}
\hat {\bw} = \mathrm{arg}\min_{{\bw}\in\R^d} \left\|{\bw}X-{\by}\right\|^2 ~~ \implies \hat {\bw} = {\by}X^\top (XX^\top)^{-1}.
\end{equation*}

In Thm.~\ref{thm:error_bound_pw_linear}, we bound the error of a linear model by some exponential function of the number of training examples $m_j$. The proof and further details can be found in \app~\ref{app:proof_thm_4}.

\begin{theorem}[Error bound on a linear classifier]
\label{thm:error_bound_pw_linear}
Assume: \begin{inparaenum}[(i)] \item a bounded sample $\|\bx_i\|\le \beta$, where $XX^\top$ is sufficiently far from singular so that its smallest eigenvalue is bounded from below by $\frac{1}{\Lambda}$; \item a realizable binary problem where the classes are separable by margin $\delta$; \item full rank data covariance, where $\frac{1}{\lambda}$ denotes its smallest singular value. \end{inparaenum} Then there exist some positive constants $k, \nu > 0$, such that $\forall m_j\in\mathbb{N}$ and every sample $\iX^{m_j}$, the expected error of $\hat {\bw}$ obtained using $\iX^{m_j}$ is bounded by:
\begin{equation*}
\E_{\bX\sim\Dd_j}[\mathrm{0-1~loss~of~}\hat {\bw}] \leq ke^{-\nu {m_j}}.
\end{equation*}
\end{theorem}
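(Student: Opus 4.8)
The plan is to compare the finite-sample least-squares solution $\hat\bw$ to its population counterpart and to exploit the margin to convert a standard concentration statement into an exponential error bound. First I would use realizability (assumption (ii)) to fix a unit-norm separator $\bw^*$ with $y(\bw^*\cdot\bx)\ge\delta$ on the whole support of $\Dd_j$, so that every point, train or test, lies at distance at least $\delta$ from the separating hyperplane and inside the ball of radius $\beta$. The quantity to bound is the mass that $\Dd_j$ assigns to the set of $\bx$ on which $\operatorname{sign}(\hat\bw\cdot\bx)$ disagrees with the true label; geometrically this set is the wedge between the hyperplane of $\hat\bw$ and the true boundary.

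Next I would introduce the population least-squares direction $\bw^\infty=\Sigma_{xx}^{-1}\,\E[y\bx]$, well defined because assumption (iii) makes the second-moment matrix invertible with smallest singular value $1/\lambda$, and show that the empirical solution is a consistent estimator of it. Writing $\hat\bw=\by X^\top(XX^\top)^{-1}$ in terms of the empirical moments $\tfrac1{m_j}XX^\top$ and $\tfrac1{m_j}X\by^\top$, the deviation $\|\hat\bw-\bw^\infty\|$ is controlled by the deviations of these empirical moments from their expectations. Because the samples are bounded by $\beta$ and the relevant matrices are invertible (assumption (i), the smallest eigenvalue of $XX^\top$ being at least $1/\Lambda$, together with assumption (iii)), a matrix Bernstein or bounded-differences estimate yields $\Pr\!\big(\|\hat\bw-\bw^\infty\|>\tau\big)\le k\,e^{-\nu m_j}$ for any fixed threshold $\tau>0$.

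The margin is what upgrades this into the claimed bound. I would argue that $\bw^\infty$ itself classifies the support correctly with some strictly positive margin, and then observe that, since all of $\Dd_j$ sits outside a band around the boundary of $\bw^\infty$ and inside the radius-$\beta$ ball, the disagreement wedge between $\hat\bw$ and $\bw^\infty$ intersects the support only when $\|\hat\bw-\bw^\infty\|$ exceeds a constant threshold $\tau>0$ (of order the margin of $\bw^\infty$ divided by $\beta$). Hence the expected $0$--$1$ loss is exactly $0$ on the event $\{\|\hat\bw-\bw^\infty\|\le\tau\}$ and at most $1$ otherwise, so it is bounded by $\Pr(\|\hat\bw-\bw^\infty\|>\tau)\le k\,e^{-\nu m_j}$; for the small values of $m_j$ where this exceeds $1$ the inequality is trivial, so the constants can be chosen uniformly over all $m_j$.

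The main obstacle I anticipate is the claim that the limiting direction $\bw^\infty$ is itself a valid margin separator: least-squares regression to $\pm1$ targets is a biased classifier, and $\hat\bw$ does \emph{not} converge to $\bw^*$, so one cannot simply appeal to $\hat\bw\to\bw^*$ and the margin $\delta$ directly. Establishing that $\Sigma_{xx}^{-1}\E[y\bx]$ separates the support with a positive margin, using the margin $\delta$, the boundedness $\beta$, and the conditioning controlled by $\lambda$ and $\Lambda$, is the delicate step, and it is exactly what makes the threshold $\tau$ a positive constant independent of $m_j$. Everything else reduces to a routine concentration estimate whose only subtlety is securing the exponential rather than polynomial rate, which is precisely what the margin-induced activation threshold $\tau$ provides.
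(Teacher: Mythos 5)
Your proposal is essentially the paper's own argument: the paper likewise compares $\hat\bw=\hat\mu^\top\hat\Sigma^{+}$ to the population least-squares solution $\mu^\top\Sigma^{-1}$, reduces the deviation (via a triangle inequality and a pseudo-inverse identity) to $\|\hat\mu-\mu\|$ and $\|\hat\Sigma-\Sigma\|_{op}$, bounds these with Hoeffding and matrix-Bernstein inequalities at exponential rates, and uses the margin $\delta$ as the fixed threshold that converts parameter deviation into $0$--$1$ error, exactly as your wedge argument does. The ``delicate step'' you flag --- that the population least-squares direction itself separates the support with positive margin --- is a genuine subtlety, but the paper does not prove it either: its proof simply identifies the margin-$\delta$ separator $\Wo$ of assumption (ii) with $\mu^\top\Sigma^{-1}$, i.e., it resolves your obstacle by implicit assumption rather than argument, so under that reading of the hypotheses your constant threshold $\tau$ exists by fiat and the rest of your plan goes through.
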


\myparagraph{(ii) A mixture classifier.}
Assume a mixture of two linear classifiers, and let $E(m)=p\cdot E_{\Dd_\pR}(m_\pR) + (1-p)\cdot E_{\Dd_\rR}(m_\rR)$ denote its error score. The following theorem characterizes this function (the proof can be found in \app~\ref{app:proof_of_thm_5}):
\begin{theorem}[Undulating error]
\label{thm:undulating_pw_linear}
Retain the assumptions of Thm.~\ref{thm:error_bound_pw_linear}, and assume that $\forall\ahard\!\in\! (0,1)$ the following limits exist  $\lim\limits_{m\to\infty}\frac{E'(m)}{E(m)},\lim\limits_{m\rightarrow\infty}\frac{E(m)}{E(\ahard m)}, \lim\limits_{m\rightarrow\infty}\frac{E'(m)}{E'(\ahard m)}$. 
Then the error score of a mixture of two linear classifiers is undulating. 
\end{theorem}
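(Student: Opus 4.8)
The plan is to reduce this statement to a direct invocation of Cor.~\ref{cor:exp_undulating}, which already encapsulates the work of Thm.~\ref{thm:suffiecnt_coniditions_undulating}. That corollary asserts that an \emph{error score} is undulating once three things are in place: it is a proper \emph{error score} (Def.~\ref{def:proper}), the three limits of assumption~\ref{T2a2} exist for every $\ahard\in(0,1)$, and it is dominated by a decaying exponential $ke^{-\nu x}$. Two of these three ingredients are handed to us directly: assumption~\ref{T2a2} is hypothesized verbatim in Thm.~\ref{thm:undulating_pw_linear}, and the exponential bound is precisely the conclusion of Thm.~\ref{thm:error_bound_pw_linear}. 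The only remaining task is to check that the \emph{error score} $E(x)$ of the base linear learner qualifies as a proper \emph{error score}.

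First I would pin down the relevant function. By the order relation $R_\pR\prec R_\rR$, the two regional error scores share a common generator $E$, with $E_{\Dd_\pR}(m)=E(m)$ and $E_{\Dd_\rR}(m)=E(\qhard m)$; the claim ``the mixture is undulating'' is exactly the claim that this generator $E(x)$ is undulating in the sense of Def.~\ref{def:1}, since the threshold ratio $E'(pm)/E'(\qhard(1-p)m)$ defining undulation is built from $E$ alone. I would then verify the clauses of Def.~\ref{def:proper} one by one, drawing on the standing model assumptions rather than on any fresh computation: positivity and $E(0)=c_0\in\prn$ follow from the non-negativity of the expected $0$-$1$ loss at finite sample size; twice-differentiability follows from the $C^\infty$ positive continuation of $E(m)$ fixed in \S\ref{sec:model}; strict monotonic decrease $E'<0$ is the extended \textbf{Efficiency} assumption; and $\lim_{x\to\infty}E(x)=0$ is \textbf{Realizability}. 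With Def.~\ref{def:proper} established, all hypotheses of Cor.~\ref{cor:exp_undulating} hold and the conclusion that $E(x)$ is undulating is immediate.

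The one point that needs genuine care, and which I expect to be the main obstacle, is transferring the exponential bound from the discrete setting of Thm.~\ref{thm:error_bound_pw_linear}, where it is asserted for $m_j\in\N$, to the smooth continuation $E(x)$ on $\psn$ that the undulating machinery operates on. I would handle this by using the fact that the continuation is chosen positive, monotone, and interpolating the integer values, so that the smooth monotone envelope $ke^{-\nu x}$ continues to dominate $E(x)$ at all real $x$ after a harmless adjustment of the constant $k$; alternatively this compatibility can simply be folded into the definition of the continuation. Once it is secured, no further estimates are required: the substance of the result already resides in Thm.~\ref{thm:error_bound_pw_linear} (producing the exponential envelope) and in Thm.~\ref{thm:suffiecnt_coniditions_undulating}/Cor.~\ref{cor:exp_undulating} (converting such an envelope into undulation), so that Thm.~\ref{thm:undulating_pw_linear} is essentially their composition.
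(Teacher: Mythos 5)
Your proposal is correct and follows essentially the same route as the paper's own proof: both arguments verify that the linear classifier's error score has the properties of a proper error score (positive, monotone decreasing, vanishing at infinity), obtain the exponential envelope from Thm.~\ref{thm:error_bound_pw_linear}, and then conclude via Cor.~\ref{cor:exp_undulating} together with the hypothesized limits. If anything, you are more careful than the paper on the one delicate point --- transferring the exponential bound from integer sample sizes $m_j$ to the smooth continuation of $E$ on $\psn$ --- which the paper's proof silently elides.
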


In practice, the \emph{error score} in this case is also SP-undulating, as demonstrated in Fig.~\ref{fig:error_linear_classifier_mixture} in \app~\ref{sec:error-dnn}.

\begin{figure*}[thb!]
    \begin{subfigure}{.33\textwidth}
      \centering
      \includegraphics[width=1\linewidth]{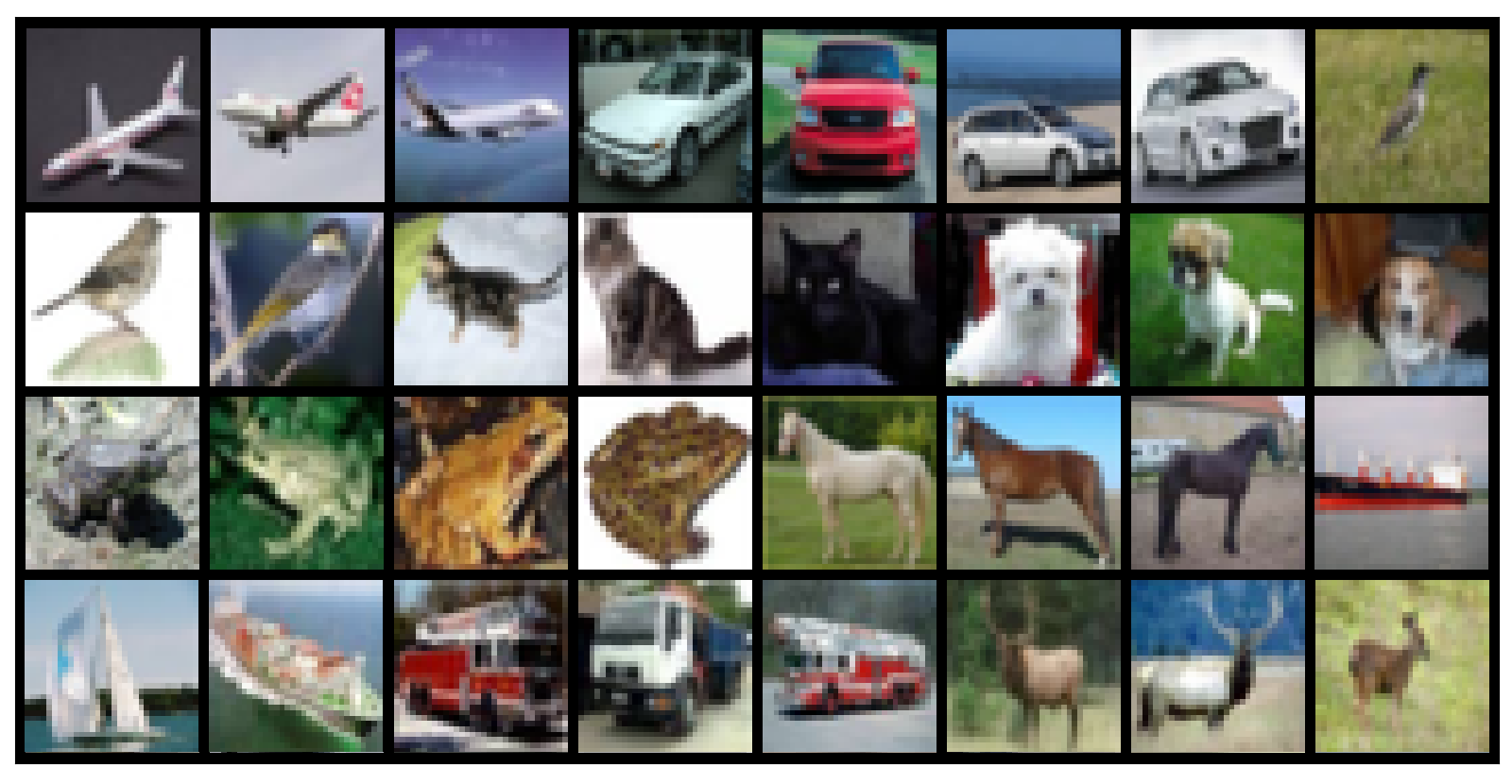}
      \caption{\emph{TypiClust} -- typical and diverse}
      \label{subfig:clustering_matters_regular}
    \end{subfigure}
    \begin{subfigure}{.33\textwidth}
      \centering
      \includegraphics[width=1\linewidth]{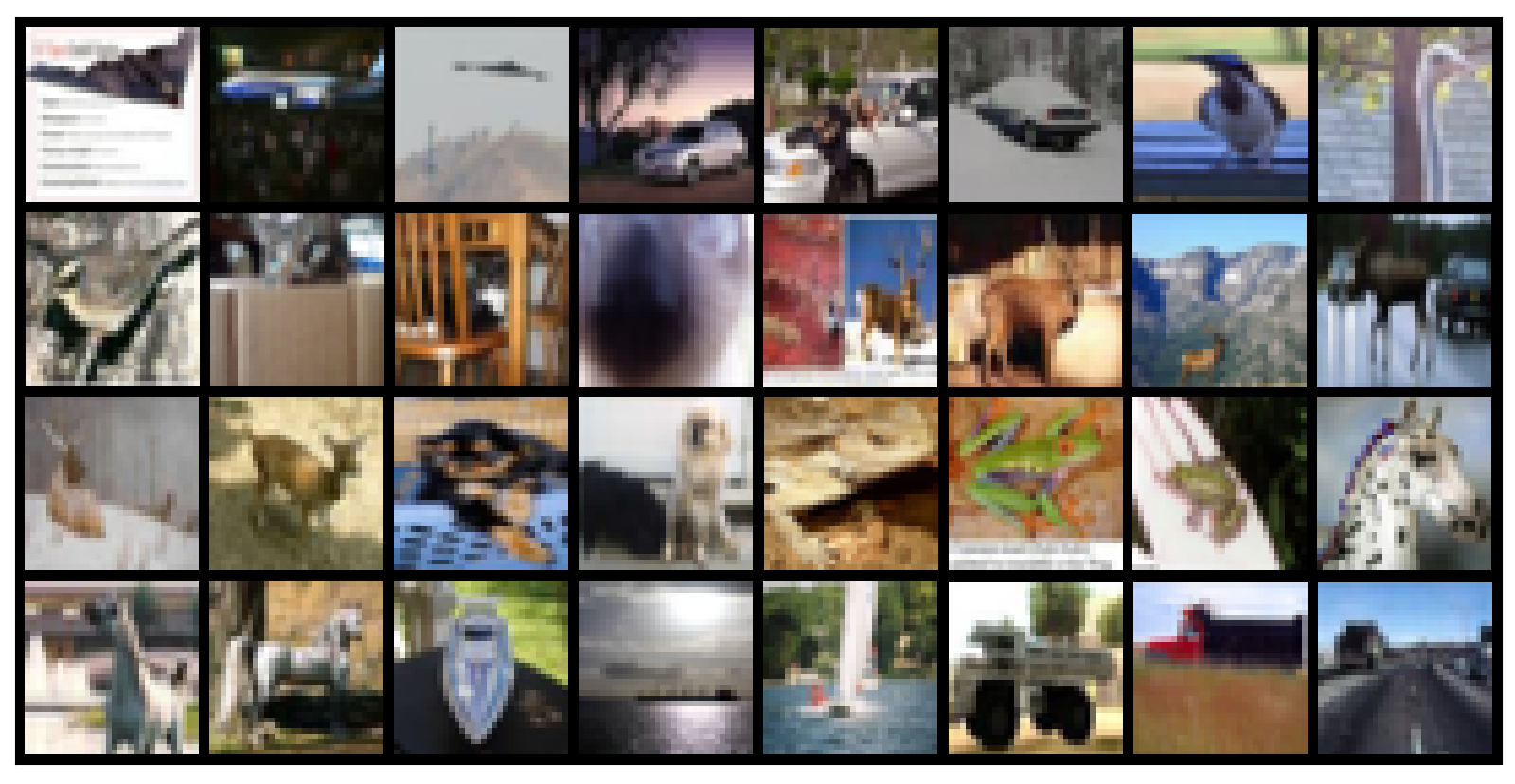}
      \caption{A-typical and diverse}
      \label{subfig:clustering_matters_inverse}
    \end{subfigure}
    \begin{subfigure}{.33\textwidth}
      \centering
      \includegraphics[width=1\linewidth]{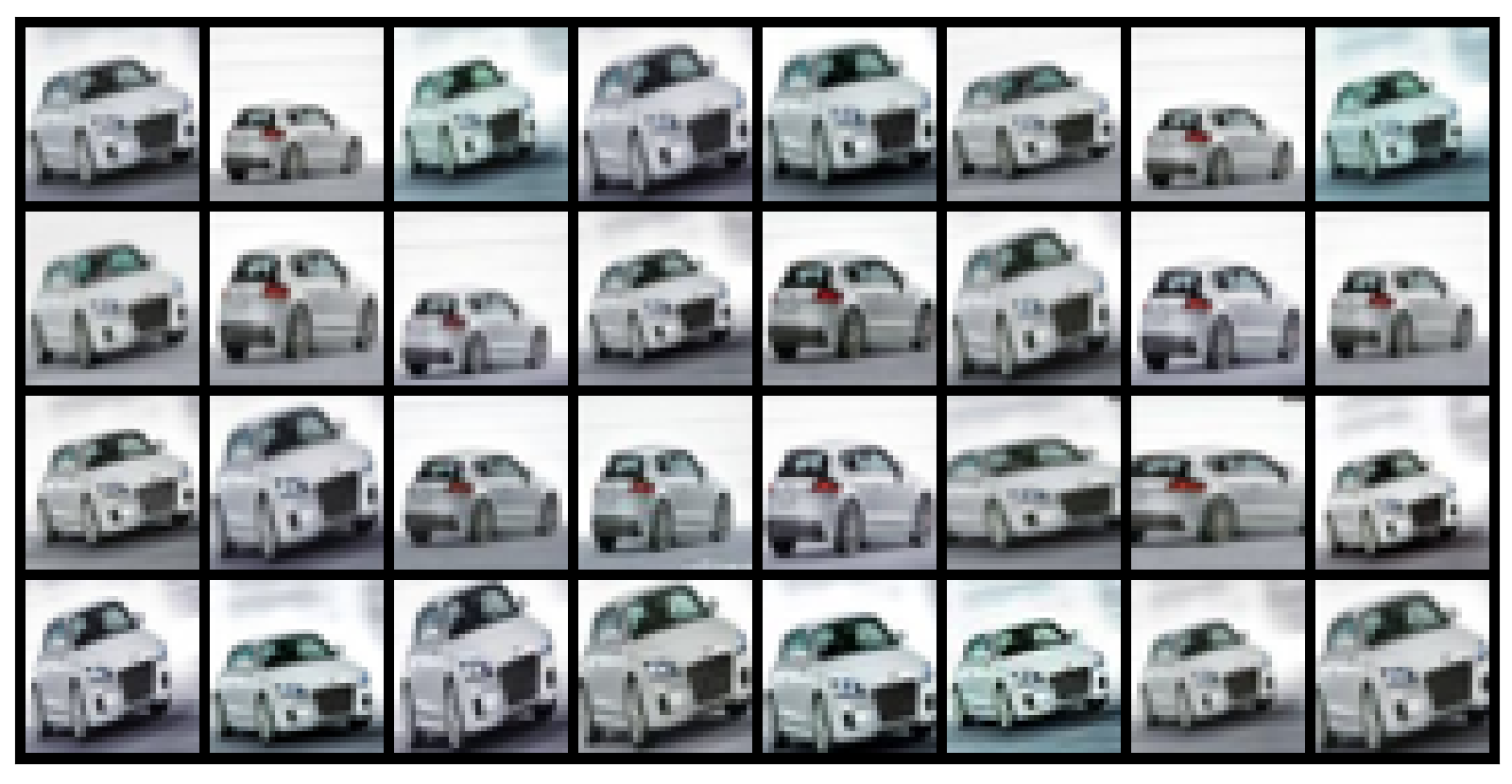}
      \caption{Typical and not diverse}
      \label{subfig:clustering_matters_nocluster}
    \end{subfigure}
    \caption{Qualitative visualization of diversity and typicality in the low budget regime on CIFAR-10. (a) Diverse typical images chosen by \emph{TypiClust}. (b) Picking the \emph{least} typical example in each cluster. (c) Picking the most typical examples, without enforcing diversity.}
    \label{fig:clustering_matters}
\vspace{-0.3cm}
\end{figure*}

\subsubsection{KNN Classifier and High-Density Regions}
\label{sec:high_density}

Our analysis in Section~\ref{sec:error-fs} shows that given some partition of the data into $R_\pR$ and $R_\rR$, where $R_\pR\prec R_\rR$ (see Def.~\ref{def:ease}), then oversampling from $R_\pR$ is preferable in the low budget regime, while oversampling from $R_\rR$ is preferable in the high budget regime. To shed light on the nature of the assumed partition, we analyze below the discrete one-Nearest-Neighbor (1-NN) classification framework. Specifically, we show that \textbf{selecting $R_\pR$ as the set of the most probable points in the dataset} has the property that $R_\pR\prec \{\Omega\setminus{R_\pR}\}$. 

We further show in \app~\ref{app:sec:1_nn_classifier_full_sec} that in this framework, the selection of an initial pool of size $m$ will benefit from the following heuristic: 
\setlist{nolistsep}
\begin{itemize}[noitemsep]
    \item \textbf{Max density:} when selecting a point $\bX_i$, maximize its density $f_\Dd(\bX_i)$.
    \item \textbf{Diversity:} select points that are far apart, so that their corresponding sets of nearest neighbors do not overlap.
\end{itemize}

While 1-NN is a rather simplistic model, we propose to use the derived heuristics to guide deep active learning. In the rest of this paper, we show how these guiding principles benefit deep active learning in the low-budget regime.

\section{Method: Low Budget Active Learning}
\label{sec:active_learning_low_budget}

In the low-budget regime, our theoretical analysis shows that it may be beneficial to bias the training sample in favor of certain regions in the data domain. It also establishes a connection between such regions and the principles of \emph{max density} (or \emph{typicality}) and \emph{diversity}. Here, we incorporate these principles into a simple new active learning strategy called \emph{TypiClust}, designed for the low-budget regime.

\subsection{Framework and Definitions}

Let $L_0$ denote an initial labeled set of examples, and $U_0$ denote an initial unlabeled pool. Active learning is done iteratively: at each iteration $i$, a set of $B$ unlabeled examples is picked according to some strategy. These examples are annotated by an oracle, added to $L_{i-1}$, and removed from $U_{i-1}$. This process is repeated until the labels budget is exhausted, or some predefined termination conditions are satisfied. In the low-budget regime, the total number of labeled examples $\vert L_{i-1}\vert+B$ is assumed to be small. The case where $L_0=\emptyset$ is called ``initial pool selection''.


To capture the principle of max density, we define the \emph{Typicality} of an example by its density in some semantically meaningful feature space. Formally, we measure an example's \emph{Typicality} by the inverse of the average Euclidean distance to its $K$ nearest neighbors\footnote{We use $K=20$, but other choices yield similar results.}, namely: 
\begingroup\abovedisplayskip=4pt\belowdisplayskip=4pt
\begin{equation}
\label{eq:density}
    Typicality(x)=\bigg(\frac{1}{K}\sum_{x_i\in K\text{-NN}(x)}||x-x_i||_2\bigg)^{-1}.
\end{equation}
\endgroup

\subsection{Proposed Strategy: Typical Clustering (\emph{TypiClust})}

In the low-budget regime, an active learning strategy based on typical examples needs to overcome several obstacles: 
\begin{inparaenum}[(a)]
    \item Networks trained on only a few examples are prone to overfit, making measures of typicality noisy and unreliable.
    \item Typical examples tend to be very similar, amplifying the need for diversity.
\end{inparaenum}
The importance of typicality and diversity is visualized in Fig.~\ref{fig:clustering_matters}.

To overcome these obstacles we propose a novel method, called \emph{TypiClust}, which attempts to select typical examples while probing different regions of the data distribution. In our method, self-supervised representation learning is used to overcome (a), while clustering is used to overcome (b). \emph{TypiClust} is therefore composed of three steps:



\myparagraph{Step 1: Representation learning.}
Utilize the large unlabeled pool $U_0$ to learn a semantically meaningful feature space: first train a deep self-supervised task on $U_0 \cup L_0$, then use the penultimate layer of the resulting model as feature space. Such methods are commonly used for semantic feature extraction \citep{chen2020simple, DBLP:conf/nips/GrillSATRBDPGAP20}.

\myparagraph{Step 2: Clustering for diversity.}
As \emph{typicality} in (\ref{eq:density}) is evaluated by measuring distances to neighboring points, the most typical examples are usually close to each other, often resembling the same image (see Fig.~\ref{subfig:clustering_matters_nocluster}). To enforce diversity and thus better represent the underlying data distribution, we employ clustering. Specifically, at each AL iteration $i$, we partition the data into $|L_{i-1}|+B$ clusters. This choice guarantees that there are at least $B$ clusters that do not intersect with the existing labeled examples. We refer to such clusters as \emph{uncovered clusters}.

\myparagraph{Step 3: Querying typical examples.}
We select the most typical examples from the $B$ largest \emph{uncovered clusters}. Selecting from \emph{uncovered clusters} enforces diversity (also w.r.t $L_{i-1}$), while selecting the most typical example in each cluster favors the selection of representative examples.

As the steps above do not depend on any specific representation or clustering method, different variants of the \emph{TypiClust} strategy can be constructed. 
Below we evaluate two variants, both of which outperform by a large margin the uncertainty-based strategies in the low-budget regime:
\begin{enumerate}
    \item \textit{$\tpcclustering$}: Using a deep clustering algorithm both for the self-supervised and clustering tasks. In our experiments, we used SCAN \citep{van2020scan}.
    \item \textit{$\tpcrepresent$}: Using representation learning followed by a clustering algorithm. We used DINO \citep{caron2021emerging} for ImageNet, and SimCLR \citep{chen2020simple} for all other datasets, followed by K-means.
\end{enumerate}
The pseudo-code of \emph{TypiClust} for initial pool selection is given in Alg.~\ref{alg:initial_pool_selection} (see more details in \app~\ref{app:method_implementation_details}). Note that, unlike traditional active learning strategies, \emph{TypiClust} relies on self-supervised representation learning, and therefore can be used for initial pool selection.

\begin{algorithm}[hbt!]
\begin{algorithmic}
\caption{\emph{TypiClust} initial pooling algorithm}
\label{alg:initial_pool_selection}
    \STATE {\bfseries Input:} Unlabeled pool $U$, Budget $B$
    \STATE {\bfseries Output:} $B$ typical and diverse examples to query
    \STATE Embedding $\leftarrow$ Representation\_Learning($U$)
    \STATE Clust $\leftarrow$ Clustering\_algorithm(Embedding,  $B$)
    \STATE Queries $\leftarrow\emptyset$\;
    \FORALL{$i=1,...,B$}
    \STATE Add $\arg\max_{x\in\text{Clust}[i]} \{Typicality(x)\}$ to Queries
    \ENDFOR
    \STATE {\bfseries return} Queries
\end{algorithmic}
\end{algorithm}

\section{Empirical Study}
\label{sec:emp}

We now report our empirical results. Section~\ref{sec:methodology} describes the evaluation protocol, datasets, and baseline methods. Section~\ref{sec:emp_results} describes the actual experiments and results.

\subsection{Methodology}
\label{sec:methodology}
We evaluate active learning separately in the following three frameworks.  \begin{inparaenum}[(i)] 
\item \textbf{Fully supervised}: training a deep network solely on the labeled set, obtained by active queries.
\item \textbf{Fully supervised with self-supervised embeddings}: training a linear classifier on the embedding obtained from a pre-trained self-supervised model. 
\item \textbf{Semi-supervised}: training a deep network on the labeled and unlabeled sets, using the competitive method FlexMatch \cite{DBLP:journals/corr/abs-2110-08263}.
\end{inparaenum} 

In (i) and (ii), we adopt the AL evaluation framework created by \citet{Munjal2020TowardsRA}, which implements several AL methods including all baselines used here except \emph{BADGE}. In (iii) we adopt the code and hyper-parameters provided by FlexMatch. As FlexMatch is computationally intensive, it was not evaluated  on ImageNet, confining the study to datasets it was reported to handle. In all evaluated cases, \emph{TypiClust} achieves large improvements
 (see \app~\ref{app:eval_impl_details} for implementation details). 


We compare \emph{TypiClust} to the following baseline strategies for the selection of $B$ points from $U$: \begin{inparaenum}[(1)] \item \emph{Random} -- uniformly. \item \emph{Uncertainty} -- lowest max softmax output. \item \emph{Margin} -- lowest margin between the two highest softmax outputs. \item \emph{Entropy} --  highest entropy of softmax outputs. \item \emph{DBAL} \citep{gal2017deep}.
\item \emph{CoreSet} \citep{sener2018active}. \item \emph{BALD} \citep{kirsch2019batchbald}. \item \emph{BADGE} \citep{DBLP:conf/iclr/AshZK0A20}. \end{inparaenum} All strategies are evaluated on the following image classification tasks: CIFAR-10/100 \citep{krizhevsky2009learning}, TinyImageNet \citep{le2015tiny} and ImageNet-50/100/200. The latter group includes subsets of ImageNet \citep{deng2009imagenet} containing 50/100/200 classes respectively, following \citet{van2020scan}.


\subsection{Results: Low Budget Regime}
\label{sec:emp_results}
The amount of labeled data that makes a budget ``low'' will vary between tasks. In the following experiments, unlike most earlier work, we focus on scenarios where on average, only 1-10 examples per class are labeled each round.

\subsubsection{Fully Supervised Framework} 
\label{sec:fwork:(i)}

Fig.~\ref{fig:main_al_graph} shows accuracy results for CIFAR-10/100 and ImageNet-100, using the labeled examples queried by different AL strategies. Denoting the number of classes by $M$, we show results with Budget $B=M$ or $B=5M$ labeled examples and $L_0=\emptyset$ (see \app~\ref{app:more_sup_empirical} for additional budgets).

\begin{figure}[hbt]
\begin{center}
\vspace{-.05cm}
\begin{subfigure}{.45\textwidth}
  \centering
 \includegraphics[width=\linewidth]{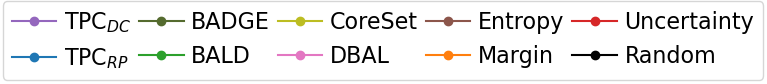}
\end{subfigure}
\\
    \begin{subfigure}{.157\textwidth}
      \centering
      \includegraphics[width=\linewidth]{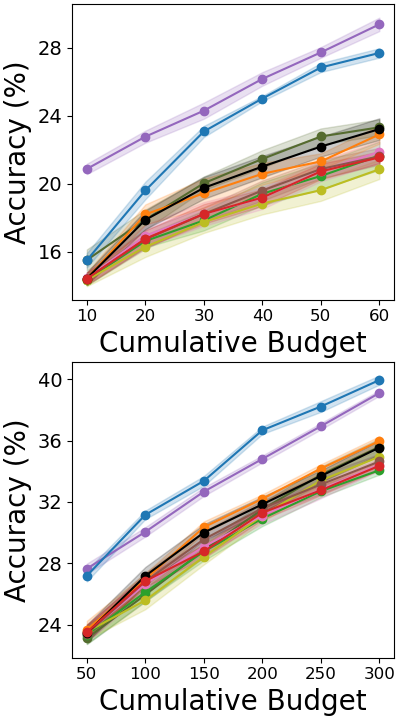}
    \caption{CIFAR-10}
    \label{fig:main_al_graph_cifar10}
    \end{subfigure}
    \begin{subfigure}{.157\textwidth}
      \centering
      \includegraphics[width=\linewidth]{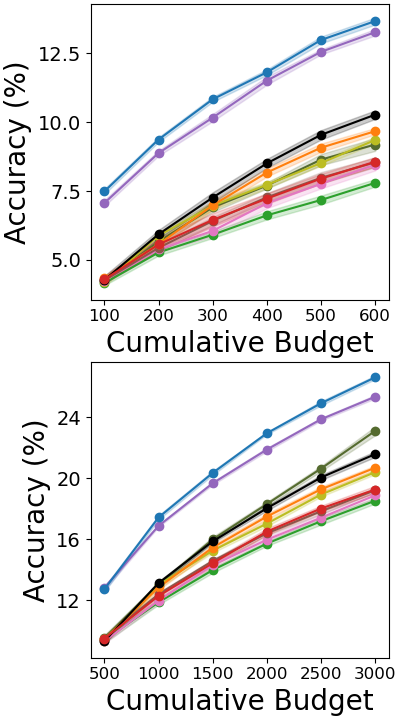}
    \caption{CIFAR-100}
    \label{fig:main_al_graph_cifar100}
    \end{subfigure}
    \begin{subfigure}{.157\textwidth}
      \centering
      \includegraphics[width=\linewidth]{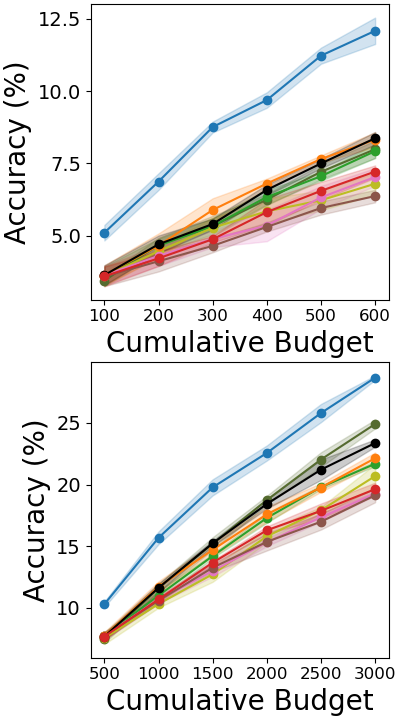}
    \caption{ImageNet-100}
    \label{fig:main_al_graph_tinyimagenet}
    \end{subfigure}
\vspace{-0.35cm}
\caption{``Fully supervised'' framework: comparing \emph{TypiClust} with baseline AL strategies on CIFAR10, CIFAR100, and ImageNet-100 for 5 active learning iterations in the low budget regime. The budget $B$ is equal to (top) the number of classes, or (bottom) $5$ times the number of classes. The final average test accuracy in each iteration is reported, using 10 (CIFAR) and 3 (ImageNet) repetitions. The shaded area reflects standard error.}
\label{fig:main_al_graph}
\end{center}
\vspace{-.25cm}
\end{figure}

We see that in the low budget regime, both \emph{TypiClust} variants outperform the baselines by a large margin. Specifically, all other baseline AL methods perform on par with random selection or worse, in accordance with \citet{pourahmadi2021simple}. In contrast, the typicality-based strategy achieves large accuracy gains.
Noting that most of the baselines are possibly hampered by their use of random initial pool selection when $L_0=\emptyset$, our ablation study in Section~\ref{sec:starting_from_random} demonstrates that this is not a decisive factor.

\subsubsection{Fully Supervised with Self-Supervised Embedding} 
\label{sec:fwork:(ii)}
As self-supervised embeddings can be semantically meaningful, they are often used as features for a linear classifier. Accordingly, in this framework, we use the extracted features from the representation learning step and train a linear classifier on the queried labeled set $L_i$. Unlike the fully supervised framework, here we use the unlabeled data while training the classifier, albeit in a basic manner. This framework outperforms the fully supervised framework, but still lags behind the semi-supervised framework. Once again, \textit{TypiClust} outperforms all baselines by a large margin, as shown in Fig.~\ref{fig:weak_semi} (see \app~\ref{app:more_lin_empirical} for additional datasets). 

\begin{figure}[htb!]
\begin{center}
\begin{subfigure}{.45\textwidth}
  \centering
 \includegraphics[width=\linewidth]{avihu_graphs/maybe_camera_ready/badge_legend.png}
\end{subfigure}
\\
\begin{subfigure}{.157\textwidth}
      \centering
      \includegraphics[width=\linewidth]{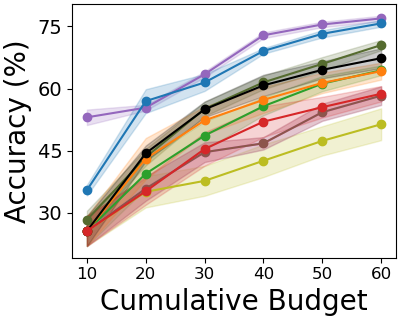}
    \vspace{-0.5cm}
    \caption{CIFAR-10}
    \label{fig:weak_semi_cifar10}
    \end{subfigure}
    \begin{subfigure}{.157\textwidth}
      \centering
      \includegraphics[width=\linewidth]{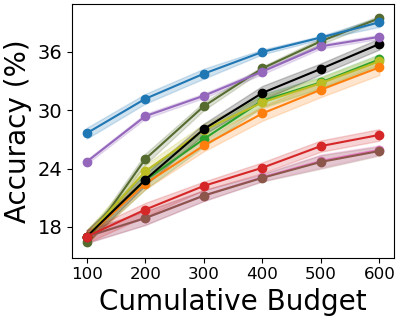}
    \vspace{-0.5cm}
    \caption{CIFAR-100}
    \label{fig:weak_semi_cifar100}
    \end{subfigure}
    \begin{subfigure}{.157\textwidth}
      \centering
      \includegraphics[width=\linewidth]{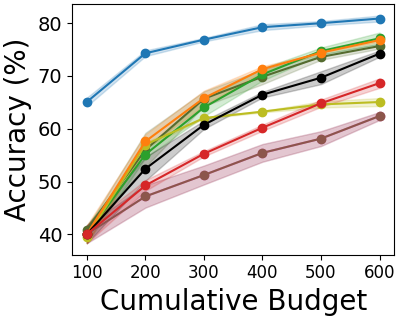}
    \vspace{-0.5cm}
    \caption{ImageNet-100}
    \label{fig:weak_semi_imagenet200}
    \end{subfigure}
    \vspace{-0.35cm}
\caption{Similar to Fig.~\ref{fig:main_al_graph} in the ``fully supervised with self-supervised embedding'' framework
.}
\label{fig:weak_semi}
\end{center}
\vspace{-.3cm}
\end{figure}

\subsubsection{Semi-Supervised Framework} 
\label{sec:fwork:(iii)}

In this framework, we evaluate \emph{TypiClust} and different AL strategies by examining the performance of FlexMatch when trained on their respective queried examples.
As semi-supervised methods often achieve competitive performance with only a few labeled examples, we focus on the extreme low-budget regime, where only $0.02\%\sim1\%$ of the data is labeled. Note that semi-supervised algorithms typically assume a class-balanced labeled set, which is not feasible in active learning. To compare with this scenario which dominates the literature, we add a class-balanced random baseline for reference.

In Fig.~\ref{fig:semi_supervised}, we compare the final performance of FlexMatch using the labeled sets provided by different AL strategies. We show results for a budget of $10$ examples in CIFAR-10 (Fig.~\ref{fig:ssl_cifar_10_with_10_examples}), $300$ examples in CIFAR-100 (Fig.~\ref{fig:ssl_cifar_100_with_300_examples}), and $1000$ examples in TinyImageNet (Fig.~\ref{fig:ssl_tiny_imagenet_with_1000_examples}). We see that both \emph{TypiClust} variants outperform random sampling, whether balanced or not, by a large margin. 
In contrast, other AL baselines do not improve the results of random sampling. Similar results using additional budgets, baselines, datasets, and semi-supervised algorithms, can be found in \app~\ref{app:more_ssl_empirical}.

\begin{figure}[htb!]
\begin{center}
    \begin{subfigure}{.157\textwidth}
      \centering
      \includegraphics[trim={0.7cm 0 0.6cm 0},width=\linewidth]{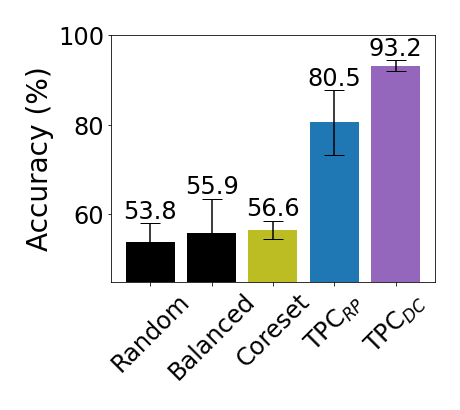}
    \vspace{-0.65cm}
    \caption{CIFAR-10}
    \label{fig:ssl_cifar_10_with_10_examples}
    \end{subfigure}
    \begin{subfigure}{.157\textwidth}
      \centering
      \includegraphics[trim={0.7cm 0 0.6cm 0},width=\linewidth]{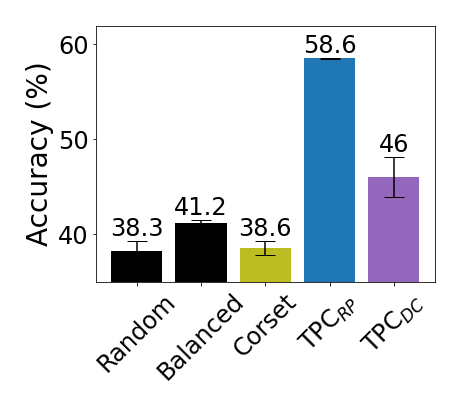}
    \vspace{-0.75cm}
    \caption{CIFAR-100}
    \label{fig:ssl_cifar_100_with_300_examples}
    \end{subfigure}
    \begin{subfigure}{.157\textwidth}
      \centering
      \includegraphics[trim={0.7cm 0 0.6cm 0},width=\linewidth]{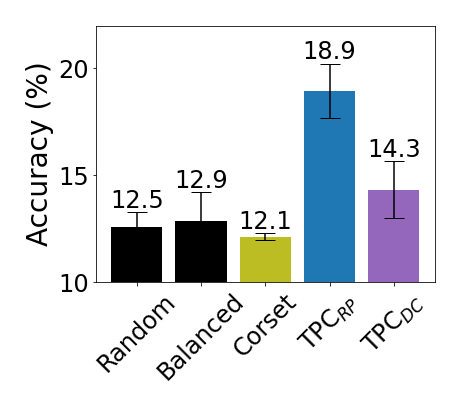}
    \vspace{-0.75cm}
    \caption{TinyImageNet}
    \label{fig:ssl_tiny_imagenet_with_1000_examples}
    \end{subfigure}
\vspace{-0.35cm}
\caption{Comparison of AL strategies in a semi-supervised task. Each bar shows the mean test accuracy after $3$ repetitions of FlexMatch trained on: (a) $10$ examples from CIFAR-10, (b) $300$ examples from CIFAR-100, (c) $1000$ examples from TinyImageNet. Error bars show the standard error. }
\label{fig:semi_supervised}
\vspace{-0.15cm}
\end{center}
\end{figure}

\subsection{Ablation Study}
\label{sec:ablation_study}
We now report the results of a set of ablation studies, checking the added value of each step in our suggested strategy.

\begin{figure}[htb]

\begin{subfigure}{.48\textwidth}
  \centering
 \includegraphics[width=\linewidth]{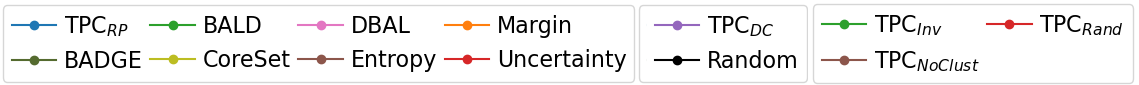}
\end{subfigure}
\\
\begin{subfigure}{.157\textwidth}
  \centering
 \includegraphics[width=\linewidth]{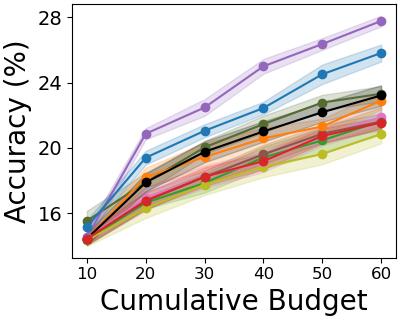}
\caption{}

\label{fig:cifar10_random_init}
\end{subfigure}
\begin{subfigure}{.157\textwidth}
  \centering
 \includegraphics[width=\linewidth]{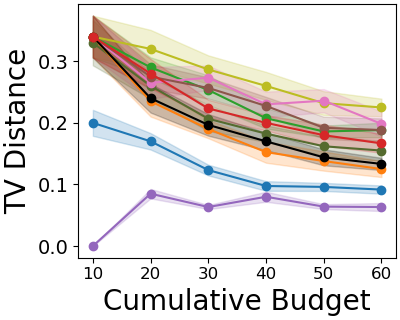}
\caption{}
\label{fig:balancing_metric}
\end{subfigure}
\begin{subfigure}{.157\textwidth}
  \centering
   \includegraphics[width=\linewidth]{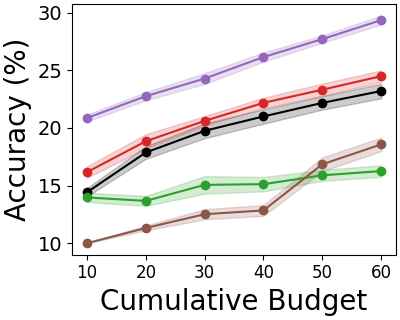}
\caption{}
\label{fig:scores_ablation}
\end{subfigure}

\vspace{-.2cm}
\caption{(a-b) Same experiment as in Fig.~\ref{fig:main_al_graph_cifar10} top, but where: (a) \emph{TypiClust} uses random initial set selection; (b) the Total Variation (TV) distance between the labeled set distribution and the ground truth class distribution is shown. 
(c) To isolate the added value of clustering for diversity and typical sample selection, we evaluate 3 additional selection heuristics on CIFAR-10 (see Section~\ref{sec:abb-denisity}).}
  \label{fig:agreement_testics}
\vspace{-0.25cm}
\end{figure}

\subsubsection{Random Initial Pool Selection}
\label{sec:starting_from_random}

As \textit{TypiClust} is based on self-supervised learning, both its variants are well suited for the case $L_0=\emptyset$, and can actively query the initial selection of labeled examples. By contrast, the other AL baselines use random initial pool selection when $L_0=\emptyset$. To isolate the effect of this difference, we conducted  the same experiment as reported in Fig.~\ref{fig:main_al_graph_cifar10}, giving \emph{TypiClust} a random initial pool selection just like the other baselines. Results are reported in  Fig.~\ref{fig:cifar10_random_init}, showing that \emph{TypiClust} still outperforms all baselines. Importantly, this comparison reveals that non-random initial pool selection yields further generalization gains when combined with active learning. Additional results can be seen in Fig.~\ref{fig:app:random_init_pool}.

\subsubsection{Comparing Class Distribution}
\label{sec:class_balancing}
With an extremely low budget, covering the support of the distribution comprehensively is challenging. To compare the success of the different AL strategies in this task, we measure the Total Variation (TV) distance between the labeled set class distribution and the ground truth class distribution for each strategy. Fig.~\ref{fig:balancing_metric} shows that the \emph{TypiClust} variants achieve a significantly better (lower) score than the alternatives, resulting in queries with better class balance.

\subsubsection{The Importance of Density and Diversity}
\label{sec:abb-denisity}
\emph{TypiClust} clusters the dataset and selects the most typical examples from every cluster. To assess the added value of clustering and typicality selection, we consider the following alternative selection criteria:
\begin{inparaenum}[(a)]
    \item Select a random example from each cluster (TPC$_{Rand}$).
    \item Select the most atypical example in every cluster (TPC$_{Inv}$).
    \item Select typical samples greedily, without clustering (TPC$_{NoClust}$).
\end{inparaenum}

The results in Fig.~\ref{fig:scores_ablation} show that both clustering and high-density sampling are crucial for the success of \emph{TypiClust}. The low performance of TPC$_{Rand}$ shows that representation learning and clustering alone cannot account for all the performance gain, while the low performance of TPC$_{NoClust}$ shows that typicality without diversity is not sufficient (see a  visualization of these variants in Fig.~\ref{fig:clustering_matters}).

\subsubsection{Uncertainty Delivered by an Oracle}
\label{sec:ablation_oracle_uncertainty}

When trained on only a few labeled examples, neural networks tend to overfit, which may result in the unreliable estimation of uncertainty. This offers an alternative explanation to our results -- uncertain examples may be a good choice in the low-budget regime as well, if only we could compute uncertainty accurately.

To test this hypothesis we first train an ``oracle'' network (see \citet{lowell2018practical} and \app~\ref{app:sec:ablation_details}) on the entire CIFAR-10 dataset and use its softmax margin to estimate uncertainty. This ``oracle margin'' is 
then used to choose the query examples. Subsequently, another network is trained similarly to the  setup of Fig.~\ref{fig:main_al_graph_cifar10}, adding in each iteration the examples with either the highest or lowest softmax response margin according to the oracle.

\begin{wrapfigure}{R}{0.24\textwidth}
\centering
\vspace{-0.6cm}
\includegraphics[width=0.24\textwidth]{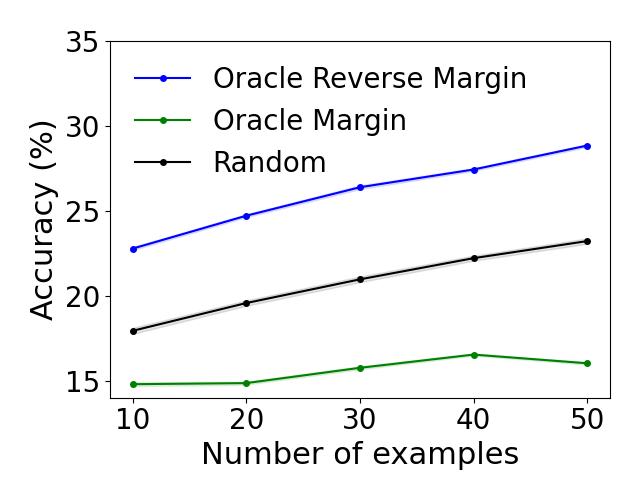}
\vspace{-0.95cm}
\caption{
Certainty, as estimated by the margin of an oracle that knows all the labels, is used for AL. We plot the mean test accuracy of $100$ models trained on CIFAR-10, 
 $|L_0|=10$, $B=10$. STE is very small, as shown.}
\vspace{-0.5cm}
\label{fig:oracle}
\end{wrapfigure}

The results are shown in Fig.~\ref{fig:oracle}
. We see that even a reliable measure of uncertainty leads to poor performance in the low-budget regime, even worse than the baseline uncertainty-based methods. This may be because these methods compute the uncertainty in an unreliable way, and thus behave more like the random selection strategy.

\subsubsection{Imbalanced Data}
Unsupervised representation learning methods often assume class-balanced datasets. As \emph{TypiClust} is based on representation learning, it could potentially fail in imbalanced settings. We repeated our experiments on the class-imbalanced subset of CIFAR-10 proposed by \citet{Munjal2020TowardsRA}
. As before, we show that \emph{TypiClust} outperforms other methods in the low-budget regime, and under-performs in the high-budget regime
 (see low budget results in Fig.~\ref{fig:app:imbalanced_data} of \app~\ref{app:more_sup_empirical}).

\begin{figure}[htb!]
\begin{center}
    \begin{subfigure}{.46\textwidth}
      \centering
      \includegraphics[width=\linewidth]{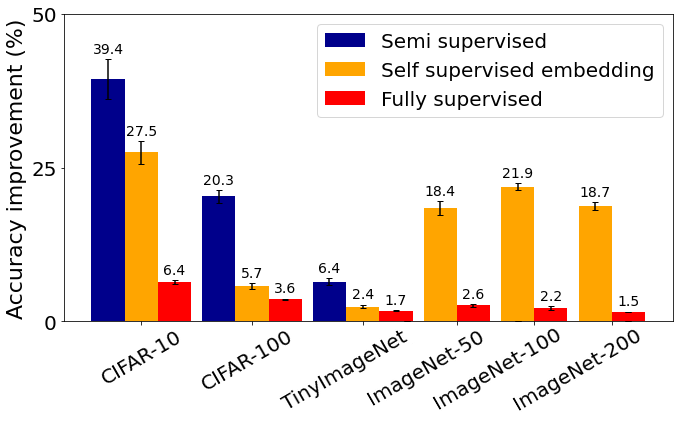}
      \vspace{-0.5cm}
    \end{subfigure}
\vspace{-0.25cm}
\caption{
\textit{TypiClust} achieves major accuracy gains as compared to the random selection baseline in the fully-supervised ($3$ repetitions on ImageNet, $10$ otherwise), semi-supervised ($3$ reps) and self-supervised embedding ($5$ reps) frameworks. We use $10$, $300$, $1000$, $50$, $100$, $200$ examples in CIFAR-10, CIFAR-100, TinyImageNet, ImageNet-50, ImageNet-100 and ImageNet-200 respectively.}
\vspace{-0.7cm}
\label{fig:semi_supervised_vs_fully_acc_improvement}
\end{center}
\end{figure}

\section{Summary and Discussion}
We show, theoretically and empirically, that strategies for active learning in the high and low-budget regimes should be based on opposite principles. Initially, in the low-budget regime, the most typical examples, which the learner can learn most easily, are the most helpful to the learner. When reaching the high-budget regime, the best examples to query are those that the learner finds most confusing. This is the case both in the fully supervised and semi-supervised settings: we show that semi-supervised algorithms get a significant boost from seeing the labels of typical examples. Fig.~\ref{fig:semi_supervised_vs_fully_acc_improvement} summarizes all our empirical results.


Our results are closely related to curriculum learning \citep{bengio2009curriculum,hacohen2019power,weinshall2020theory}, hard data mining, and self-paced learning \citep{kumar2010self}, all of which reflect the added value of typical (``easy'') examples when there is little information about the task, as against atypical (``hard'') examples  which are more beneficial later on. Our results are also closely related to the study of the learning order of neural networks, which also characterize ``easy'' and ``hard'' examples \citet{gissin2019discriminative,hacohen2020let,shah2020pitfalls,hacohen2021principal,choshen2021grammar}.

The point of transition -- what makes the budget ``small'' or ``large'', depends on the task and corresponding data distribution. In complex real-life problems, the low-budget regime may still contain a large number of examples, increasing the practicality of our method. Determining the range of training sizes with ``low budget'' characteristics is a challenging problem, which we leave for future work.

\section*{Acknowledgments}
This work was supported by the Israeli Ministry of Science and Technology, and by the Gatsby Charitable Foundations.


\bibliography{main}
\bibliographystyle{icml2021}


\newpage
.

\newpage
\appendix

\section*{Appendix}

\section{Related Work}
\label{app:related_work}
\subsection{Diversity Sampling}

Deep AL strategies often enforce diversity on the queried batch. The motivation is to avoid redundancy in the annotations, and to represent all parts of the training distribution. \citet{sener2018active} introduced the coreset approach, querying examples that cover the training distribution in a greedy manner. Many other AL algorithms incorporate diversity as part of their sampling strategy, including: \citet{hu2010off,elhamifar2013convex,yang2015multi,wang2016batch,yin2017deep,zhdanov2019diverse,he2019towards,kirsch2019batchbald,DBLP:conf/iclr/AshZK0A20,shui2020deep}. Notably, as deep active learning is practical only in batch settings, the importance of diversity is amplified \citep{geifman2017deep, sener2018active}.

Diversity sampling is orthogonal to uncertainty sampling, and can be added to almost any strategy. As opposed to previous works, our strategy aims to query a diverse set of characteristic examples, while other strategies aim to achieve diverse sets of uncharacteristic examples.

\subsection{AL Strategies for Low Budgets}
Recently, AL in the low-budget regime received increased attention. Strategies designed to address this regime usually employ self-supervised or semi-supervised methods using the unlabeled pool \citep{gao2020consistency,hong2020deep,mahmood2021low,yehuda2022active}. The embedding of such methods is often utilized by methods that estimate uncertainty, as it gives an informative distance measure \citep{zhang2018unreasonable}.

In particular, \citet{DBLP:conf/emnlp/YuanLB20} aims to solve the cold-start problem by using the embedding of a pre-trained model on some unsupervised task. 
Their experiments use pre-trained language models, with a strategy that decreases the dependency on high budgets but is still faithful to uncertainty sampling. 
\citet{mahmood2021low} suggested querying a diverse set of examples with minimal Wasserstein distance from the unlabeled pool. They report a significant performance boost in the low-budget regime. Unlike our work, they conduct experiments only in a fully supervised with self-supervised embedding settings, related to, but somewhat different from, the one described in Section~\ref{sec:fwork:(ii)}.

\section{Mixture Model Lemmas and Proofs}

\subsection{Undulating Error Score: Sufficient Conditions}
\label{app:proof_for_thm_2}

Below we provide the proof for Thm.~\ref{thm:suffiecnt_coniditions_undulating}, which is stated in Section~\ref{sec:error-fs}, and which lists sufficient conditions for error scores to be undulating (see Def.~\ref{def:1}). We start with a few lemmas that will be used in this proof.

\begin{lemma}
\label{lemma:left_equal_1_new}
Let $f:\psn \rightarrow\mathbb{R}$ denote a differentiable function with $f(0)\neq 0$. Then 
\begin{equation*}
\lim_{x\to0^{+}}\frac{f\left(x\right)}{f\left(\ahard x\right)}=1\quad\forall \ahard\in(0,1).
\end{equation*}
\end{lemma}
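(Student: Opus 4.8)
The plan is to reduce this to the continuity of $f$ together with the hypothesis $f(0)\neq 0$, via the elementary algebra of limits. The only fact I need beyond the statement is that a differentiable function is continuous: since $f$ is differentiable on $\psn$ it is in particular continuous from the right at $0$, so $\lim_{x\to 0^{+}} f(x) = f(0)$.

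First I would observe that for any fixed $\ahard\in(0,1)$ (indeed for any $\ahard>0$), the map $x\mapsto \ahard x$ sends $\psn$ into $\psn$ and satisfies $\ahard x\to 0^{+}$ as $x\to 0^{+}$. Composing with the continuity of $f$ then gives $\lim_{x\to 0^{+}} f(\ahard x) = f(0)$ as well. Thus both the numerator and the denominator of the ratio tend to the same finite limit $f(0)$.

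Next I would invoke $f(0)\neq 0$ to finish. By continuity the denominator $f(\ahard x)$ is nonzero on some punctured right-neighborhood of $0$, so the quotient $f(x)/f(\ahard x)$ is well-defined there, and the quotient rule for limits yields $\lim_{x\to 0^{+}} f(x)/f(\ahard x) = f(0)/f(0) = 1$.

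I expect no genuine obstacle here: the only point requiring care is guaranteeing that the denominator does not vanish near $0$, so that the ratio is defined and the quotient rule applies, and this is exactly what the hypothesis $f(0)\neq 0$ secures through continuity. Differentiability is in fact stronger than what is used — mere continuity at $0$ together with $f(0)\neq 0$ would suffice — so the lemma holds under weaker assumptions than stated, and presumably differentiability is assumed only because it is the standing hypothesis on the error score $E$ elsewhere in the analysis.
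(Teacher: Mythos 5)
Your proof is correct and complete: the paper actually omits the proof of this lemma (marking it ``Omitted''), and your argument---differentiability implies continuity at $0$, both $f(x)$ and $f(\ahard x)$ tend to $f(0)\neq 0$, the denominator is nonvanishing on a punctured right-neighborhood, and the quotient rule for limits gives the ratio limit $1$---is exactly the standard elementary argument the authors evidently had in mind. Your closing remark is also accurate: continuity at $0$ together with $f(0)\neq 0$ suffices, and differentiability appears in the hypothesis only because it is the standing assumption on the error score $E$ throughout the paper's analysis.
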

\begin{proof}
Omitted.
\end{proof}

\begin{lemma}
\label{lemma:right_equal_0_region}
Let $F:\psn \rightarrow \prn$ and $f=F'$ denote a positive differentiable strictly monotonically decreasing function  $(F>0,f<0)$. Assume that $\lim\limits_{x\rightarrow\infty}F(x) =0$, and that the limits $\lim\limits_{x\rightarrow\infty}\frac{F(x)}{F(\ahard x)},\lim\limits_{x\rightarrow\infty}\frac{f(x)}{f(\ahard x)}$ exist $\forall \ahard\in(0,1)$. Denote $g(x)=-\ln\left(F(x)\right)$. If 
\begin{equation*}
g'(x)\in\omega\left(\frac{1}{x}\right),
\end{equation*}
then:
\begin{equation*}
\lim_{x\rightarrow\infty}\frac{f(x)}{f\left(\ahard x\right)}=0 \quad\forall \ahard\in(0,1).
\end{equation*}
\end{lemma}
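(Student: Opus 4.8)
The plan is to relate the quantity we want, $\lim_{x\to\infty} f(x)/f(\ahard x)$, to the ratio $F(x)/F(\ahard x)$, which the growth hypothesis on $g'$ lets me evaluate outright; the bridge between them is L'H\^opital's rule used in the $0/0$ form. I would first record the identity $g'(x)=-f(x)/F(x)$, which is strictly positive since $f<0$ and $F>0$, and restate the hypothesis $g'\in\omega(1/x)$ as $x\,g'(x)\to\infty$, i.e.\ for every $M>0$ there is an $X$ with $g'(t)>M/t$ for all $t>X$.

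Next I would show that $F(x)/F(\ahard x)\to 0$ for each $\ahard\in(0,1)$. Since $\ln\big(F(x)/F(\ahard x)\big)=-(g(x)-g(\ahard x))=-\int_{\ahard x}^{x}g'(t)\,dt$, lower-bounding the integrand by $M/t$ once $\ahard x>X$ gives $g(x)-g(\ahard x)>M\ln(1/\ahard)$. Because $\ahard\in(0,1)$ makes $\ln(1/\ahard)>0$ and $M$ is arbitrary, $g(x)-g(\ahard x)\to\infty$, so the log-ratio tends to $-\infty$ and hence $F(x)/F(\ahard x)\to 0$.

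For the final step I would treat $F(x)/F(\ahard x)$ as a $0/0$ indeterminate form, valid because $F(x)\to 0$ and $F(\ahard x)\to 0$ (as $\ahard x\to\infty$). Differentiating numerator and denominator in $x$ produces $f(x)$ and $\ahard f(\ahard x)$, so the derivative ratio equals $\tfrac{1}{\ahard}\,f(x)/f(\ahard x)$. The standing hypothesis that $\lim_{x\to\infty} f(x)/f(\ahard x)$ exists guarantees that this derivative ratio has a limit, which is exactly what licenses L'H\^opital's rule; it yields $\lim_{x\to\infty} F(x)/F(\ahard x)=\tfrac{1}{\ahard}\lim_{x\to\infty} f(x)/f(\ahard x)$. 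Comparing this with the value $0$ obtained in the previous step forces $\lim_{x\to\infty} f(x)/f(\ahard x)=0$, as claimed.

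The one place that needs care is the direction of L'H\^opital: here it is the existence of the derivative-ratio limit (supplied by the hypothesis on $f(x)/f(\ahard x)$) that permits concluding the function-ratio limit, rather than the reverse. Beyond that, the only checks are the $0/0$ form and the non-vanishing of the denominator's derivative $\ahard f(\ahard x)$, both immediate from $F\to 0$ and $f<0$; the integral estimate in the middle step is routine.
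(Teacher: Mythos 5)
Your proposal is correct and follows essentially the same route as the paper's proof: both first show $F(x)/F(\ahard x)\to 0$ by proving $g(x)-g(\ahard x)\to\infty$ from $x\,g'(x)\to\infty$, and then apply L'H\^opital's rule in the existence-of-the-derivative-ratio-limit direction to conclude $\lim\limits_{x\to\infty} f(x)/f(\ahard x)=0$. The only difference is in one technical step: the paper bounds $g(x)-g(\ahard x)$ via the mean value theorem ($g(x)-g(\ahard x)=g'(t)\,x(1-\ahard)$ with $t\,g'(t)\to\infty$), whereas you integrate the bound $g'(t)\ge M/t$ over $[\ahard x,x]$, which tacitly assumes $g'$ is integrable --- a point the MVT argument avoids under the lemma's bare differentiability hypothesis, though your step is easily repaired (e.g.\ by noting $g(t)-M\ln t$ is increasing where $g'(t)>M/t$).
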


\begin{proof}
We can write $F(x)=e^{-g(x)}$.
It follows from the mean value theorem that $\exists t~\ahard x < t < x$ such that
\begin{align*}
\frac{F(x)}{F(\ahard x)} &= e^{-(g(x)-g(\ahard x))} \\
&= e^{-(g'(t)x(1-\ahard))}\\
&= e^{-(g'(t)t\cdot\frac{x}{t}\cdot(1-\ahard))}.
\end{align*}
Since $g'(x)\in\omega\left(\frac{1}{x}\right)$ we get
\begin{align*}
\lim_{t\rightarrow\infty}t\cdot g'(t)=\infty.
\end{align*}
As $(1-a)<\frac{x}{t}(1-a)<\frac{(1-a)}{a}$, it follows that
\begin{equation*}
\lim_{x\rightarrow\infty}\frac{F(x)}{F(\ahard x)}=0.
\end{equation*}
From the assumption that the limits exist, and since $\lim\limits_{x\rightarrow\infty}F(x) =\lim\limits_{x\rightarrow\infty}F(\ahard x) =0$, we can use L'Hôpital's rule and get
\begin{equation*}
\lim_{x\rightarrow\infty}\frac{F(x)}{F(\ahard x)}=\lim_{x\rightarrow\infty}\frac{f(x)}{\ahard f(\ahard x)}=\frac{1}{a}\lim_{x\rightarrow\infty}\frac{f(x)}{f(\ahard x)}=0.
\end{equation*}
\end{proof}

\begin{lemma}
\label{lemma:bound_result_in_omega}
Let $F:\psn \rightarrow \prn$ denote a positive differentiable function $(F>0)$. Denote $g(x)=-\ln(F(x))$. Assume that
$\lim\limits_{x\to\infty}g'(x)$ exists, and
\begin{equation*}
g(x)\in\omega\left(\log(x)\right),
\end{equation*}
then
\begin{equation*}
g'(x)\in\omega\left(\frac{1}{x}\right).
\end{equation*}
\end{lemma}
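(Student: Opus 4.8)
The plan is to translate both hypotheses and the conclusion into the language of limits and then bridge them with L'Hôpital's rule. The assumption $g\in\omega(\log x)$ means $\lim_{x\to\infty} g(x)/\log x=\infty$, which in particular forces $g(x)\to\infty$; the target conclusion $g'\in\omega(1/x)$ is exactly $\lim_{x\to\infty} x\,g'(x)=\infty$. Since both $g(x)$ and $\log x$ tend to $\infty$, the ratio $g(x)/\log x$ is of indeterminate form $\infty/\infty$, and the denominator's derivative $1/x$ is nonzero and of constant sign for large $x$. The identity I want to exploit is
\begin{equation*}
\lim_{x\to\infty}\frac{g(x)}{\log x}=\lim_{x\to\infty}\frac{g'(x)}{1/x}=\lim_{x\to\infty} x\,g'(x),
\end{equation*}
which, once justified, immediately gives $\lim_{x\to\infty} x\,g'(x)=\infty$ and hence the claim.

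The subtle point is the logical direction of L'Hôpital: the theorem concludes the value of the \emph{function}-ratio limit from the assumed existence of the \emph{derivative}-ratio limit, whereas here I already know the function-ratio limit and want to extract the derivative-ratio limit. So the actual work is to establish that $\lim_{x\to\infty} x\,g'(x)$ exists in the extended reals; once existence is secured, L'Hôpital forces that limit to equal $\lim_{x\to\infty} g(x)/\log x=\infty$. This is precisely where the standing hypothesis that $\ell:=\lim_{x\to\infty} g'(x)$ exists is used. I would split on the sign of $\ell$. If $\ell>0$ (including $\ell=+\infty$), then $x\,g'(x)\to+\infty$ directly, settling the value without invoking L'Hôpital at all. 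If $\ell<0$ (including $\ell=-\infty$), then $g'$ is eventually bounded below by a negative constant, so $g(x)\to-\infty$, contradicting $g(x)\to\infty$; hence this case is vacuous.

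The remaining and genuinely delicate case is $\ell=0$, where $x\,g'(x)$ is an $\infty\cdot 0$ product whose limit is not determined by $\lim g'$ alone. Here I would show that $\lim_{x\to\infty} x\,g'(x)$ nonetheless exists by using the extra regularity available for a proper \emph{error score} --- recall $g'=-E'/E$, and the companion limits assumed in Thm.~\ref{thm:suffiecnt_coniditions_undulating}\ref{T2a2} (eventual monotone behavior of $g'$) pin down the product. Concretely, if $g'$ is eventually monotone then $x\,g'(x)$ has a limit in the extended reals, and the L'Hôpital identity then forces that limit to be $\infty$. I expect this $\ell=0$ sub-case to be the main obstacle: without some additional control on $g'$, the product $x\,g'(x)$ could in principle oscillate while still $g'\to 0$ and $g\in\omega(\log x)$, so the whole point of assuming that $\lim g'$ exists is to exclude such oscillation and guarantee the existence of $\lim x\,g'(x)$. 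With existence in hand across all three cases, the L'Hôpital bridge yields $\lim_{x\to\infty} x\,g'(x)=\infty$, i.e.\ $g'(x)\in\omega(1/x)$.
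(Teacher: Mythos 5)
Your diagnosis of the L'H\^opital direction problem is exactly right, and it is sharper than the paper's own treatment: the paper's entire proof is the chain $\infty=\lim_{x\to\infty}g(x)/\ln(x)=\lim_{x\to\infty}g'(x)/(1/x)=\lim_{x\to\infty}xg'(x)$, i.e.\ it applies L'H\^opital in the reverse, invalid direction without first securing existence of $\lim_{x\to\infty}xg'(x)$ --- precisely the step you refuse to take for free. Your cases $\ell>0$ and $\ell<0$ are handled correctly. The genuine gap is your case $\ell=0$, and the repair you propose does not work. The claim it rests on --- that eventual monotonicity of $g'$ forces $\lim_{x\to\infty}xg'(x)$ to exist in the extended reals --- is false. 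Take $x_1=4$, $x_{k+1}=x_k^2$, and let $g'$ be a smooth nonincreasing function with $g'(x)\approx 1/x_k$ on $[x_k,x_{k+1})$. Then $g'\downarrow 0$, so $\ell=0$ exists; moreover $g(x_{k+1})-g(x_k)\approx x_{k+1}/x_k=x_k$, so $g(x)\gtrsim \sqrt{x_k}$ on $[x_k,x_{k+1})$ while $\log x\le 2\log x_k$ there, giving $g\in\omega(\log x)$; yet $xg'(x)\approx 1$ at $x=x_k$ and $xg'(x)\approx x_k$ just below $x_{k+1}$, so $\liminf_{x\to\infty} xg'(x)\le 1$ while $\limsup_{x\to\infty}xg'(x)=\infty$. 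Separately, assumption \ref{T2a2} of Thm.~\ref{thm:suffiecnt_coniditions_undulating} does not posit any monotone behavior of $g'$ (it posits existence of the limits of $E'/E$, $E(x)/E(\ahard x)$, $E'(x)/E'(\ahard x)$), so your parenthetical gloss misreads those hypotheses; and in any case the lemma is stated for an arbitrary positive differentiable $F$, so importing Theorem assumptions inside the lemma's proof is not available.

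The example above satisfies every stated hypothesis of the lemma ($F=e^{-g}>0$ differentiable, $\lim_{x\to\infty}g'(x)$ exists, $g\in\omega(\log x)$) and violates its conclusion, so the lemma as stated is false: your $\ell=0$ gap cannot be closed by any argument without strengthening the hypotheses, and your closing remark --- that assuming $\lim g'$ exists is ``the whole point'' that excludes oscillation of $xg'(x)$ --- is wrong, since when $\ell=0$ that assumption excludes nothing. The damage is local, though: where the lemma is used, its conclusion can be bypassed. By assumption \ref{T2a2}, $L=\lim_{x\to\infty}E(x)/E(\ahard x)=\lim_{x\to\infty}e^{-(g(x)-g(\ahard x))}$ exists; if $L>0$ then $g(x)-g(\ahard x)\to c<\infty$, and chaining this over the scales $x,\ahard x,\ahard^2x,\dots$ down to a fixed base point gives $g(x)=O(\log x)$, contradicting \ref{T2a3}; hence $L=0$, and then the valid direction of L'H\^opital --- using the assumed existence of $\lim_{x\to\infty}E'(x)/E'(\ahard x)$, exactly as at the end of the proof of Lemma~\ref{lemma:right_equal_0_region} --- yields $\lim_{x\to\infty}E'(x)/E'(\ahard x)=0$, which is all that the undulating conclusion needs at infinity.
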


\begin{proof}
$g(x)\in\omega(\log(x))$ implies that\;
\begin{equation*}
\lim_{x\to\infty}\frac{g(x)}{\ln(x)}=\infty,~ \lim_{x\to\infty}g(x)=\infty.
\end{equation*}
We can now use L'Hôpital's rule and get
\begin{equation*}
\infty=\lim_{x\to\infty}\frac{g(x)}{\ln(x)}=\lim_{x\to\infty}\frac{g'(x)}{\frac{1}{x}}=\lim_{x\to\infty}xg'(x).
\end{equation*}
\end{proof}

\noindent
\textbf{Theorem~\ref{thm:suffiecnt_coniditions_undulating}.} \textit{
Given partition $R_\pR\prec R_\rR$ and Error score $E(x)$, let $p=Prob(R_\pR)$ and $0<\qhard<\frac{p}{1-p}$. $E(x)$ is undulating if the following assumptions hold:
\begin{enumerate}[label=(\roman*)]
\item $E(x)$ is a proper \emph{error score} (see Def.~\ref{def:proper}).
\item $\lim\limits_{x\to\infty}\frac{E'(x)}{E(x)},\lim\limits_{x\rightarrow\infty}\frac{E(x)}{E(\ahard x)},\lim\limits_{x\rightarrow\infty}\frac{E'(x)}{E'(\ahard x)}$ exist $\forall\ahard\!\in\! (0,1)$.
\item 
$-\log(E(x))\in\omega\left(\log(x)\right)$.
\end{enumerate}
}

\begin{proof}
We define $f(x)=-E'(px), \ahard=\frac{\qhard(1-p)}{p}<1$. From assumption (i) and using Lemma~\ref{lemma:left_equal_1_new}, we get
\begin{equation*}
\lim_{x\to 0^+}\frac{E'\left(px\right)}{E'\left(\qhard(1-p)x\right)}=1.
\end{equation*}
Therefore there exists some $z_1\in\psn$ such that $\forall x<z_1$
\begin{equation*}
\frac{E'\left(px\right)}{E'\left(\qhard(1-p)x\right)}>\frac{\qhard(1-p)}{p}.
\end{equation*}
From assumptions (i)-(iii) and using Lemmas~\ref{lemma:right_equal_0_region}-\ref{lemma:bound_result_in_omega}, we get
\begin{equation*}
\lim_{x\to\infty}\frac{E'\left(x\right)}{E'\left(ax\right)}=\lim_{x\to\infty}\frac{E'\left(px\right)}{E'\left(\qhard(1-p)x\right)}=0,
\end{equation*}
and therefore there is some $z_2\in\psn$ such that $\forall x>z_2$
\begin{equation*}
\frac{E'\left(px\right)}{E'\left(\qhard(1-p)x\right)}<\frac{\qhard(1-p)}{p}.
\end{equation*}
From Def.~\ref{def:1} we get that $E(x)$ is undulating.
\end{proof}

\subsection{SP-undulating Error Score: Sufficient Conditions}
\label{app:proof_thm_3}

We provide next the proof for Thm.~\ref{thm:suffiecnt_coniditions_sp_undulating}, which is stated in Section~\ref{sec:SP_undulating}, and which lists sufficient conditions for error scores to be SP-undulating (see Def.~\ref{def:1}), extending Thm.~\ref{thm:suffiecnt_coniditions_undulating}. Once again, we start with a few lemmas.

\begin{lemma}
\label{thm:1_crossing_point}
Let $f:\psn\rightarrow\prn$ denote a positive differentiable function $(f>0)$. Let $0 < \ahard < 1$ denote some constant. If
\begin{equation*}
h\left(x\right)=\frac{-f'\left(x\right)x}{f\left(x\right)}
\end{equation*}
is strictly monotonically increasing, then
\begin{equation*}
g\left(x\right)=\frac{f\left(x\right)}{f\left(\ahard x\right)}
\end{equation*}
is strictly monotonically decreasing.
\end{lemma}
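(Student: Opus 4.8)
The plan is to prove the claim by showing directly that $g'(x)<0$ for every $x>0$. Since $f$ is positive and differentiable, $g(x)=f(x)/f(\ahard x)$ is differentiable, and by the quotient rule the sign of $g'$ is governed by the numerator
\[
N(x)=f'(x)f(\ahard x)-\ahard\, f(x)f'(\ahard x),
\]
because the denominator $f(\ahard x)^2$ is strictly positive. Thus it suffices to establish $N(x)<0$ for all $x>0$.

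Next I would divide the target inequality $N(x)<0$ by the positive quantity $f(x)f(\ahard x)$, reducing the goal to
\[
\frac{f'(x)}{f(x)}<\ahard\,\frac{f'(\ahard x)}{f(\ahard x)}.
\]
This is the step where the hypothesis on $h$ enters. Rewriting the logarithmic derivative through the definition $h(x)=-x f'(x)/f(x)$ gives $f'(x)/f(x)=-h(x)/x$, and applying the same identity at the point $\ahard x$ gives $f'(\ahard x)/f(\ahard x)=-h(\ahard x)/(\ahard x)$. Substituting both expressions, the factors of $\ahard$ cancel on the right-hand side, and after multiplying through by $x>0$ the inequality collapses to the clean statement $h(x)>h(\ahard x)$.

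Finally, since $0<\ahard<1$ and $x>0$ we have $\ahard x<x$, so the assumed strict monotonic increase of $h$ yields $h(\ahard x)<h(x)$ immediately; this is exactly the inequality just derived. Tracing back, we conclude $N(x)<0$ and hence $g'(x)<0$ on all of $\psn$, so $g$ is strictly monotonically decreasing. I do not expect a genuine obstacle here: the argument is a short computation, and the only points demanding care are the bookkeeping of signs (every multiplication is by one of the positive quantities $x$, $f(x)$, $f(\ahard x)$, so no inequality direction reverses) and noting that the hypotheses $f>0$ and $f$ differentiable are precisely what guarantee $g$ is differentiable and the divisions are legitimate.
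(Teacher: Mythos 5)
Your proof is correct and follows essentially the same route as the paper's: both compute $g'$ by the quotient rule, reduce the sign of its numerator to the inequality $h(\ahard x) < h(x)$, and invoke the strict monotonicity of $h$ together with $\ahard x < x$. The only cosmetic difference is direction of presentation — you reduce the goal backward to $h(\ahard x)<h(x)$ via logarithmic derivatives, while the paper starts from that inequality and clears denominators forward — but the content is identical.
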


\begin{proof}

$g\left(x\right)$ is monotonically decreasing iff $g'\left(x\right)<0$, where
\begin{align*}
g'\left(x\right) & =\frac{f'\left(x\right)f\left(\ahard x\right)-\ahard f\left(x\right)f'\left(\ahard x\right)}{f\left(\ahard x\right)^{2}}.
\end{align*}
This condition translates to
\begin{equation*}
f'\left(x\right)f\left(\ahard x\right)-\ahard f\left(x\right)f'\left(\ahard x\right)<0.
\end{equation*}
As by assumption $h\left(x\right)$ is monotonically increasing and $h\left(\ahard x\right)  < h\left(x\right)$, we get that $\forall x>0$
\begin{align*}
\quad &\frac{-f'\left(\ahard x\right)\ahard x}{f\left(\ahard x\right)}  <\frac{-f'\left(x\right)x}{f\left(x\right)} \\
\implies \quad &\frac{-f'\left(\ahard x\right)\ahard}{f\left(\ahard x\right)}  <\frac{-f'\left(x\right)}{f\left(x\right)} \\
\implies \quad &-f'\left(\ahard x\right)f\left(x\right)\ahard  <-f'\left(x\right)f\left(\ahard x\right).
\end{align*}
\end{proof}

\begin{lemma}
\label{thm:log-concave_is_monotinic}
Let $f:\psn\rightarrow\prn$ denote a positive differentiable log-concave function which is strictly monotonically decreasing $(f>0,f'<0,(\log(f))''\leq0)$. Then the following function is strictly monotonically increasing
\begin{equation*}
h(x)=\frac{-f'\left(x\right)x}{f\left(x\right)}.
\end{equation*}
\end{lemma}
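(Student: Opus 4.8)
The plan is to reduce the claim to a single differentiation, after rewriting $h$ in a more transparent form. The key observation is that the quantity $-f'(x)/f(x)$ that appears (divided by $x$) inside $h$ is exactly the negative logarithmic derivative of $f$. So I would first set $\phi(x) := -\frac{f'(x)}{f(x)} = -\bigl(\log f\bigr)'(x)$, which is well defined and differentiable since $f>0$ is differentiable, and note that $h(x) = x\,\phi(x)$.

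Next I would extract the two sign facts supplied by the hypotheses. Since $f>0$ and $f'<0$, the function $\phi(x) = -f'(x)/f(x)$ is strictly positive for all $x$. Since $f$ is log-concave, $\bigl(\log f\bigr)''\le 0$, and because $\phi = -(\log f)'$ we get $\phi'(x) = -\bigl(\log f\bigr)''(x)\ge 0$; that is, $\phi$ is (weakly) monotonically nondecreasing. These are the only two properties of $f$ I expect to need.

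With these in hand the conclusion follows by differentiating the product $h(x)=x\,\phi(x)$:
\begin{equation*}
h'(x) = \phi(x) + x\,\phi'(x).
\end{equation*}
For every $x>0$ the first term is strictly positive ($\phi>0$) and the second is nonnegative ($x\ge 0$, $\phi'\ge 0$), so $h'(x)>0$, which gives strict monotonic increase of $h$ on $\prn$. I would conclude by remarking that this argument uses log-concavity and strict monotonic decrease in exactly the decoupled way one wants: decrease supplies the positivity of $\phi$, while log-concavity supplies the monotonicity of $\phi$.

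I do not anticipate a genuine obstacle here; the only points that require a little care are bookkeeping rather than substance. First, one should track strictness correctly: log-concavity only yields $\phi'\ge 0$, so the strict positivity of $h'$ must come entirely from the $\phi(x)>0$ term, which is why the hypothesis $f'<0$ (not merely $f'\le 0$) is essential. Second, one should note the boundary: the estimate $h'(x)>0$ is established for $x>0$, and the behavior at $x=0$ (where $h(0)=0$) is handled simply by continuity, so that $h$ is strictly increasing on all of $\psn$. Everything else is a routine product-rule computation.
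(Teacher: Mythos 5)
Your proof is correct and is essentially the same argument as the paper's: both differentiate $h$ and conclude from the two sign facts that $-f f' > 0$ (strict decrease) and $f'^2 - f f'' \geq 0$ (log-concavity). The paper applies the quotient rule to $h$ directly, whereas you first factor $h(x) = x\,\phi(x)$ with $\phi = -(\log f)'$ and use the product rule; expanding your expression $\phi + x\phi'$ over the common denominator $f^2$ recovers the paper's numerator exactly, so the bookkeeping via the logarithmic derivative is a cleaner packaging of the identical computation.
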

\begin{proof}
$h(x)$ is strictly monotonically increasing iff $h'(x)>0$, which holds iff
\begin{align*}
h'(x)=\frac{xf'(x)^2-xf(x)f''(x)-f(x)f'(x)}{f(x)^2}&>0\\
\implies \quad \hspace{0.18cm} x\left[f'(x)^2-f(x)f''(x)\right]-f(x)f'(x)&>0.\\
\end{align*}
Recall that $x>0$ and $-f(x)f'(x)>0~\forall x\in \psn$. Since $f$ is log-concave, we also have that $f'(x)^2-f(x)f''(x)\geq0$, which concludes the proof.
\end{proof}

\noindent
{\textbf{Theorem~\ref{thm:suffiecnt_coniditions_sp_undulating}.} \textit{Given partition $R_\pR\prec R_\rR$ and Error score $E(x)$, let $p=Prob(R_\pR)$ and $0<\qhard<\frac{p}{1-p}$.  $E(x)$ is SP-undulating if the following assumptions hold:}
\begin{enumerate}
\item $E(x)$ is an undulating proper \emph{error score}.
\label{ass:1}
\item At least one of the following conditions holds:
\label{ass:2}
\begin{enumerate}
\item \label{ass:2-1}
$\frac{-E''\left(x\right)\cdot x}{E'\left(x\right)}$ is
monotonically increasing with $x$.
\item \label{ass:2-2}
$-E'(x)$ is strictly monotonic decreasing and log-concave. 
\end{enumerate}

\end{enumerate}
}

\begin{proof}
Define the following positive continuous function
\begin{equation*}
H\left(x\right)=\frac{E'\left(px\right)}{E'\left(\qhard\left(1-p\right)x\right)}.
\end{equation*}
Let $f(x)=-E'(x), ~a=\frac{\ahard(1-p)}{p}<1$. Note that assumption \ref{ass:2-1} follows from assumption \ref{ass:2-2} and Lemma~\ref{thm:log-concave_is_monotinic}.
Assumption \ref{ass:2} therefore implies that $\frac{-f'(x)x}{f(x)}$ is strictly monotonically increasing, and by Lemma~\ref{thm:1_crossing_point} we can conclude that $H(x)$ is monotonically decreasing. Together with assumption \ref{ass:1}, $H\left(x\right)=\frac{\qhard\left(1-p\right)}{p}$ at a single point, and we may therefore conclude that $E(x)$ is SP-undulating.
\end{proof}

\begin{corollary}
If $p$ -- the probability of region $R_\pR$ -- is sufficiently small so that $p<\frac{\qhard}{1+\qhard}$, then the conclusions are reversed: it is beneficial to initially over-sample $R_\rR$, and vice versa. 
\end{corollary}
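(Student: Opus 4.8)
The plan is to recognize the stated hypothesis as a \emph{reversal of the order relation} in Def.~\ref{def:ease}, and then to inherit the earlier machinery by symmetry. First I would record the algebraic equivalence
\begin{equation*}
p<\frac{\qhard}{1+\qhard}
\;\Longleftrightarrow\;
\qhard>\frac{p}{1-p}
\;\Longleftrightarrow\;
\frac{1}{\qhard}<\frac{1-p}{p}.
\end{equation*}
The middle inequality is exactly the negation of the condition $\qhard<\frac{p}{1-p}$ that made $R_\pR$ the easier region, so in this regime it should be $R_\rR$ that plays the role of the easy region.

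To make this precise I would relabel the partition, setting $\tilde R_\pR=R_\rR$ and $\tilde R_\rR=R_\pR$, with new mixing weight $\tilde p=1-p=\mathrm{Prob}(\tilde R_\pR)$. The base error function for the relabeled problem is $\tilde E(x)=E(\qhard x)$ (the error score of the old $R_\rR$), and then the old $R_\pR$ has error $E(x)=\tilde E(x/\qhard)=\tilde E(\tilde\qhard x)$ with $\tilde\qhard=1/\qhard$. This reproduces exactly the form of Eq.~(\ref{eq:Etotal}) for the relabeled mixture. By the third line of the equivalence, $\tilde\qhard=1/\qhard<\frac{1-p}{p}=\frac{\tilde p}{1-\tilde p}$, so $\tilde R_\pR\prec\tilde R_\rR$ holds in the sense of Def.~\ref{def:ease}.

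The remaining routine check is that $\tilde E(x)=E(\qhard x)$ inherits every hypothesis used by the undulating theorems: it is again a proper error score (Def.~\ref{def:proper}), since positivity, strict monotone decrease, the value at $0$, and the limit at infinity are all preserved under the positive rescaling $x\mapsto\qhard x$; moreover the three limits in assumption~\ref{T2a2}, and $-\log\tilde E(x)\in\omega(\log x)$, survive because each is invariant under a constant rescaling of the argument (e.g. $-\log E(\qhard x)\in\omega(\log(\qhard x))=\omega(\log x)$). With these verified, Thm.~\ref{thm:suffiecnt_coniditions_undulating} --- and Thm.~\ref{thm:suffiecnt_coniditions_sp_undulating} whenever its single-phase hypotheses also transfer --- applies verbatim to the relabeled mixture, giving over-sampling of $\tilde R_\pR=R_\rR$ at low budget and of $\tilde R_\rR=R_\pR$ at high budget, which is precisely the original conclusion reversed.

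As a self-contained alternative avoiding relabeling, I would feed $\qhard>\frac{p}{1-p}$ directly into the threshold test of Thm.~\ref{thm:threshold_test}, where the threshold $\frac{\qhard(1-p)}{p}$ now exceeds $1$. As $m\to0^+$ the ratio $\frac{E'(pm)}{E'(\qhard(1-p)m)}\to1$ (Lemma~\ref{lemma:left_equal_1_new}), which sits below the threshold and hence favors $R_\rR$; as $m\to\infty$, writing $pm=\ahard\cdot\qhard(1-p)m$ with $\ahard=\frac{p}{\qhard(1-p)}\in(0,1)$ and invoking Lemma~\ref{lemma:right_equal_0_region} forces $\frac{E'(pm)}{E'(\qhard(1-p)m)}\to\infty$, above the threshold and hence favoring $R_\pR$. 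The main obstacle here is bookkeeping rather than analysis: I must check carefully that ``over-sample $\tilde R_\pR$'' in the relabeled problem corresponds to the bias $\qDel<0$ (over-sampling $R_\rR$) in the original parametrization, and confirm that the rescaled $\tilde E$ genuinely satisfies each regularity hypothesis; the underlying limits themselves are routine.
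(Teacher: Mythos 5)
Your proposal is correct. The paper gives no proof of this corollary at all --- it is stated as an immediate symmetry consequence of Def.~\ref{def:ease} and Theorems~\ref{thm:suffiecnt_coniditions_undulating}--\ref{thm:suffiecnt_coniditions_sp_undulating} --- and your relabeling argument (swapping $R_\pR \leftrightarrow R_\rR$ with $\tilde p = 1-p$, $\tilde E(x) = E(\qhard x)$, $\tilde\qhard = 1/\qhard$, noting $p<\frac{\qhard}{1+\qhard} \Leftrightarrow \tilde\qhard < \frac{\tilde p}{1-\tilde p}$) is exactly that implicit symmetry made explicit, together with the correct verification that properness, the limit assumptions, and the $\omega(\log x)$ condition are invariant under positive rescaling of the argument.
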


\section{Error Function of Simple Mixture Models}
\label{app:simple_mixture_models_error_func}

\subsection{Mixture of Two Linear Classifiers}
\label{app:linear-model}

We next prove Thm.~\ref{thm:error_bound_pw_linear} and Thm.~\ref{thm:undulating_pw_linear}, which are stated in Section~\ref{sec:piecewise_linear_separator}. Thm.~\ref{thm:error_bound_pw_linear} provides a bound on the error score of a single linear classifier, showing that under mild conditions, this score is bounded by an exponentially decreasing function in the number of training examples $m$. Thm.~\ref{thm:undulating_pw_linear} states  conditions under which the error score of a mixture of two linear models $E(m)$ is undulating, and presents the phase-transition behavior. 



\subsubsection{Bounding the Error of Each Mixture Component}
Henceforth we use the notations of Section~\ref{sec:piecewise_linear_separator}, where for clarity, $m_j$ is replaced by $m$ while we are discussing the bound on a single component $j\in[2]$. Let $\bx_i$ denote the $i$-th data point and $i$-th column of $X$. Let $\mu_1$ and $\mu_2$ denote the respective means of the two classes, and $\mu=\mu_1-\mu_2$ denote the vector difference between the means. 

Assuming that the data is normalized to $0$ mean, the maximum likelihood estimators for the covariance matrix of the distribution $\Sigma$ and class means, denoted $\hat \Sigma$ and $\hat \mu_1,\hat \mu_2$ respectively, are the following
\begin{align}
\hat \Sigma &= \frac{1}{m}\sum_{i=1}^m \bx_i \bx_i^\top = \frac{1}{m} XX^\top \label{eq:sigma-hat}\\
\hat \mu_j &= \frac{1}{m_j}\sum_{\bx_i\in C_j} \bx_i \implies {\by}X^\top=m_1 \hat\mu_1-m_2\hat\mu_2 \label{eq:mu-hat},
\end{align}
where $C_j$ denotes the set of points in class $j\in [2]$. Thus, the ML linear separator can be written as
\begin{equation*}
\begin{split}
\hat {\bw} &= {\by}X^\top (XX^\top)^{-1} \\
&= [m_1 \hat\mu_1-m_2\hat\mu_2]~(m \hat \Sigma)^{-1} = \hat\mu \hat\Sigma^{-1},
\end{split}
\end{equation*}
where $\hat \mu =  \frac{1}{m} \by X^\top$. Note that $\hat\mu$ is the sample mean of vectors $\{y_i\bx_i\}_{i=1}^m$, and $\hat\Sigma$ is the sample covariance of $\{\bx_i\}_{i=1}^m$.

When $d>m$ (fewer training points than the input space dimension), $\hat\Sigma$ is rank deficient and therefore $\hat\Sigma^{-1}$ is not defined. Moreover, the solution is not unique. Nevertheless, it can be shown that the minimal norm solution is the Penrose pseudo-inverse $\hat\Sigma^{+}$, where
\begin{equation*}
\hat\Sigma = U D U^\top \implies \hat\Sigma^{+} = U D^+ U^\top,
\end{equation*}
and using the notations
\begin{align*}
D &= \mathrm{diag}(d_1,\ldots,d_m, 0,\ldots, 0) \\
D^+ &= \mathrm{diag}(d_1^{-1},\ldots,d_m^{-1}, 0,\ldots, 0).
\end{align*}

Ignoring the question of uniqueness, estimating ${\bw}$ is therefore reduced to evaluating the estimators in (\ref{eq:sigma-hat}) and (\ref{eq:mu-hat}). These ML estimators have the following known upper bounds on their error:
\begin{enumerate}
\item Bounding $\hat\Sigma$: from known results on covariance estimation \citep{DBLP:journals/ftml/Tropp15}, 
using Bernstein matrix inequality 
\begin{equation}
\label{eq:cov-bound}
    P (\|\hat\Sigma-\Sigma\|_{op} \geq t) \leq 2d e^{-\gamma m t^2}.
\end{equation}
Constant $\gamma$ does not depend on $m$; it is determined by the assumed bound on the $L_2$ norm of vectors $\bx_i$, and the norm of the true covariance matrix $\Sigma$.
\item Bounding $\hat\mu$: starting from Hoeffding's inequality in one dimension, we have that $P (\left |\hat\mu^{k}-\mu^{k}\right | \geq t) \leq 2e^\frac{-2 m t^2}{4\capB^2}$ $\forall k\in [d]$, where we assume a bounded distribution $\|\bx\|\le \capB$. Thus 
\begin{equation}
\label{eq:mean-bound}
\begin{split}
P (\|\hat\mu-\mu\| \geq t) &= P (\|\hat\mu-\mu\|^2 \geq t^2) \\
&= P (\sum_{k=1}^d\left |\hat\mu^{k}-\mu^{k}\right |^2 \geq t^2) \\
&\leq \sum_{k=1}^d P (\left |\hat\mu^{k}-\mu^{k}\right |^2 \geq \frac{t^2}{d})\\
&= \sum_{k=1}^d P (\left |\hat\mu^{k}-\mu^{k}\right | \geq \frac{t}{\sqrt{d}}) \\
&\leq 2 d e^{-2 m \frac{t^2}{4\capB^2 d}}
\end{split}
\end{equation}
\end{enumerate}
The first inequality follows from the union-bound inequality.



\begin{lemma}
\label{thm:new_lemma}
\begin{equation}
\label{eq:simps}
\hat\mu^\top [\Sigma^{-1}-\hat\Sigma^{+}]\bx =\hat\mu^\top  [\hat\Sigma^{+}(\hat\Sigma-\Sigma)\Sigma^{-1}]\bx.
\end{equation}
\end{lemma}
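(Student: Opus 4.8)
The identity is the pseudo-inverse analogue of the second resolvent identity. The plan is to recognize that, were $\hat\Sigma$ invertible, the claim would follow from the purely algebraic fact $\Sigma^{-1}-\hat\Sigma^{-1}=\hat\Sigma^{-1}(\hat\Sigma-\Sigma)\Sigma^{-1}$, obtained by expanding the right-hand side as $\hat\Sigma^{-1}\hat\Sigma\Sigma^{-1}-\hat\Sigma^{-1}\Sigma\Sigma^{-1}=\Sigma^{-1}-\hat\Sigma^{-1}$. The only reason this needs to be stated as a lemma is that $\hat\Sigma$ is rank-deficient whenever $d>m$, so $\hat\Sigma^{+}\hat\Sigma\neq I$ in general and the naive cancellation $\hat\Sigma^{+}\hat\Sigma=I$ is unavailable. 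The whole content of the lemma is that the projection defect $I-\hat\Sigma^{+}\hat\Sigma$ is annihilated once the expression is contracted against $\hat\mu$.

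First I would expand the right-hand side as $\hat\mu^\top\hat\Sigma^{+}\hat\Sigma\Sigma^{-1}\bx-\hat\mu^\top\hat\Sigma^{+}\Sigma\Sigma^{-1}\bx$ and simplify the second summand using $\Sigma\Sigma^{-1}=I$, yielding $\hat\mu^\top\hat\Sigma^{+}\hat\Sigma\Sigma^{-1}\bx-\hat\mu^\top\hat\Sigma^{+}\bx$. Since the left-hand side equals $\hat\mu^\top\Sigma^{-1}\bx-\hat\mu^\top\hat\Sigma^{+}\bx$, the common term $\hat\mu^\top\hat\Sigma^{+}\bx$ cancels, and the claim reduces to the single identity $\hat\mu^\top(I-\hat\Sigma^{+}\hat\Sigma)\Sigma^{-1}\bx=0$, which must hold for every $\bx$. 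Equivalently, it suffices to prove $\hat\mu^\top(I-\hat\Sigma^{+}\hat\Sigma)=0$.

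The crux is therefore to show that $\hat\mu$ is fixed by the projector $\hat\Sigma^{+}\hat\Sigma$. Here I would invoke two facts about $\hat\Sigma=\tfrac{1}{m}XX^\top$: it is symmetric positive semidefinite, so $\hat\Sigma^{+}\hat\Sigma$ is the symmetric orthogonal projector onto $\mathrm{col}(\hat\Sigma)=\mathrm{col}(X)$; and by (\ref{eq:mu-hat}) we have $\hat\mu=\tfrac{1}{m}\by X^\top=\tfrac{1}{m}\sum_i y_i\bx_i\in\mathrm{col}(X)$. Consequently $\hat\Sigma^{+}\hat\Sigma\,\hat\mu=\hat\mu$, and transposing while using the symmetry of the projector gives $\hat\mu^\top\hat\Sigma^{+}\hat\Sigma=\hat\mu^\top$, which closes the argument.

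The main obstacle is conceptual rather than computational: one must resist applying the resolvent identity naively and instead isolate exactly where invertibility is used, namely the step $\hat\Sigma^{+}\hat\Sigma=I$. The payoff is that this step is only ever needed on vectors lying in $\mathrm{col}(\hat\Sigma)$, and the estimator $\hat\mu$ — being a signed average of the sampled points — lies precisely there. A secondary point worth checking carefully is the symmetry of $\hat\Sigma^{+}\hat\Sigma$ for symmetric $\hat\Sigma$, since this is what lets me pass from the right action $\hat\Sigma^{+}\hat\Sigma\,\hat\mu=\hat\mu$ to the left action $\hat\mu^\top\hat\Sigma^{+}\hat\Sigma=\hat\mu^\top$ that the computation requires.
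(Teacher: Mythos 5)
Your proof is correct and takes essentially the same route as the paper's: both arguments hinge on the observation that $Q=\hat\Sigma^{+}\hat\Sigma$ is the symmetric orthogonal projector onto the span of the training vectors and that $\hat\mu=\frac{1}{m}\sum_{i} y_i\bx_i$ lies in that span, so $Q\hat\mu=\hat\mu$ and hence $\hat\mu^\top Q=\hat\mu^\top$. The only cosmetic difference is that the paper substitutes the expansion $\hat\Sigma^{+}(\hat\Sigma-\Sigma)\Sigma^{-1}=Q\Sigma^{-1}-\hat\Sigma^{+}$ into both sides directly, whereas you first cancel the common term $\hat\mu^\top\hat\Sigma^{+}\bx$ and reduce to $\hat\mu^\top(I-Q)=0$.
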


\begin{proof}
Because $\hat\Sigma\in\R^{d\times d}$ is of rank $m$, $Q= \hat\Sigma^{+}\hat\Sigma=U \mathrm{diag}(1,\ldots,1, 0,\ldots, 0)U^\top$ is a projection matrix of rank $m$, projecting vectors to the subspace spanned by the training set $\{\bx_i\}_{i=1}^m$. Thus $Q\hat\mu=\hat\mu$. Additionally, by definition, $\hat\Sigma^{+}\hat\Sigma\hat\Sigma^{+} = \hat\Sigma^{+}$ and $Q$ is symmetric. It follows that
\begin{equation*}
\begin{split}
\hat\mu^\top (\Sigma^{-1}-\hat\Sigma^{+})\bx
&= \hat\mu^\top  (Q\Sigma^{-1}-\hat\Sigma^{+})\bx,
\end{split}
\end{equation*}
while
\begin{equation*}
\hat\Sigma^{+}(\hat\Sigma-\Sigma)\Sigma^{-1} = Q\Sigma^{-1}-\hat\Sigma^{+}.
\end{equation*}
Together, we get (\ref{eq:simps}).
\end{proof}

\noindent
\textbf{Theorem~\ref{thm:error_bound_pw_linear}.} \textit{
Assume: \begin{inparaenum}[(i)] \item a bounded sample $\|\bx_i\|\le \capB$, where $X X^\top$ is sufficiently far from singular so that its smallest eigenvalue is bounded from below by $\frac{1}{\Lambda}$; \item a realizable binary problem where the classes are separable by margin $\delta$; \item full rank data covariance, where $\frac{1}{\lambda}$ denotes its smallest singular value. \end{inparaenum} Then there exist some positive constants $k, \nu > 0$, such that $\forall m_j\in\mathbb{N}$ and every sample $\iX^{m_j}$, the expected error of $\hat {\bw}$ obtained using $\iX^{m_j}$ is bounded by:
\begin{equation*}
F({m_j}) = \E_{\bX\sim\Dd_j}[\mathrm{0-1~loss~of~}\hat {\bw}] \leq ke^{-\nu {m_j}}
\end{equation*}
}

\label{app:proof_thm_4}
\begin{proof}
It follows from our assumptions that $\|\hat\Sigma^{+}\|_{op}\leq\Lambda$. An error will occur at $\bx\in C_1$ if $\Wo\bx\geq\delta$ and $\hat {\bw}\bx< 0$, and vice versa for $\bx\in C_2$. In either case, the difference between the predictions of $\Wo$ and $\hat {\bw}$ deviate by more than $\delta$. Thus
\begin{equation*}
\begin{split}
F(m) &= \E_{\bX\sim\Dd}[\mathrm{0-1~loss}] = P(\mathrm{error})\\
&\leq  P \left(\big|\mu^\top\Sigma^{-1}\bx - \hat\mu^\top \hat\Sigma^{+}\bx\big | > \delta\right).
\end{split}
\end{equation*}
Invoking the triangular inequality
\begin{equation}
\label{eq:1}
\begin{split}
\Delta&=\big |\mu^\top\Sigma^{-1}\bx - \hat\mu^\top \hat\Sigma^{+}\bx\big | \\
&=\big | (\mu-\hat\mu)^\top\Sigma^{-1}\bx + \hat\mu^\top (\Sigma^{-1} - \hat\Sigma^{+})\bx \big |  \\
& \leq\big | (\mu-\hat\mu)^\top\Sigma^{-1}\bx \big | ~+~ \big |\hat\mu^\top (\Sigma^{-1} - \hat\Sigma^{+})\bx \big |.
\end{split}
\end{equation}

We now use Lemma~\ref{thm:new_lemma}  in order to shift $(\Sigma^{-1} - \hat\Sigma^{+})$ to $(\Sigma - \hat\Sigma)$. Specifically, we insert (\ref{eq:simps})  into (\ref{eq:1}) to get
\begin{align*}
\Delta
&\leq \left\| \mu-\hat\mu\right\| \big\|\Sigma^{-1}\big\|_{op} \left\|\bx \right\| \\
&\quad + \left\|\hat\mu\right\| \big\|  \hat\Sigma^{+} \big\|_{op} \big\|\hat\Sigma-\Sigma\big\|_{op}  \big\|\Sigma^{-1}\big\|_{op} \left\|\bx\right\|\\
&\leq \left\| \mu-\hat\mu\right\| \lambda \capB ~+~ \capB \Lambda \big\|\hat\Sigma-\Sigma\big\|_{op} \lambda \capB.
\end{align*}
It follows that
\begin{equation*}
\begin{split}
P&\Big (\big |\mu^\top\Sigma^{-1}\bx - \hat\mu^\top \hat\Sigma^{+}\bx\big | > \delta\Big ) \\
&\leq P(\left\| \mu-\hat\mu\right\| \lambda \capB +  \big\|\hat\Sigma-\Sigma\big\|_{op} \lambda\Lambda \capB^2 > \delta) \\ 
&\leq P(\left\| \mu-\hat\mu\right\| \lambda \capB > \frac{\delta}{2}) + P( \big\|\hat\Sigma-\Sigma\big\|_{op} \lambda\Lambda \capB^2 > \frac{\delta}{2}) \\ 
&\leq 2d e^{-2 m \left (\frac{\delta}{2\lambda \capB}\right )^2 \frac{1}{4\capB^2 d}} + 2d e^{-a m \left (\frac{\delta}{2\lambda\Lambda \capB^2}\right )^2} \\
&\leq 4d e^{-k m \delta^2}.
\end{split}
\end{equation*}
The second transition follows from the union-bound inequality, and the third from~(\ref{eq:cov-bound})-(\ref{eq:mean-bound}) (where constant $\gamma$ is defined). For the last transition we define
\begin{equation*}
k=\min\left\{\frac{2}{4\capB^2 d}\left(\frac{1}{2\lambda \capB}\right )^2,\gamma \left(\frac{1}{2\lambda\Lambda \capB^2}\right)^2\right\}.
\end{equation*}
\end{proof}

\subsubsection{A Mixture Classifier}
\label{app:proof_of_thm_5}

Assume a mixture of two linear classifiers, and let $E(m)=p\cdot E_{\Dd_\pR}(m_\pR) + (1-p)\cdot E_{\Dd_\rR}(m_\rR)$ denote its error score. 

\noindent
\textbf{Theorem~\ref{thm:undulating_pw_linear}.} \textit{
Keep the assumptions stated in Thm.~\ref{thm:error_bound_pw_linear}, and assume in addition that $\forall\ahard\!\in\! (0,1)$, $\exists\lim\limits_{m\to\infty}\frac{E'(m)}{E(m)}, \lim\limits_{m\rightarrow\infty}\frac{E(m)}{E(\ahard m)} \lim\limits_{m\rightarrow\infty}\frac{E'(m)}{E'(\ahard m)}$. 
Then the error score of a mixture of two linear classifiers is undulating. 
}
\begin{proof}
In each region $j$ of the mixture, Thm.~\ref{thm:error_bound_pw_linear} implies that its corresponding error score as defined in Def.~\ref{def:expected-error},
\begin{eqnarray*}
E_{\Dd_j}(m_j) = \E_{\iX^{m_j}\sim\Dd_j^{m_j}}\left[F(m_j)\right]
\end{eqnarray*}
is bounded by an exponentially decreasing function of $m_j$. Since $E_{\Dd_j}(m_j)$ measures the expected error over all samples of size $m_j$, it can also be shown that $E_{\Dd_j}(m_j)$ is monotonically decreasing with $m_j$. From the separability assumption, $\lim_{m_j\rightarrow\infty}E_{\Dd_j}(m_j)=0$. Finally, since $E(m)$ is a linear combination of two such functions, it also has these properties. We conclude from Cor.~\ref{cor:exp_undulating} that the error score of a mixture of two linear classifiers is undulating.
\end{proof}

\subsection{1-NN Classifier}
\label{app:sec:1_nn_classifier_full_sec}
If the training sample size $m$ is small, our analysis shows that under certain circumstances, it is beneficial to prefer sampling from a region $R$ where $E_{\Dd_{R}}(m)<E_{\Dd_{\Omega \setminus R}}(m)$. We now show that the set of densest points  has this property.

To this end, we adopt the one-Nearest-Neighbor (1-NN) classification framework. This is a natural framework to address the aforementioned question for two reasons: \begin{inparaenum}[(i)] \item  It involves a general classifier with desirable asymptotic properties.
\item The computation of both class density and 1-NN is governed by local distances in the input feature space.\end{inparaenum} 

To begin with, assume a classification problem with $k$ classes that are separated by at least $\rho$. More specifically, assume that $\forall \bx,\bx'\in\mathbb{R}^{d}$, if $\|\bx'-\bx\|\leq\rho$ then  $y'=y$. Let $B_{\calligra v}(\bx_i,r)$ denote a ball centered at $\bx_i\in\R^d$, with radius smaller than $r$ and volume ${\calligra v}$. For $\bX=(\bx,y)$, let $f_\Dd(\bX)$ denote the density function from which data is drawn when sampling is random (and specifically at test time).

Assume a 1-NN classifier whose training sample is $T=\{\bx_i,y_i\}_{i=1}^m$, and whose prediction at $\bX=(\bx,y)$ is 
\begin{equation*}
y=\begin{cases}
y_\nu,~~ \nu=\mathrm{arg}\min\limits_{i\in [m]}\|\bx-\bx_i\| & \quad   \bx\in B_{\calligra v}(\bx_i,\rho) \\
y\sim U(1,k) & \quad\mathrm{otherwise}\\
\end{cases}
\end{equation*}

The error probability of this classifier at $\bx$ is
\begin{equation*}
P(x)=\begin{cases}
0 & \quad \exists \mathrm{~i~such~that~} \bx\in B_{\calligra v}(\bx_i,\rho)\\
\frac{k-1}{k} & \quad\mathrm{otherwise}
\end{cases}
\end{equation*}
The $0-1$ loss of this classifier is
\begin{equation}
\label{eq:union}
\E_{\bX\sim\Dd}[P(x)]=\frac{k-1}{k} Prob\left [\bx\notin \bigcup_{i=1}^{m} B_{\calligra v}(\bx_i,\rho) \right ],
\end{equation}
where $B_{\calligra v}(\bx_i,r)$ denotes a ball centered at $\bx_i\in\R^d$, with radius smaller than $r$ and volume ${\calligra v}$. 

The next theorem states properties of set $T$ which are beneficial to the minimization of this loss:

\begin{theorem}
\label{thm:expected_error_1nn}
Let $A_i$ denote the event $\left \{\bx\in B_{\calligra v}(\bx_i,r)\right \}$, and assume that these events are independent. Then we have
\begin{equation*}
L(T) = \frac{k-1}{k} \left [1- \sum_{i=1}^m  f_\Dd(\bX_i){\calligra v} + O({\calligra v}^2)\right ].
\end{equation*}
\end{theorem}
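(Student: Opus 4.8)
The plan is to start directly from the $0$--$1$ loss expression in~(\ref{eq:union}), namely $L(T) = \frac{k-1}{k}\,Prob\left[\bx\notin\bigcup_{i=1}^m B_{\calligra v}(\bx_i,\rho)\right]$, and to evaluate the ``miss'' probability asymptotically as the ball volume ${\calligra v}\to 0$. Writing $\bar A_i$ for the complement of $A_i=\{\bx\in B_{\calligra v}(\bx_i,\rho)\}$, the event that the test point escapes every ball is $\bigcap_{i=1}^m\bar A_i$. Invoking the independence assumption stated in the theorem, this factors as $Prob\left[\bigcap_i\bar A_i\right]=\prod_{i=1}^m\bigl(1-Prob(A_i)\bigr)$, which conveniently removes the need for inclusion--exclusion over overlapping balls.

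The second step is to estimate $p_i := Prob(A_i)$ to first order in the volume. Because the classes are $\rho$-separated, the label is constant on the ball $B_{\calligra v}(\bx_i,\rho)$, so the probability mass that $\Dd$ assigns to this ball (with its matching label $y_i$) is the integral $\int_{B_{\calligra v}(\bx_i,\rho)} f_\Dd(\bx,y_i)\,d\bx$. Assuming $f_\Dd$ is smooth near $\bx_i$, a first-order expansion of the integrand about the center $\bx_i$ gives $p_i = f_\Dd(\bX_i){\calligra v} + O({\calligra v}^2)$, where $\bX_i=(\bx_i,y_i)$ and the linear correction contributes only at order ${\calligra v}^2$ because the ball is symmetric about its center.

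Finally I would expand the product. Since every factor satisfies $1-p_i = 1 - f_\Dd(\bX_i){\calligra v} + O({\calligra v}^2)$ and the number of terms $m$ is fixed, all cross products $p_i p_j$ are $O({\calligra v}^2)$, so
\begin{equation*}
\prod_{i=1}^m (1-p_i) = 1 - \sum_{i=1}^m p_i + O({\calligra v}^2) = 1 - \sum_{i=1}^m f_\Dd(\bX_i){\calligra v} + O({\calligra v}^2).
\end{equation*}
Multiplying by $\frac{k-1}{k}$ then yields the claimed expression for $L(T)$.

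The main obstacle is the rigorous justification of the per-ball estimate $p_i = f_\Dd(\bX_i){\calligra v} + O({\calligra v}^2)$: this requires a smoothness (e.g.\ Lipschitz or $C^1$) hypothesis on $f_\Dd$ so that the deviation of the density from its central value over the ball is controlled, and care that the implicit constant in $O({\calligra v}^2)$ does not blow up when the $m$ contributions are aggregated. Since $m$ is held fixed and each ball is small, a uniform bound on the local variation of $f_\Dd$ suffices to keep the total error $O({\calligra v}^2)$, and the independence assumption does the rest.
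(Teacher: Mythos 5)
Your proposal is correct and takes essentially the same route as the paper's proof: starting from (\ref{eq:union}), independence is used to reduce the miss probability to the sum of the individual ball probabilities up to $O({\calligra v}^2)$, and each $P(A_i)$ is then estimated as $f_\Dd(\bX_i){\calligra v}+O({\calligra v}^2)$ by expanding the density about the ball's center. The only difference is cosmetic: you factor $P\bigl(\bigcap_i \bar A_i\bigr)=\prod_i\bigl(1-P(A_i)\bigr)$ and expand the product, whereas the paper writes $P\bigl(\bigcup_i A_i\bigr)=\sum_i P(A_i)$ directly; your version makes the role of independence slightly more explicit, since the discarded cross terms are visibly $O({\calligra v}^2)$.
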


\begin{proof}
Using the independence assumption and (\ref{eq:union}), and assuming that  ${\calligra v}$ is sufficiently small
\begin{equation*}
\begin{split}
L(T) &= \frac{k-1}{k} \left [ 1- P\left (\bigcup_{i=1}^{m} A_i\right ) \right ] \\
&=\frac{k-1}{k} \left [ 1- \sum_{i=1}^m P\left (A_i\right ) \right ]\\
&=\frac{k-1}{k} \left [1- \sum_{i=1}^m  \int_\bX \mathbbm{1}_{{\textstyle\mathstrut} {\bx\in B_{\calligra v}(\bx_i,r)}} f_\Dd(\bX)d\bX\right ]\\
&= \frac{k-1}{k} \left [1- \sum_{i=1}^m  f_\Dd(\bX_i){\calligra v} + O({\calligra v}^2)\right ].
\end{split}
\end{equation*}
\end{proof}

In Thm.~\ref{thm:expected_error_1nn}, we show that if ${\calligra v}$ is sufficiently small, the 0-1 loss is minimized when choosing a set of independent points $\{\bX_i\}_{i=1}^m$ that maximizes $\sum_{i=1}^m  f_\Dd(\bX_i)$. This suggests that the selection of an initial pool of size $m$ will benefit from the following heuristic: 
\setlist{nolistsep}
\begin{itemize}[noitemsep]
    \item \textbf{Max density:} when selecting a point $\bX_i$, maximize its density $f_\Dd(\bX_i)$.
    \item \textbf{Diversity:} select varied points, for which the events $\left \{\bx\in B_{\calligra v}(\bx_i,\rho)\right \}$ are approximately independent.
\end{itemize}

\section{Theoretical Analysis: Visualization}

\subsection{Mixture of Two Linear Models}

We now empirically analyze the error score function of the mixture of two linear classifiers, defined in Section~\ref{sec:piecewise_linear_separator}. Each linear classifier is trained on a different area of the support. The data is $100$ dimensional, linearly separable in each region. The margin is used to determine the $\qhard$ of the data. The data is chosen such that $p=0.9,\qhard=0.2$. The results are shown in Fig.~\ref{fig:error_linear_classifier_mixture}.

\begin{figure}[thb!]
    \begin{subfigure}{.23\textwidth}
      \centering
        \includegraphics[width=1\linewidth]{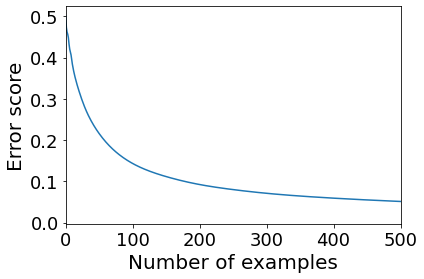}
        \caption{Error score}
    \end{subfigure}
    \begin{subfigure}{.23\textwidth}
      \centering
        \includegraphics[width=1\linewidth]{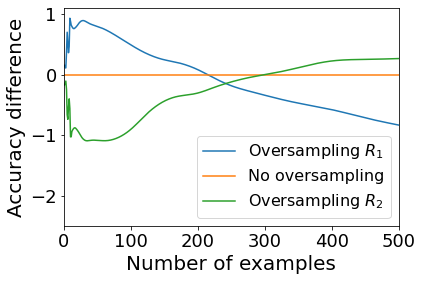}
        \caption{Phase transition}
      \label{subfig:error_linear_classifier_mixture_b}
    \end{subfigure}
\caption{(a) The error score $E(m)$ as a function of the number of examples, averaged over $10k$ repetitions. While the error score is not exponential, it could be upper bounded by an exponential function, as analytically shown in Section~\ref{sec:piecewise_linear_separator}. (b) The differences in accuracy when over-sampling from either $R_\pR$ and $R_\rR$ over a random sampling from the data distribution. Although the error score is only proven to be undulating, we can see that in practice it is also SP-undulating.}
\label{fig:error_linear_classifier_mixture}
\end{figure}

\begin{figure}[thb!]
\includegraphics[width=4cm,height=3.5cm]{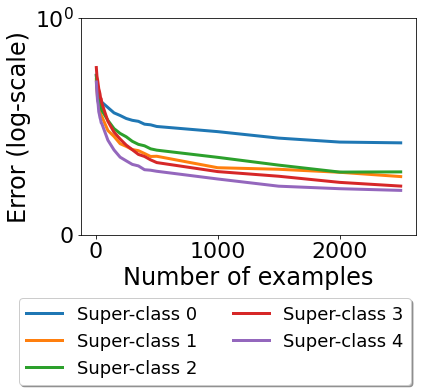}
\includegraphics[width=4cm,height=3.5cm]{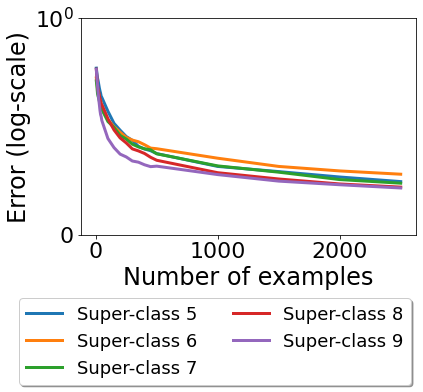}

\includegraphics[width=4cm,height=3.5cm]{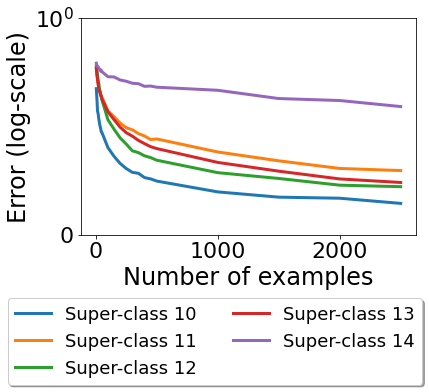}
\includegraphics[width=4cm,height=3.5cm]{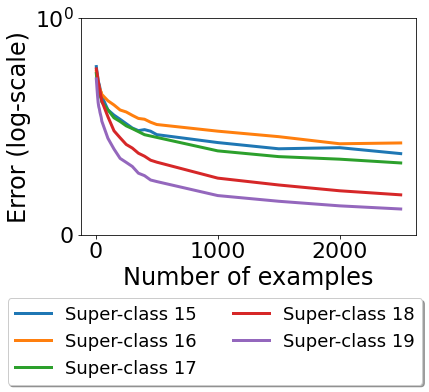}
\caption{Log-error scores of image classification datasets as a function of the number of training examples. Each score is calculated as $1-accuracy$, averaged on $100$ VGG-16 networks trained on super-classes of CIFAR-100. Each line represents the error of a different super-class. As the log of the error is plotted, it can be seen that in all cases the error scores are monotonic decreasing and can be bounded above by some exponential function, suggesting that often the assumptions in Thm.~\ref{thm:suffiecnt_coniditions_sp_undulating}  hold.}
\label{fig:error_scores_neural_networks}
\end{figure}

\begin{figure*}[thb!]
\begin{center}
    \begin{subfigure}{.27\textwidth}
      \centering
      \includegraphics[width=1\linewidth]{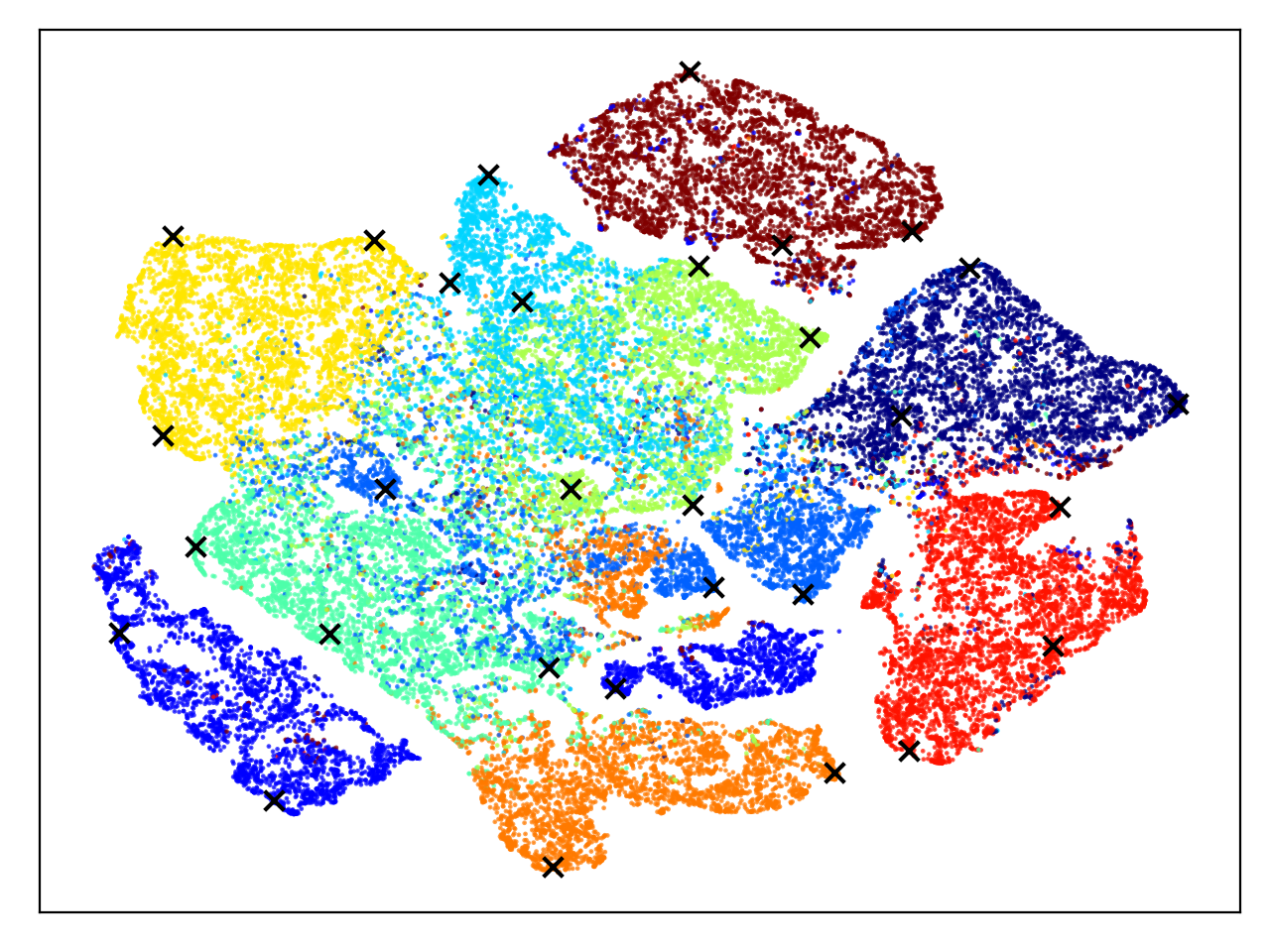}
      \caption{Color by labels}
      \label{subfig:tsne_labels}
    \end{subfigure}
    \begin{subfigure}{.27\textwidth}
      \centering
      \includegraphics[width=1\linewidth]{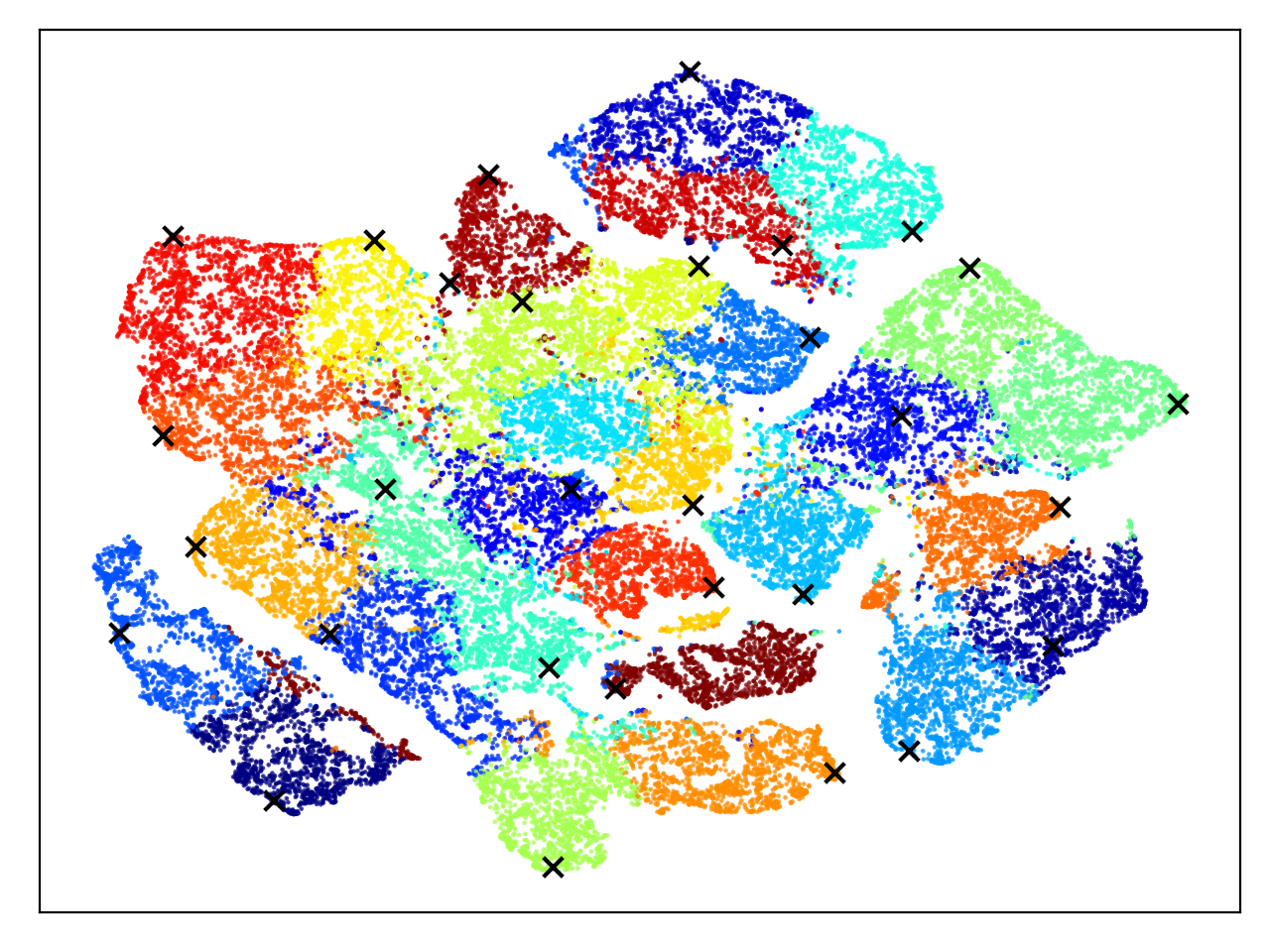}
      \caption{Colored by cluster assignment}
      \label{subfig:tsne_clusters}
    \end{subfigure}
    \begin{subfigure}{.27\textwidth}
      \centering
      \includegraphics[width=1\linewidth]{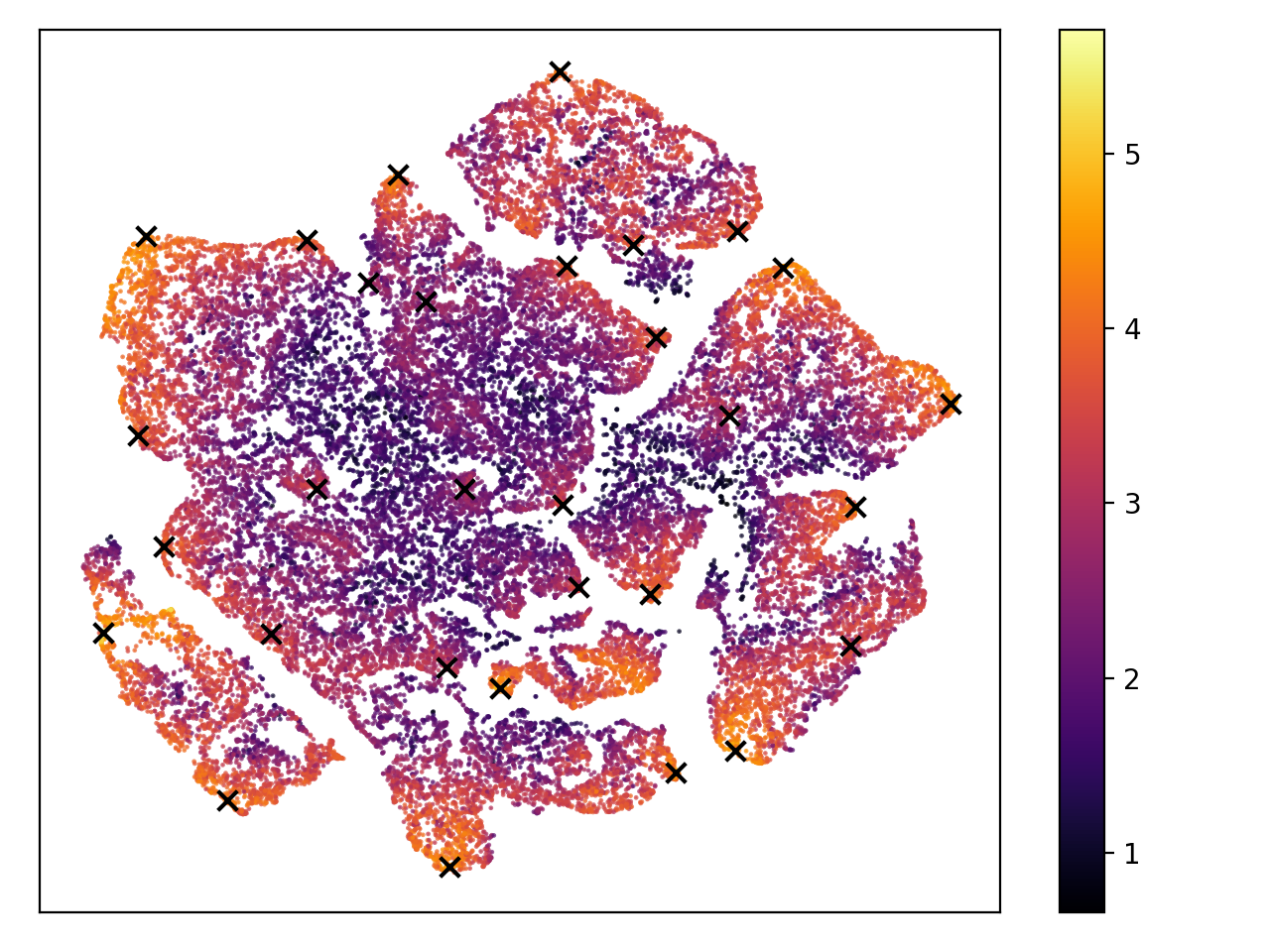}
      \caption{Color by log-density}
      \label{subfig:tsne_density}
    \end{subfigure}
    \begin{center}
    \begin{subfigure}{.55\textwidth}
      \centering
      \includegraphics[width=1\linewidth]{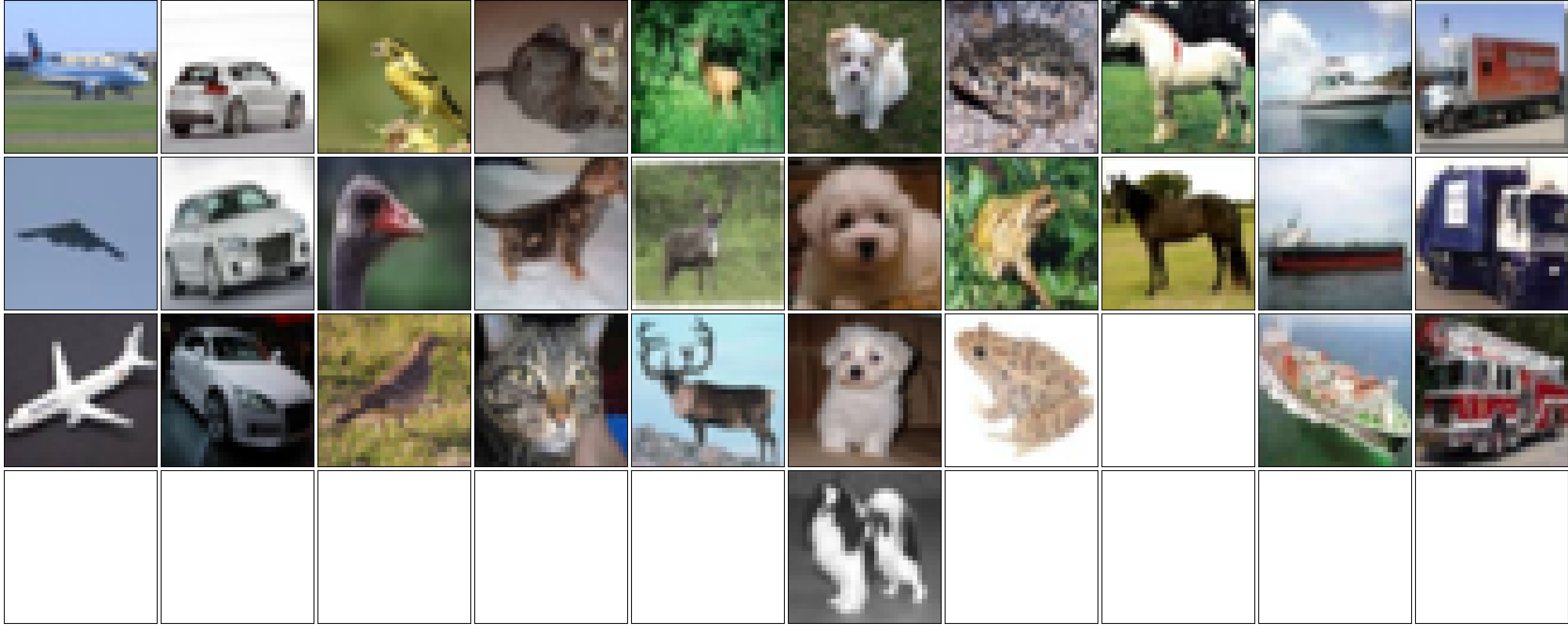}
      \caption{The $30$ samples are selected after clustering by SCAN to $30$ clusters.}
      \label{subfig:30_images_scan}
    \end{subfigure}
    \end{center}
    \caption{(a)-(c) Visualizing the selection of $30$ examples using the SCAN clustering algorithm -- examples marked with $\boldsymbol{\times}$ are selected for labeling. (d) The selected images, each column represents a different label.}
    \label{fig:tsne}
\end{center}
\vspace{-0.6cm}
\end{figure*}

\begin{figure}[thb!]
\begin{center}
      \includegraphics[width=.48\textwidth]{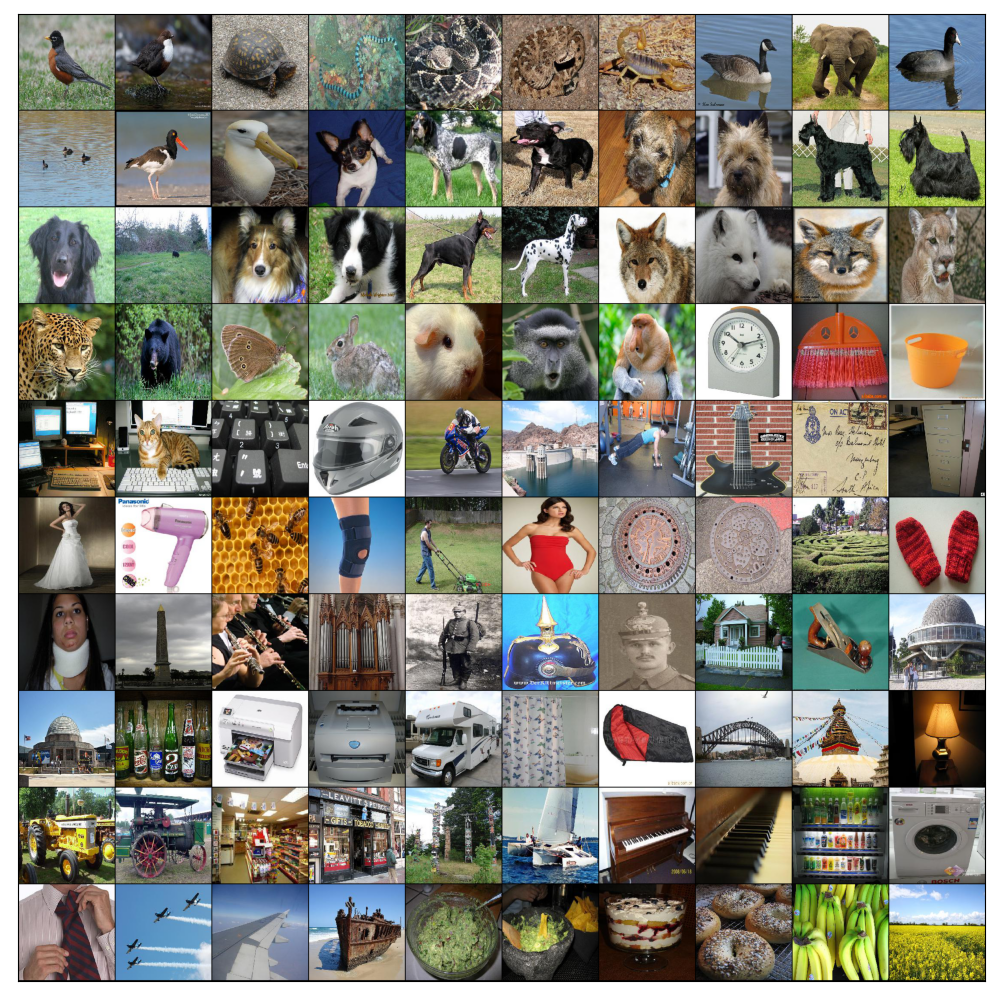}
    \caption{100 ImageNet-100 examples selected by \emph{TypiClust}. }
    \label{fig:imagenet_images}
\end{center}
\end{figure}

\subsection{Error Scores of Deep Neural Networks}
\label{sec:error-dnn}

Next, we plot the error scores of deep neural networks on image classification tasks. In all the datasets we evaluated, the error of deep networks as a function of the number of examples drops much faster than an exponential function and therefore can be shown to be undulating. In practice, such error functions are bounded from above by an exponent, and hence are also SP-undulating. To see some examples of error functions of neural networks trained on super-classes of CIFAR-100, refer to Fig.~\ref{fig:error_scores_neural_networks}.

\section{Visualization of Query Selection}


    


Fig.~\ref{fig:tsne} demonstrates the selection of $30$ examples from CIFAR-10 using \emph{TypiClust} in greater detail. Recall that TypiClust first clusters the dataset to $30$ clusters -- using SCAN clustering algorithm. We plot the tSNE dimensionality reduction of the model's feature space, colored in various ways: Fig.~\ref{subfig:tsne_labels} shows the tSNE embedding colored by the GT labels.  Fig.~\ref{subfig:tsne_clusters} shows the tSNE embedding colored by the cluster assignment.  Fig.~\ref{subfig:tsne_density} shows the tSNE embedding colored by the log density (for better visualization).  Examples marked with $\boldsymbol{\times}$ are selected for labeling. Fig.~\ref{subfig:30_images_scan} shows the selected images. Fig.~\ref{fig:imagenet_images} shows 100 examples selected by \emph{TypiClust} from ImageNet-100.

\section{Implementation Details}
\subsection{Method Implementation Details}
\label{app:method_implementation_details}
\myparagraph{Step 1: Representation learning -- CIFAR and TinyImageNet.}
We trained SimCLR using the code provided by \citet{van2020scan} for CIFAR-10, CIFAR-100 and TinyImageNet. Specifically, we used ResNet18 with an MLP projection layer to a $128$ vector, trained for 500 epochs. All the training hyper-parameters were identical to those used by SCAN.
After training, we used the $512$ dimensional penultimate layer as the representation space.
As in SCAN, we used an SGD optimizer with 0.9 momentum, and an initial learning rate of 0.4 with a cosine scheduler. The batch size was 512 and weight decay of 0.0001.
The augmentations were random resized crops, random horizontal flips, color jittering, and random grayscaling. We refer to \citet{van2020scan} for additional details. We used the L2 normalized penultimate layer as embedding.

\myparagraph{Step 1: Representation learning -- ImageNet.}
We extracted embedding from the official (ViT-S/16) DINO weights pre-trained on ImageNet. We used the L2 normalized penultimate layer as embedding.

\myparagraph{Step 2: Clustering for diversity.}
We limited the number of clusters when partitioning the data to  $max\_clusters$ (a hyperparameter). This parameter was arbitrarily picked as $500$ for CIFAR-10 and CIFAR-100 and $1000$ for TinyImageNet and ImageNet subsets (other values resulted in similar behavior).
This was done for two reasons: \begin{inparaenum}[(a)]
    \item prevent over clustering (ending up with clusters that are too small);
    \item stabilize the clustering algorithms.
\end{inparaenum}
The number of clusters chosen is $K=\min(|L_{i-1}|+B, max\_clusters)$.

\myparagraph{K-Means.} 
We used scikit-learn KMeans when $K\le 50$ and MiniBatchKMeans otherwise. This was done to reduce runtime when the number of clusters is large.

\myparagraph{SCAN.} 
We used the code provided by SCAN and modified the number of clusters to $K$. We only trained the first step of SCAN (we did not perform the pseudo labeling step, since it degraded clustering performance).

\myparagraph{Step 3: Clustering for diversity.}
Since we introduced $max\_cluster$, we are no longer guaranteed to have $B$ clusters that don't intersect the labeled set.
Moreover, to estimate typicality, we require $>20$ samples in every cluster. To solve this, we used $\min\{20, cluster\_size\}$ nearest neighbors. To avoid inaccurate estimation of the typicality, we dropped clusters with less than $5$ samples\footnote{This limiting case was rarely encountered, as clusters are usually balanced.}.

After all is done, the method adds points iteratively until the budget is exhausted, in the following way: 
\begin{inparaenum}[(1)]
    \item Out of the clusters with the fewest labeled points and of size larger than 5, select the largest cluster.
    \item Compute the Typicality of every point in the selected cluster, using $\min\{20, cluster\_size\}$ neighbors.
    \item Add to the query the point with the highest typicality.
\end{inparaenum}

\subsection{Evaluation and Implementation Details}
\label{app:eval_impl_details}

\subsubsection{Fully Supervised Evaluation}
We used the active learning comparison framework by \citet{Munjal2020TowardsRA}. 
Specifically, we trained a ResNet18 on the labeled set, optimizing using SGD with $0.9$ momentum and Nesterov momentum. The initial learning rate is $0.025$ and was modified using a cosine scheduler. The augmentations used are random crops and horizontal flips. Our changes to this framework are listed below.
\paragraph{Re-Initialize weights between iterations}
When training with extremely low budgets, networks tend to produce over-confident predictions. As a result, when querying samples and fine-tuning from the existing model, the loss tends to ``spike'', which leads to optimization issues. Therefore, we re-initialized the weights between iterations.

\paragraph{TinyImageNet modifications}
As training did not converge in the original implementation over Tiny-ImageNet, we increased the number of epochs from $100$ to $200$ and changed the minimal crop side from $0.08$ to $0.5$. This ensured stable  convergence and a more reliable evaluation in AL experiments.

\paragraph{ImageNet modifications}
ImageNet hyper-parameters were identical to TinyImageNet except for the number of epochs, which was set to $100$ due to high computational cost, and the batch size, which was set to $50$ to fit into a standard GPU virtual memory.

\subsubsection{Linear Evaluation on Self-Supervised Embedding}
\label{app:linear_eval_implementation}
In these experiments, we also used the framework by \citet{Munjal2020TowardsRA}.
We extracted an embedding as described in Appendix ~\ref{app:method_implementation_details}, and trained a single linear layer of size $d \times C$ where $d$ is the feature dimensions, and $C$ is the number of classes. To optimize this single layer, we increased the initial learning rate by a factor of 100 to $2.5$, and as the training time is much shorter, we multiplied the number of epochs by $2$.

\subsubsection{Semi-Supervised Evaluation}
\label{app:semi_implementation}
When training FlexMatch, we used the semi-supervised framework by \citet{DBLP:journals/corr/abs-2110-08263}. All experiments were repeated 3 times.
We used the following hyper-parameters when training each experiment:

\myparagraph{CIFAR-10.} We trained WideResNet-28, for 400k iterations. We used SGD optimizer, with $0.03$ learning rate, $64$ batch size, $0.9$ momentum, $0.0005$ weight decay, $2$ widen factor, $0.1$ leaky slope and without dropout. 
The augmentations are similar to those used in FlexMatch. The weak augmentations include random crops and horizontal flips, while the strong augmentations are according to RandAugment \citep{cubuk2020randaugment}.

\myparagraph{CIFAR-100.} We trained WideResNet-28, for 400k iterations. We used an SGD optimizer, with $0.03$ learning rate, $64$ batch size, $0.9$ momentum, $0.0001$ weight decay, $8$ widen factor, $0.1$ leaky slope, and without dropout. The augmentations are similar to those used in CIFAR-10.

\myparagraph{TinyImageNet.} We trained ResNet-50, for 1.1m iterations. We used an SGD optimizer, with a $0.03$ learning rate, $32$ batch size, $0.9$ momentum, $0.0003$ weight decay, $0.1$ leaky slope, and without dropout. The augmentations are similar to those used in FlexMatch.

\subsection{Ablation studies}
\label{app:sec:ablation_details}
\myparagraph{Margin by an ``oracle'' network.} In Section~\ref{sec:ablation_oracle_uncertainty}, we compute an ``oracle'' uncertainty measure. When training an oracle, we use a VGG-19 \citep{simonyan2014very} trained on CIFAR-10, using the hyper-parameters of the original paper. We calculate the margin of each example according to this network. We note that an AL strategy based on this margin works well in the high-budget regime for both the oracle and the ``student'' network.

\section{Additional Empirical Results}
\label{app:more_empirical}

\subsection{Supervised Framework}
\label{app:more_sup_empirical}
In the main paper, we presented results on 1 and 5 samples per class on average. Fig.~\ref{fig:app_cifar_more_budgets} shows similar results using additional budget sizes.

\begin{figure}[thb!]
\begin{center}
\begin{subfigure}{.45\textwidth}
  \centering
 \includegraphics[width=\linewidth]{avihu_graphs/maybe_camera_ready/badge_legend.png}
\end{subfigure}
\\
    \begin{subfigure}{.157\textwidth}
      \centering
      \includegraphics[width=\linewidth]{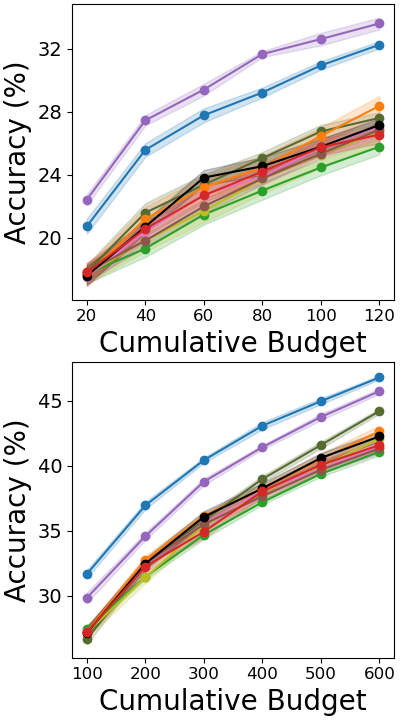}
    \caption{CIFAR-10}
    \label{fig:app_al_graph_cifar10}
    \end{subfigure}
    \begin{subfigure}{.157\textwidth}
      \centering
      \includegraphics[width=\linewidth]{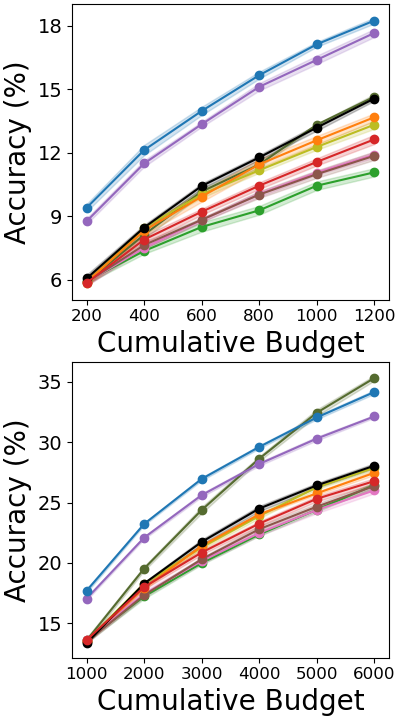}
    \caption{CIFAR-100}
    \label{fig:app_al_graph_cifar100}
    \end{subfigure}
    
\caption{Additional results in the supervised framework, including an average of 2 and 10 examples per class on CIFAR10 and CIFAR100, similarly to Fig.~\ref{fig:main_al_graph}.}
\label{fig:app_cifar_more_budgets}
\end{center}
\end{figure}

In Fig.~\ref{fig:app_more_imagenet} we present results on additional datasets, which include ImageNet-50, ImageNet-100 and TinyImageNet.
\emph{TypiClust} outperforms all competing methods on these datasets as well.

\begin{figure}[thb!]
\begin{center}
\begin{subfigure}{.45\textwidth}
  \centering
 \includegraphics[width=\linewidth]{avihu_graphs/maybe_camera_ready/badge_legend.png}
\end{subfigure}
\\
    \begin{subfigure}{.157\textwidth}
      \centering
      \includegraphics[width=\linewidth]{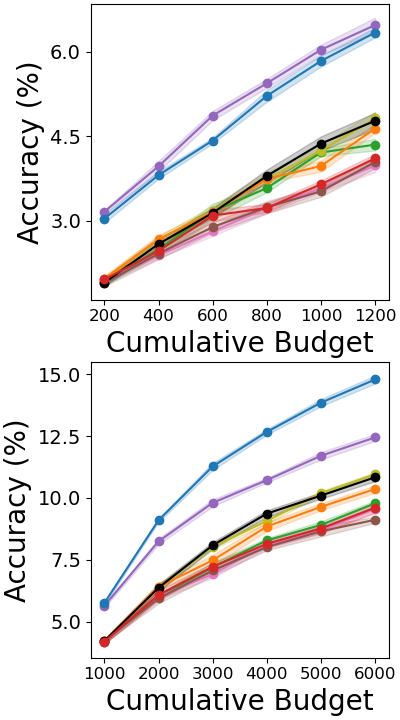}
    \caption{TinyImageNet}
    \label{fig:app_more_tinyimagenet}
    \end{subfigure}
    \begin{subfigure}{.157\textwidth}
      \centering
      \includegraphics[width=\linewidth]{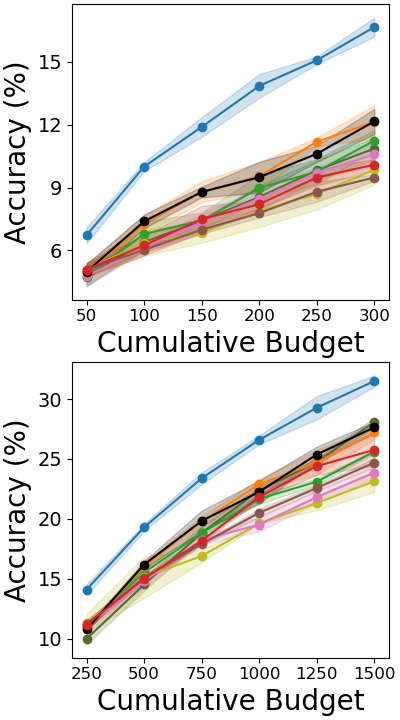}
    \caption{ImageNet-50}
    \label{fig:app_more_imagenet_50}
    \end{subfigure}
    \begin{subfigure}{.157\textwidth}
      \centering
      \includegraphics[width=\linewidth]{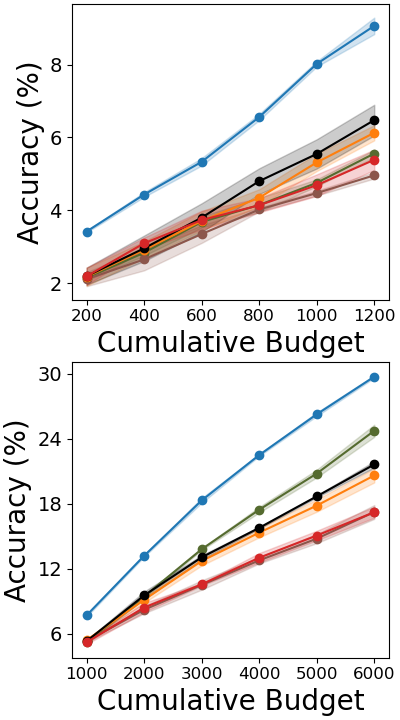}
    \caption{ImageNet-200}
    \label{fig:app_more_imagenet_100}
    \end{subfigure}
\caption{Similar to Fig.~\ref{fig:main_al_graph}, we report the results on TinyImageNet, ImageNet-50 and ImageNet-200.
}
\label{fig:app_more_imagenet}
\end{center}
\vspace{-.6cm}
\end{figure}

\subsubsection{Random Initial pool}
In Section~\ref{sec:starting_from_random}, we show that even when the initial pool is sampled randomly, \emph{TypiClust} still improves over random selection. Fig.~\ref{fig:app:random_init_pool} provides additional evidence for this phenomenon, on several datasets and budget sizes.

\begin{figure}[htb!]
\begin{subfigure}{.45\textwidth}
  \centering
 \includegraphics[width=\linewidth]{avihu_graphs/maybe_camera_ready/badge_legend.png}
\end{subfigure}
\\
\begin{center}
    \begin{subfigure}{.157\textwidth}
      \centering
      \includegraphics[width=\linewidth]{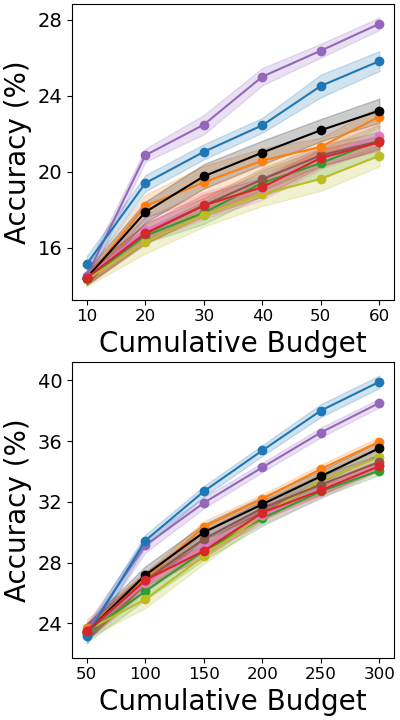}
    \caption{CIFAR-10}
    \end{subfigure}
    \begin{subfigure}{.157\textwidth}
      \centering
      \includegraphics[width=\linewidth]{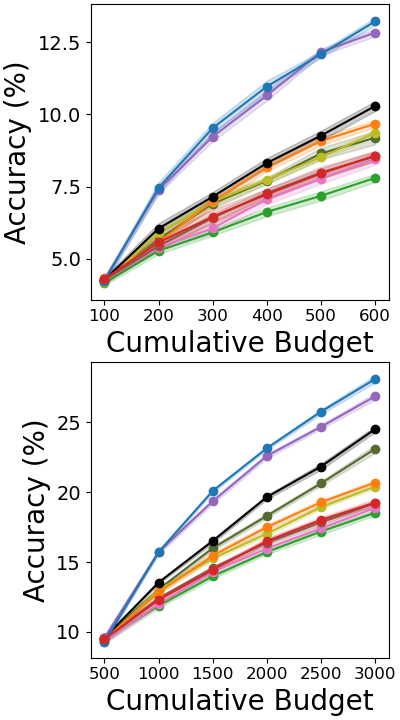}
    \caption{CIFAR-100}
    \end{subfigure}
    \begin{subfigure}{.157\textwidth}
      \centering
      \includegraphics[width=\linewidth]{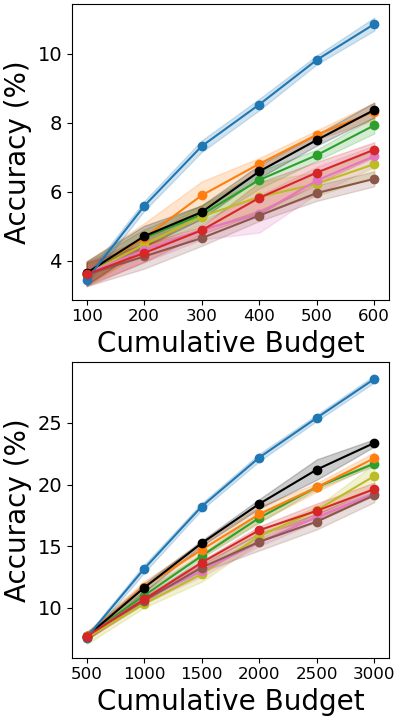}
    \caption{ImageNet-100}
    \end{subfigure}
\caption{Similar to Fig. ~\ref{fig:cifar10_random_init}, we also report results on CIFAR-100 and ImageNet-100 with an additional budget.}
\vspace{-0.3cm}
\label{fig:app:random_init_pool}
\end{center}
\end{figure}

\subsubsection{Imbalanced CIFAR-10}
Fig.~\ref{fig:app:imbalanced_data} presents results on an imbalanced subset of CIFAR-10, where the number of samples per class decreases exponentially.

\begin{figure}[htb!]
\begin{center}
\begin{subfigure}{.45\textwidth}
  \centering
 \includegraphics[width=\linewidth]{avihu_graphs/maybe_camera_ready/badge_legend.png}
\end{subfigure}
\\
    \begin{subfigure}{.45\textwidth}
      \centering
      \includegraphics[width=\linewidth]{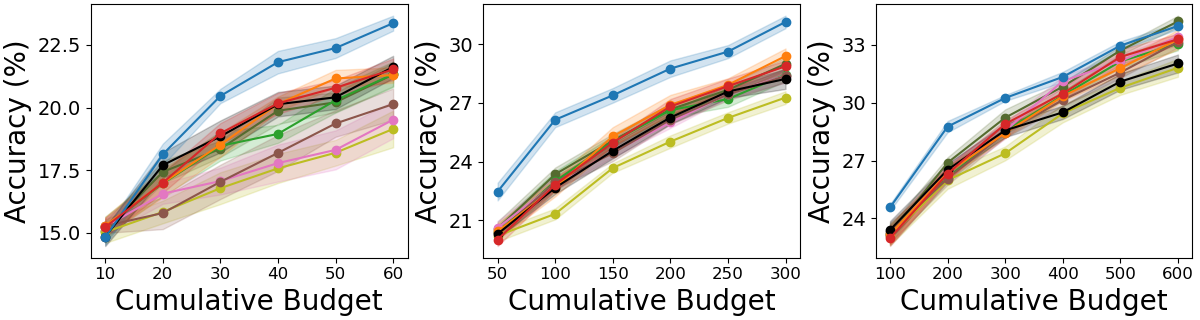}
    \end{subfigure}
\caption{Imbalanced CIFAR-10 results on $3$ different budgets.}
\vspace{-0.3cm}
\label{fig:app:imbalanced_data}
\end{center}
\end{figure}

\subsection{Supervised Using Self-Supervised Embeddings}

In Fig.~\ref{fig:app_linear_more}, we show results using additional datasets of a linear classifier trained over a self-supervised pre-trained embedding. We see that the initial pool selection provides a very large boost in performance -- especially with ImageNet-50 and ImageNet-200.  

\label{app:more_lin_empirical}
\begin{figure}[thb!]
\begin{center}
\begin{subfigure}{.45\textwidth}
  \centering
 \includegraphics[width=\linewidth]{avihu_graphs/maybe_camera_ready/badge_legend.png}
\end{subfigure}
\\
    \begin{subfigure}{.157\textwidth}
      \centering
      \includegraphics[width=\linewidth]{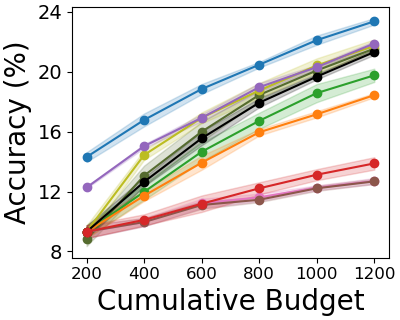}
    \caption{TinyImageNet}
    \label{fig:app_linear_more_tinyimagenet}
    \end{subfigure}
    \begin{subfigure}{.157\textwidth}
      \centering
      \includegraphics[width=\linewidth]{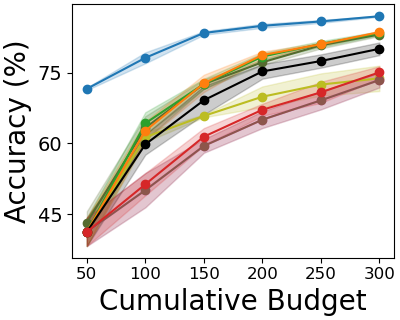}
    \caption{ImageNet-50}
    \label{fig:app_linear_more_imagenet50}
    \end{subfigure}
    \begin{subfigure}{.157\textwidth}
      \centering
      \includegraphics[width=\linewidth]{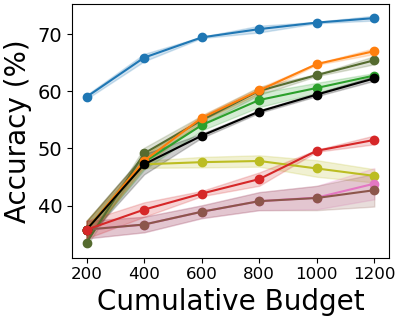}
    \caption{ImageNet-200}
    \label{fig:app_linear_more_imagenet100}
    \end{subfigure}
\caption{Similar to Fig.~\ref{fig:weak_semi}, we report the linear evaluation results on TinyImageNet, ImageNet-50 and ImageNet-200.}
\label{fig:app_linear_more}
\end{center}
\vspace{-.6cm}
\end{figure}

\subsection{Semi-Supervised Framework}
\label{app:more_ssl_empirical}

Finally, below we describe additional experiments to those plotted in Fig.~\ref{fig:semi_supervised}, within the semi-supervised framework.

To test the dependency of our deep clustering variant of \emph{TypiClust} on SCAN, we evaluated another variant based on RUC \citep{park2021improving}, which is henceforth denoted $TPC_{RUC}$. We plot its performance on CIFAR-10 and CIFAR-100 in Fig.~\ref{fig:app:semi_supervised}. As RUC is computationally demanding, we fix the number of clusters to the number of classes in the corresponding dataset, and then (when needed) further sub-cluster the data using K-means to the desired number of clusters. In all the tested settings, $TPC_{RUC}$ surpassed the performance of the random baseline by a large margin, suggesting that using SCAN is not crucial for \emph{TypiClust}, and may be replaced by any competitive clustering algorithm.

\begin{figure}[htb!]
\begin{center}
    \begin{subfigure}{.2\textwidth}
      \centering
      \includegraphics[width=\linewidth]{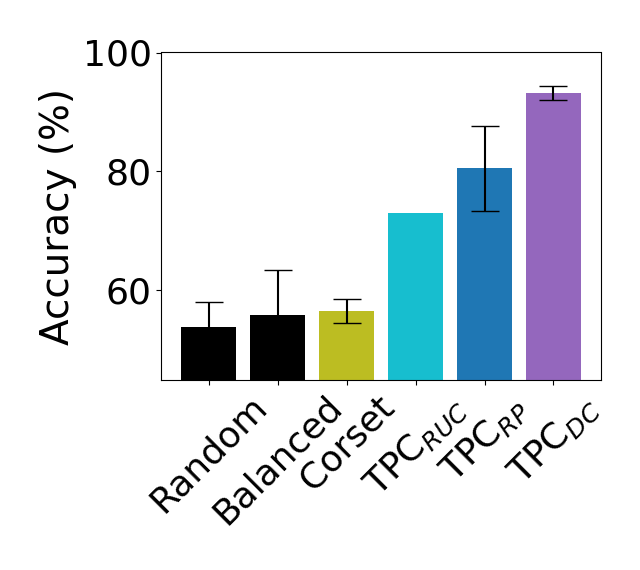}
    \vspace{-0.75cm}
    \caption{CIFAR-10 40 labels}
    \label{fig:app:ssl_cifar_10_with_40_examples}
    \end{subfigure}
    \begin{subfigure}{.2\textwidth}
      \centering
      \includegraphics[width=\linewidth]{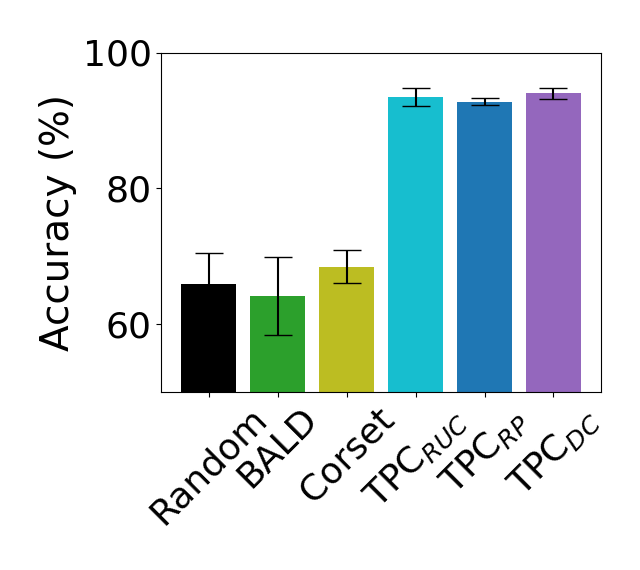}
    \vspace{-0.75cm}
    \caption{CIFAR-10 10 labels}
    \end{subfigure}
    \begin{subfigure}{.2\textwidth}
      \centering
      \includegraphics[width=\linewidth]{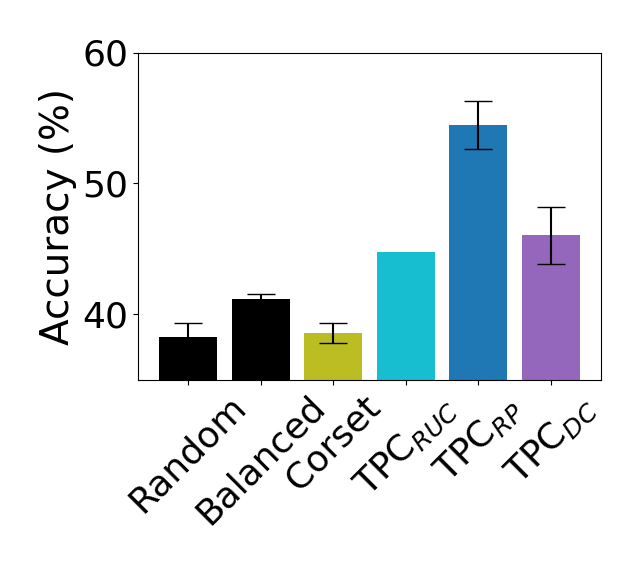}
    \vspace{-0.75cm}
    \caption{CIFAR 100}
    \end{subfigure}
\caption{Similar to Fig.~\ref{fig:semi_supervised}, using the $TPC_{RUC}$ variant of \emph{TypiClust}. (a) $40$ labels on CIFAR-10, (b) $10$ labels on CIFAR-10, (c) $300$ labels on CIFAR-100}
\label{fig:app:semi_supervised}
\vspace{-0.3cm}
\end{center}
\end{figure}

\begin{figure}[htb!]
\begin{center}
    \begin{subfigure}{.25\textwidth}
      \centering
      \includegraphics[width=\linewidth]{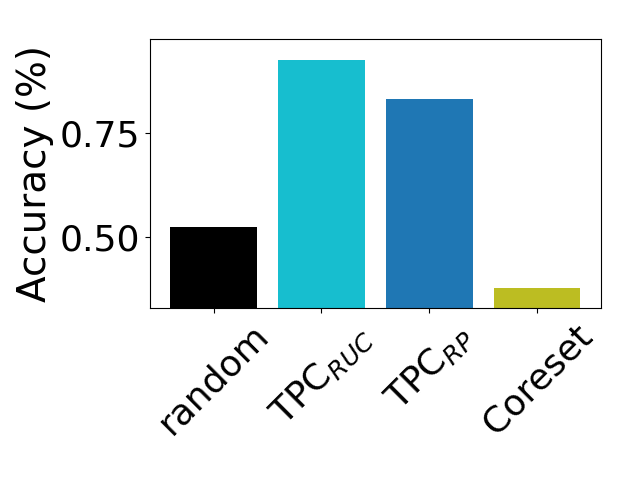}
    \vspace{-0.75cm}
    \caption{CIFAR-10 20 labels}
    \end{subfigure}
\caption{Similar to Fig.~\ref{fig:semi_supervised}, using Semi-MMDC instead of FlexMatch. We train $20$ labels on CIFAR-10, observing a large performance gain when using \emph{TypiClust} to perform the initial selection of the labeled data.}
\vspace{-0.3cm}
\label{fig:app:boaz_cifar10_20_labels}
\end{center}
\end{figure}

Additionally, we performed experiments with other budgets. In Fig.~\ref{fig:app:ssl_cifar_10_with_40_examples}, we plot the same experiment as Fig.~\ref{fig:ssl_cifar_10_with_10_examples}, with a budget of $40$ examples on CIFAR-10, seeing similar results.

To verify that the observed performance boost is not unique to FlexMatch, we repeat the same experiments with another competitive semi-supervised learning method, 	
Semi-MMDC \citep{lerner2020boosting}. Using the code provided by \citet{lerner2020boosting}, and following the exact training protocol, we train Semi-MMDC using $20$ labels on CIFAR-10. Similarly to the results on FlexMatch, we report a significant increase in performance when training on examples chosen by \emph{TypiClust}, see Fig.~\ref{fig:app:boaz_cifar10_20_labels}.

We note that in all the experiments we performed within the low budget regime, \emph{TypiClust} always surpassed the random baseline by a large margin.

\end{document}